\documentclass{article}

\PassOptionsToPackage{table}{xcolor}

\usepackage[accepted]{icml2026}

\usepackage{microtype}
\usepackage{graphicx}
\usepackage{subcaption}  
\usepackage{booktabs}
\usepackage{amsmath,amssymb,amsthm,mathtools,bm}
\usepackage{algpseudocode}
\usepackage{enumitem}
\usepackage{longtable}
\usepackage{booktabs}
\usepackage{etoc}

\usepackage{hyperref}
\usepackage[capitalize,nameinlink]{cleveref}

\providecommand{\diag}{\operatorname{diag}}

\providecommand{\op}{\mathrm{op}}

\theoremstyle{plain}
\newtheorem{theorem}{Theorem}[section]
\newtheorem{lemma}[theorem]{Lemma}

\newtheorem{proposition}[theorem]{Proposition}

\theoremstyle{definition}
\newtheorem{definition}[theorem]{Definition}
\newtheorem{assumption}[theorem]{Assumption}

\theoremstyle{remark}
\newtheorem{remark}[theorem]{Remark}

\crefname{assumption}{Assumption}{Assumptions}
\Crefname{assumption}{Assumption}{Assumptions}

\newcommand{\newcustomtheorem}[2]{%
  \newenvironment{#1}[1]
  {%
   \par\noindent\textbf{#2~##1.}\itshape\space
  }
  {\par}
}

\newcustomtheorem{manualdefinition}{Definition}
\newcustomtheorem{manualremark}{Remark}
\newcustomtheorem{manualassumption}{Assumption}
\newcustomtheorem{manualproposition}{Proposition}
\newcustomtheorem{manualproperty}{Property}
\newcustomtheorem{manualcorollary}{Corollary}

\numberwithin{theorem}{section}
\numberwithin{equation}{section}

\DeclareMathOperator{\E}{\mathbb{E}}
\DeclareMathOperator{\Var}{Var}

\newcommand{\R}{\mathbb{R}}
\newcommand{\calO}{\mathcal{O}}
\newcommand{\tildeO}{\tilde{\mathcal{O}}}
\newcommand{\1}{\mathbf{1}}

\newcommand{\DeltaA}{\Delta(\mathcal{A})}

\DeclarePairedDelimiter{\norm}{\lVert}{\rVert}
\DeclarePairedDelimiter{\abs}{\lvert}{\rvert}
\DeclarePairedDelimiter{\ip}{\langle}{\rangle}

\newcommand{\vpi}{\bm{\pi}}
\newcommand{\vmu}{\bm{\mu}}

\icmltitlerunning{Second-Order Smooth Planning with Optimal-Transport Bellman Smoothing}

\begin{document}
\twocolumn[
\icmltitle{Second-Order Smooth Planning with Optimal-Transport Bellman Smoothing}

\icmlsetsymbol{equal}{*}

\begin{icmlauthorlist}
\icmlauthor{Tuan Dam}{1}
\end{icmlauthorlist}

\icmlaffiliation{1}{Hanoi University of Science and Technology, Hanoi, Vietnam}

\icmlcorrespondingauthor{Tuan Dam}{tuandq@soict.hust.edu.vn}

\icmlkeywords{Machine Learning, ICML}

\vskip 0.3in
]
\printAffiliationsAndNotice{}

\begin{abstract}
Planning with a generative model aims to estimate the value of a state using as few simulator calls as possible.
SmoothCruiser achieves problem-independent complexity $\widetilde O(\varepsilon^{-4})$ by exploiting the smoothness of the entropy-regularized Bellman backup, but its estimator is only first-order.
We show that the sample-complexity exponent of SmoothCruiser-type planners is governed by the order $\beta$ of the local Taylor remainder, giving oracle complexity $\widetilde O(\varepsilon^{-(2+2/(\beta-1))})$: the first-order case $\beta=2$ recovers SmoothCruiser, while a second-order/cubic remainder $\beta=3$ yields $\widetilde O(\varepsilon^{-3})$.
We reach this regime with an optimal-transport-smoothed Bellman backup over action distributions, which has a closed form, a policy gradient, and a Lipschitz Hessian, and whose quadratic correction admits an unbiased cross-product estimator.
The resulting SecondOrderSmoothCruiser achieves $\widetilde O(\varepsilon^{-3})$ oracle complexity for fixed OT parameters, and we relate the OT, entropy-regularized, and unregularized objectives through explicit regularization-bias bounds. 
\end{abstract}


\section{Introduction}
\label{sec:intro}

Reinforcement learning and planning study how an agent should act to maximize long-term reward, and a recurring theme is that good decisions require looking ahead: reasoning about the consequences of actions before committing to one. In many settings, however, we do not need the value of every state---only the value, or the best action, at the state the agent currently occupies. This local view is natural in simulation-based control, games, and online decision making, where one has access to a simulator that, given a state and an action, returns a reward sample and a next state. The question then becomes statistical rather than one of global dynamic programming: how many simulator (oracle) calls are needed to estimate the value $V(s_0)$ of a single root state $s_0$? \emph{Planning with a generative model} formalizes this, seeking guarantees that do not scale with the number of states \citep{kearns1999sparse,grill2019planning}. Throughout, the action set $\mathcal A$ is finite with $K$ actions, $\gamma\in[0,1)$ is the discount, and $\tilde O(\cdot)$ hides polylogarithmic factors in $1/\varepsilon$.

\paragraph{Smooth Bellman backups.}
Writing $Q_s(a)$ for the action value at $s$ (the expected return from taking $a$ and then continuing optimally), an ordinary MDP uses the hard maximum $V(s)=\max_a Q_s(a)$, which is non-smooth: a small error in the $Q_s(a)$ can abruptly change the best action. Entropy regularization with regularization factor $\tau>0$ softens it into the \textsc{LogSumExp} backup $F^{\mathrm{ent}}_\tau(Q)=\tau\log\sum_a\exp(Q(a)/\tau)$, whose gradient is the Boltzmann policy---giving both a smooth backup and a distribution to sample actions from. \textsc{SmoothCruiser} \citep{grill2019planning} exploits this: it forms a rough estimate $\hat Q_s$ and uses the \emph{first-order} Taylor model of the backup around it, so the correction is estimated by sampling a single action. This yields its $\tilde O(\varepsilon^{-4})$ guarantee. The limitation is that a first-order model leaves an error quadratic in $\|Q_s-\hat Q_s\|$, and this quadratic remainder is what fixes the exponent $4$.

\paragraph{Curvature controls complexity.}
We show that the cost of planning is governed by how well the smooth backup can be \emph{locally approximated}. Concretely, suppose that around the rough estimate $\hat Q_s$ the backup admits an approximation whose error shrinks like $\|Q_s-\hat Q_s\|^{\beta}$ (a first-order/linear model gives $\beta=2$, a second-order/quadratic model gives $\beta=3$, and so on), and that this approximation can itself be estimated from samples. Then the total number of oracle calls scales as $\tilde O\!\big(\varepsilon^{-(2+2/(\beta-1))}\big)$. The familiar first-order case $\beta=2$ gives $\tilde O(\varepsilon^{-4})$, recovering \textsc{SmoothCruiser}; the first genuine improvement is $\beta=3$, which lowers the cost to $\tilde O(\varepsilon^{-3})$. The conceptual message is that regularizers in planning need not be chosen only for exploration or robustness---they can be chosen to make recursive value estimation statistically more efficient.

\paragraph{Optimal-transport smoothing.}
To reach $\beta=3$ we introduce an OT-smoothed backup over action distributions,
\[
F^{\mathrm{OT}}_s(Q)=\max_{\pi\in\Delta(\mathcal A)}\big\{\langle\pi,Q\rangle-\tau\,W_\lambda(\pi,\mu_s)\big\},
\]
where $\Delta(\mathcal A)$ is the simplex, $\tau>0$ the regularization factor, $\mu_s$ a reference distribution, and $W_\lambda$ an entropically regularized OT cost induced by an action-cost matrix $C$ (Subsection~\ref{entropic_ot}). The role of $C$ is to add geometry: entropy treats actions as unrelated labels, whereas $C$ encodes which actions are interchangeable. The construction stays close to entropy---up to a value-independent offset it is a $\mu_s$-weighted mixture of \textsc{LogSumExp} backups on cost-shifted scores $Q(i)-\tau C_{ij}$, and at $C\equiv0$ it reduces exactly to the entropy backup with temperature $\tau\lambda$. Its key property is a closed form, explicit gradient policy, and Lipschitz Hessian, hence a cubic remainder. The quadratic correction need not be formed explicitly: it is a variance under the local softmax distributions, and a cross-product identity estimates it unbiasedly with a constant number of extra sampled actions---no Hessian and no per-state OT solve. Plugging this estimator into the SmoothCruiser recursion gives our main algorithm, \textsc{SecondOrderSmoothCruiser}.
\paragraph{Result, scope, and positioning.}
The resulting \textsc{SecondOrderSmoothCruiser} estimates the root value with worst-case complexity $\tilde O(\varepsilon^{-3})$ for finite action sets and fixed $(\tau,\lambda)$, improving the first-order $\tilde O(\varepsilon^{-4})$; entropy regularization is the $C\equiv0$ case. For unregularized planning, smaller smoothing brings the OT value toward the hard-max value at the cost of constants depending on $(\tau,\lambda,\|C\|_\infty,K)$, so we state the theorem for the regularized objective and treat the unregularized link as a bias tradeoff. This is complementary to multilevel Monte Carlo methods such as \citet{meunier2025efficient}, which keep the entropy backup and debias the \emph{sampler}; our gain instead comes from changing the local Bellman \emph{model}.


\paragraph{Contributions.}
We make four contributions. First, we prove a curvature--complexity
principle for SmoothCruiser-type planners: if the local Taylor remainder
of the Bellman aggregator has order $\beta$, then the worst-case oracle
complexity scales as
$\widetilde O(\epsilon^{-(2+2/(\beta-1))})$, recovering the
SmoothCruiser exponent at $\beta=2$ and identifying the second-order
case $\beta=3$ as the first improved regime. Second, we introduce an
optimal-transport-smoothed Bellman backup over action distributions,
\[
F_s^{\rm OT}(Q)
=
\max_{\pi\in\Delta(\mathcal A)}
\{\langle \pi,Q\rangle-\tau W_\lambda(\pi,\mu_s)\},
\]
where $W_\lambda$ is the entropically regularized OT cost induced by an
action-cost matrix $C$. We derive its closed form, gradient policy, and
Lipschitz Hessian; the construction reduces to entropy regularization
when $C\equiv 0$ and incorporates action geometry when $C\neq 0$.
Third, we design \textsc{SecondOrderSmoothCruiser}, whose quadratic
Taylor correction is estimated by a variance/cross-product identity
without explicitly forming the Hessian, yielding
$\widetilde O(\epsilon^{-3})$ oracle complexity for fixed OT parameters.
Fourth, we develop gap-dependent extensions: \emph{OT-GapE} propagates
confidence bounds through a smooth Bellman aggregator to recover
$(\Delta\vee\epsilon)^{-2}$ root-gap dependence, while
\emph{OT-GapCruiser} combines \textsc{SecondOrderSmoothCruiser} with
UGapE-style root elimination to obtain curvature-controlled
instance-dependent oracle bounds.

\begin{table*}[t]
\centering
\scriptsize
\caption{Planning with a generative model: comparison of guarantees in the fixed-confidence, value-based setting.}
\label{tab:algo-comparison}
\setlength{\tabcolsep}{3pt}
\renewcommand{\arraystretch}{1.1}
\begin{tabular}{@{}p{0.16\textwidth}p{0.09\textwidth}p{0.14\textwidth}p{0.13\textwidth}p{0.12\textwidth}p{0.17\textwidth}@{}}
\toprule
\textbf{Algorithm} &
\textbf{Setting} &
\textbf{Bellman aggregator} &
\textbf{Local Taylor remainder} &
\textbf{Cascade calls} &
\textbf{Total oracle calls (worst-case)} \\
\midrule
Sparse Sampling (SSA) \citep{kearns2002sparse} &
value-based &
hard $\max$ &
(non-smooth) &
-- &
non-polynomial in $1/\varepsilon$ \\
\addlinespace
SmoothCruiser \citep{grill2019planning} &
value-based &
\textsc{LogSumExp} &
quadratic: $O(\norm{\Delta Q}_2^2)$ &
$\tildeO(\varepsilon^{-2})$ &
$\tildeO(\varepsilon^{-4})$ \\
\addlinespace
\rowcolor{red!10}
\textbf{SecondOrderSmoothCruiser (ours)} &
value-based &
OT-smoothed / \textsc{LogSumExp} &
cubic: $O(\norm{\Delta Q}_2^3)$ &
$\tildeO(\varepsilon^{-1})$ &
$\tildeO(\varepsilon^{-3})$ \\
\bottomrule
\end{tabular}
\end{table*}

\begin{table*}[t]
\centering
\scriptsize
\caption{Contributions of our framework relative to SmoothCruiser.}
\label{tab:contrib-at-a-glance}
\setlength{\tabcolsep}{3pt}
\renewcommand{\arraystretch}{1.05}
\begin{tabular}{@{}p{0.19\textwidth}p{0.34\textwidth}p{0.22\textwidth}p{0.07\textwidth}@{}}
\toprule
\textbf{Component} & \textbf{Technical ingredient} & \textbf{Planning consequence} & \textbf{Pointer} \\
\midrule
Curvature--complexity theory &
Taylor remainder of order $\beta$ for $F_s$, nonnegative gradient (policy) and Monte Carlo estimation of $Q_s$, leading to general cascade exponent $\alpha_\beta=2/(\beta-1)$ and total exponent $2+2/(\beta-1)$. &
Unifies SmoothCruiser, our OT method, and potential higher-order variants in a single analytical template. &
\Cref{sec:curvature} \\
\addlinespace
OT-smoothed Bellman operator &
Define
$F^{\mathrm{OT}}_s(Q) = \max_{\pi\in\DeltaA}\{\ip{\pi,Q}-\tau W_\lambda(\pi,\vmu_s)\}$; derive closed form as $\vmu_s$-mixture of \textsc{LogSumExp}; prove Lipschitz Hessian $M=\calO(1/(\tau^2\lambda^2))$. &
Provides a concrete $\beta=3$ aggregator with explicit derivatives and controllable curvature; gradient is a policy and couplings induce geometry. &
\Cref{sec:ot-bellman} \\
\addlinespace
SecondOrderSmoothCruiser &
Quadratic term identity and cross-product debiasing yield a one-sample estimator of the second-order term without estimating full $\delta Q_s$. &
Improves worst-case exponent from $4$ to $3$ while keeping problem-independence and state-space agnosticism. &
\Cref{sec:second-order,sec:complexity} \\
\addlinespace
OT-GapCruiser (gap-dependent) &
Combines SecondOrderSmoothCruiser at the root with UGapE-style confidence intervals and elimination across actions. &
Instance-dependent, fixed-confidence guarantees depending on root gaps $\Delta_1(s_0,a)$, extending gap-dependent planning (e.g.\ MDP-GapE) to regularized, higher-order planning. &
\Cref{sec:gap-dependent} \\
\bottomrule
\end{tabular}
\end{table*}

\section{Related Work}
\label{sec:related}

\paragraph{Planning with a generative model.}
Classical work on planning with a generative model includes Sparse Sampling \citep{kearns1999sparse}, which estimates the value of a single state by building a lookahead tree but has non-polynomial worst-case complexity in $1/\varepsilon$, and adaptive variants with improved constants but still non-polynomial dependence \citep{walsh2010integrating}.
Monte-Carlo Tree Search (MCTS) methods such as UCT \citep{kocsis2006bandit} are widely used in practice but can have exponential sample complexity for certain instances \citep{coquelin2007bandit}.
A line of optimistic-planning algorithms achieves polynomial but problem-dependent rates under additional assumptions (deterministic dynamics, open-loop policies, bounded branching, etc.) \citep{hren2008optimistic,bubeck2010open,busoniu2012optimistic,feldman2014simple,szorenyi2014stop,grill2016trailblazer}.

\paragraph{SmoothCruiser and smooth Bellman operators.}
\citet{grill2019planning} introduced SmoothCruiser, which leverages the $L$-smoothness of an entropy-regularized Bellman operator (LogSumExp) to achieve problem-independent sample complexity $\tildeO(\varepsilon^{-4})$ for estimating $V(s)$ from a generative model.
Their analysis already extends to a broader class of differentiable aggregators with nonnegative gradients and quadratic Taylor remainders; for non-regularized max/min operators, no polynomial worst-case bounds are known.
Our curvature theory builds directly on this perspective, identifying the order of the Taylor remainder as the quantity that controls the recursive cost.

\paragraph{Multilevel Monte Carlo for entropy-regularized MDPs.}
\citet{meunier2025efficient} reduce variance in entropy-regularized MDPs via randomized MLMC debiasing of the standard entropy Bellman operator, whereas we instead modify the local backup itself, exploiting an explicit second-order Taylor correction of the OT-smoothed operator whose quadratic term is estimable through a cross-product identity.
Since the OT backup reduces to the entropy backup when $C\equiv0$, the two can be viewed as complementary variance-reduction mechanisms for the same regularized objective.

\paragraph{Gap-dependent planning.}
MDP-GapE \citep{jonsson2020mdpgape} considers a fixed-confidence, action-identification setting and provides gap-dependent bounds for planning with a generative model via best-arm identification at the root.
Several other works provide gap- or near-optimality-dependent bounds in deterministic or bounded-branching settings \citep{szorenyi2014stop,grill2016trailblazer,kaufmann2017monte}.
Our OT-GapCruiser and OT-GapE bring this style of analysis into the regularized, higher-order SmoothCruiser framework.

\paragraph{Regularized MDPs and optimal transport.}
Entropy regularization is widely used in RL and control \citep{haarnoja2017reinforcement,haarnoja2018soft,neu2017unified}, and \citet{geist2019theory} develop a general theory of regularized MDPs encompassing many regularizers through convex duality; we instead study how a specific regularizer shapes the local curvature of the backup and thereby the sample complexity of planning.
On the OT side, computational optimal transport \citep{villani2008optimal,cuturi2013sinkhorn,peyre2019computational,genevay2016stochastic} provides differentiable proxies for Wasserstein distances through entropic (Sinkhorn) regularization; we apply entropic OT at the level of action distributions, using the cost matrix to control the curvature of the Bellman aggregator.
\section{Background: Regularized Bellman Operators and SmoothCruiser}
\label{sec:background}
We consider a discounted MDP $(\mathcal{S},\mathcal{A},P,R,\gamma)$ with finite action set $\mathcal{A}$ of size $K$, rewards $R(s,a)\in[0,1]$, and discount $\gamma\in[0,1)$.
We assume access to a generative model (oracle): a call $(R,Z)\leftarrow\textsc{Oracle}(s,a)$ returns $R\sim R(s,a)$ and $Z\sim P(\cdot\mid s,a)$, independently across calls.\\
\textbf{Regularized Bellman form.}
Following \citet{grill2019planning}, we study value functions of the form
\begin{equation*}
V(s) = F_s(Q_s),
\qquad
Q_s(a) = \E_{z\sim P(\cdot\mid s,a)}[R(s,a) + \gamma V(z)],
\end{equation*}
where $F_s:\R^K\to\R$ is a local Bellman aggregator at state $s$.
In entropy-regularized MDPs, $F_s$ is the \emph{LogSumExp} operator
\begin{equation*}
F^{\mathrm{LSE}}(Q) = \tau \log \sum_{a\in\mathcal{A}} \exp(Q(a)/\tau).
\end{equation*}
$F^{\mathrm{LSE}}$ is $L$-smooth with $L=1/\tau$ in the Euclidean norm and its gradient is a Boltzmann policy $\nabla F^{\mathrm{LSE}}(Q)\in\DeltaA$, with $\Delta(\mathcal{A})$ is the probability simplex over $\mathcal{A}$.

\paragraph{SmoothCruiser.}
SmoothCruiser constructs two mutually recursive subroutines: \textsc{sampleV}, which returns a low-bias estimate of $V(s)$ with target accuracy $\varepsilon$, and \textsc{estimateQ}, which averages many calls to \textsc{sampleV} to estimate $Q_s$.
In the ``smooth'' regime, \textsc{sampleV} uses a first-order Taylor expansion of $F_s$ around a coarse estimate $\hat Q_s$:
\begin{equation*}
F_s(Q_s) \approx F_s(\hat Q_s) + \ip{\nabla F_s(\hat Q_s), Q_s - \hat Q_s},
\end{equation*}
and estimates the inner product using a single action $A\sim\nabla F_s(\hat Q_s)$ and a recursive call on the next state.
Because the Taylor remainder is quadratic in $\norm{Q_s-\hat Q_s}_2$, SmoothCruiser enforces $\norm{Q_s-\hat Q_s}_2 = \Theta(\sqrt{\varepsilon})$, which requires $\Theta(\varepsilon^{-1})$ samples per action and induces a cascade of $\tildeO(\varepsilon^{-2})$ calls, for a total of $\tildeO(\varepsilon^{-4})$ oracle calls.

\section{Curvature-Driven Planning: General Theory}
\label{sec:curvature}

We now abstract the SmoothCruiser recursion to a general curvature-driven model.

\begin{assumption}[Curvature model]
\label{ass:curvature}
For each state $s$, the aggregator $F_s:\R^K\to\R$ satisfies:
\begin{enumerate}[label=(\roman*),leftmargin=*]
\item (Policy gradient) $\nabla F_s(Q)\in\DeltaA$ for all $Q$.
\item (Taylor remainder of order $\beta$) There exist $\beta\ge2$ and $c_\beta>0$ such that for all $Q,\hat Q\in\R^K$,
\begin{equation*}
\Big|\,
F_s(Q) - T_{\beta-1}(Q;\hat Q)
\,\Big|
\le
c_\beta \norm{Q-\hat Q}_2^\beta,
\end{equation*}
where $T_{\beta-1}$ is the $(\beta-1)$-order Taylor polynomial at $\hat Q$.
\end{enumerate}
\end{assumption}

SmoothCruiser corresponds to $\beta=2$ with $c_2=L/2$; our OT-smoothed operator will satisfy $\beta=3$ with $c_3=M/6$ by Lipschitz-Hessian.

Under this abstraction, a SmoothCruiser-style algorithm proceeds as follows:
\begin{enumerate}[leftmargin=*]
\item Maintain a reference point $\hat Q_s$ for each state.
\item Ensure $\norm{Q_s-\hat Q_s}_2 \le \rho(\varepsilon)$ for some schedule $\rho(\varepsilon)$ so that the Taylor remainder is at most~$\varepsilon$.
\item Estimate the Taylor correction
$T_{\beta-1}(Q_s;\hat Q_s)-F_s(\hat Q_s)$
using the available unbiased estimator.
For $\beta=2$, this is the usual gradient-policy sample. 
For $\beta=3$, our OT construction estimates the quadratic term through the variance identity.
\end{enumerate}

\begin{proposition}[Curvature-driven tolerance and per-level cost]
\label{prop:beta-tolerance}
Under \cref{ass:curvature}, to make the Taylor remainder $O(\varepsilon)$ it is sufficient to enforce
\[
\norm{Q_s-\hat Q_s}_2 \le C\,\varepsilon^{1/\beta}
\]
for a constant $C$ depending on $c_\beta$.
Estimating $Q_s$ to this tolerance by averaging bounded random variables requires
\[
N(\varepsilon) = \Theta(\varepsilon^{-2/\beta})
\]
samples per action via Hoeffding-type bounds.
\end{proposition}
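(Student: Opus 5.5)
The argument has two parts, tolerance and sample count, and both are direct.

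For the tolerance, apply \cref{ass:curvature}(ii) with $Q=Q_s$ and $\hat Q=\hat Q_s$. This gives
\[
\bigl|F_s(Q_s)-T_{\beta-1}(Q_s;\hat Q_s)\bigr|\le c_\beta\norm{Q_s-\hat Q_s}_2^\beta .
\]
Suppose $\norm{Q_s-\hat Q_s}_2\le C\varepsilon^{1/\beta}$. Then the remainder is at most $c_\beta C^\beta\varepsilon$. Choosing $C=(\kappa/c_\beta)^{1/\beta}$ for any target constant $\kappa$, for example $\kappa=1$, makes the remainder at most $\kappa\varepsilon=O(\varepsilon)$, and $C$ depends only on $c_\beta$, as claimed.

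For the sample count, the Euclidean tolerance is first reduced to a per-coordinate one. The $K$ coordinates are handled by requiring $|\hat Q_s(a)-Q_s(a)|\le C\varepsilon^{1/\beta}/\sqrt K$ for every $a$. Each $\hat Q_s(a)$ is an average of $N$ i.i.d.\ bounded samples of the form $R+\gamma\hat V(Z)$. Their range is at most $(1+\gamma V_{\max})$, or $1/(1-\gamma)$ up to the regularization offset. Their mean equals $Q_s(a)$ up to the bias of the recursive estimator. That bias is controlled at the next level and is absorbed in the usual SmoothCruiser fashion, so here we treat the samples as unbiased.

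Hoeffding's inequality, combined with a union bound over the $K$ actions, then gives, with probability at least $1-\delta$,
\[
\max_a|\hat Q_s(a)-Q_s(a)|\le B\sqrt{\log(2K/\delta)/(2N)},
\]
where $B$ is the range bound above. Setting the right-hand side equal to $C\varepsilon^{1/\beta}/\sqrt K$ yields
\[
N=\frac{B^2K\log(2K/\delta)}{2C^2}\,\varepsilon^{-2/\beta},
\]
which is $\Theta(\varepsilon^{-2/\beta})$ once $K$, $c_\beta$, $B$ are treated as constants and $\log$ factors are suppressed.

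The matching lower bound, the $\Omega$ part of $\Theta$, follows from the standard fact that the empirical mean of $N$ samples has standard deviation of order $N^{-1/2}$ whenever the variance is nonzero. For example, take Bernoulli rewards. Achieving accuracy $\varepsilon^{1/\beta}$ with constant probability then forces $N\gtrsim\varepsilon^{-2/\beta}$.

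The main obstacle is the bias of the recursively estimated samples; everything else is mechanical. The plan is to state the Hoeffding step conditionally on the recursive calls having bias at most $O(\varepsilon^{1/\beta})$. This bias term simply inflates the constant $C$, with its full accounting deferred to the recursion analysis. That is consistent with the proposition's scope as a per-level statement.
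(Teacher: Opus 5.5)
Your proposal is correct and follows the paper's proof essentially step for step. The paper likewise takes $C=c_\beta^{-1/\beta}$ (your $\kappa=1$), reduces the $\ell_2$ tolerance to a per-coordinate one via the $\sqrt{K}$ factor, applies Hoeffding with a union bound over the $K$ actions to get exactly your $N=\frac{B^2K}{2C^2}\varepsilon^{-2/\beta}\log(2K/\delta)$, assumes (as you do) that the averaged samples have mean exactly $Q_s(a)$, and gives the lower bound only as an informal two-point/Bernoulli remark. One small point: your ``standard deviation of order $N^{-1/2}$'' phrasing needs an anti-concentration step (e.g.\ a CLT for Bernoulli samples) to yield failure with constant probability.
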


The key question is how these per-level costs compose across the recursive cascade.

\begin{theorem}[Curvature--complexity tradeoff]
\label{thm:beta-tradeoff}
Consider any SmoothCruiser-type planner that satisfies \cref{ass:curvature} with some $\beta\ge2$, and that uses only black-box Monte Carlo estimates of $Q_s$ obtained by querying the generative model and recursively calling itself on next states.
Assume it enforces the tolerance schedule in \cref{prop:beta-tolerance} and uses unbiased estimators of the Taylor terms with variance bounded by a state-independent constant.

Then, up to polylogarithmic factors in $1/\varepsilon$ and $1/\delta$, the following hold:
\begin{enumerate}[leftmargin=*]
\item The number $C_\beta(\varepsilon)$ of value-estimation calls obeys
\[
C_\beta(\varepsilon) = \tildeO(\varepsilon^{-\alpha_\beta}),
\alpha_\beta = \frac{2}{\beta-1}.
\]
\item The total number $n_\beta(\varepsilon,\delta)$ of oracle calls obeys
\[
n_\beta(\varepsilon,\delta) = \tildeO(\varepsilon^{-\big(2 + \frac{2}{\beta-1}\big)}).
\]
\end{enumerate}
In particular, $\beta=2$ yields exponent $4$ (SmoothCruiser), while $\beta=3$ yields exponent~$3$.
\end{theorem}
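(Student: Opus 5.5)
We follow the SmoothCruiser analysis of \citet{grill2019planning}, replacing the quadratic tolerance $\sqrt{\varepsilon}$ by the order-$\beta$ tolerance of \cref{prop:beta-tolerance}. The argument is a recursion on the target accuracy. A call \textsc{sampleV}$(s,\varepsilon)$ first checks whether $\varepsilon$ exceeds a threshold $\kappa$ (of order $1/(1-\gamma)$). If it does, it returns $0$ at zero cost, since values are bounded. Otherwise it does two things.

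\textbf{Reference point.} It calls \textsc{estimateQ}$(s,\cdot)$ to build $\hat Q_s$. This uses $N(\varepsilon)=\tildeO(\varepsilon^{-2/\beta})$ samples per action, and each sample is itself a call \textsc{sampleV}$(z,\bar\varepsilon)$ at a next state with a \emph{coarse} accuracy $\bar\varepsilon$. We only need $\norm{Q_s-\hat Q_s}_2\le C\varepsilon^{1/\beta}$, so these inner calls need bias $O(\varepsilon^{1/\beta})$. We therefore set $\bar\varepsilon=c\,\varepsilon^{1/\beta}$, up to the factor $\gamma$ and a $\sqrt K$ conversion between norms.

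\textbf{Taylor correction.} It estimates $T_{\beta-1}(Q_s;\hat Q_s)-F_s(\hat Q_s)$ with the assumed unbiased estimator. This uses a constant number of sampled actions and recursive calls \textsc{sampleV}$(z,\varepsilon/\sqrt\gamma)$. By the policy-gradient property (i) of \cref{ass:curvature}, the bias of $Q_s$ propagates through the linear term with weight at most $\gamma$. This gives the bias recursion $\mathrm{bias}(\varepsilon)\le c_\beta\rho^\beta+\gamma\,\mathrm{bias}(\varepsilon/\sqrt\gamma)+\dots$, which closes at level $\varepsilon$ exactly as in the original proof.

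We first verify this bias/variance bookkeeping. This is routine given \cref{prop:beta-tolerance} and the bounded-variance assumption. Hoeffding plus a union bound over the (polynomially many) calls contributes only polylog factors in $1/\delta$. One point needs care for $\beta\ge3$: the higher-order terms also see $Q_s-\hat Q_s$, but they are multiplied by $\norm{Q_s-\hat Q_s}^{k-1}\le\rho^{k-1}$, so their contribution to the bias is of lower order.

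\textbf{Counting recursion.} Let $C(\varepsilon)$ be the number of \textsc{sampleV} calls generated. The schedule gives
\[
C(\varepsilon)\le c_0\,C(\varepsilon/\sqrt\gamma)+K N(\varepsilon)\,C(c\,\varepsilon^{1/\beta}),\qquad C(\varepsilon)=1 \text{ for } \varepsilon\ge\kappa .
\]
The first term is benign: the accuracy grows geometrically, so it contributes only $O(\log(1/\varepsilon))$ levels and hence polylog factors, because $c_0$ is a constant and the depth is $\log_{1/\sqrt\gamma}(\kappa/\varepsilon)$. We will match the original argument, where the same term appears. Ignoring it, set $C(\varepsilon)=\varepsilon^{-\alpha}$ and balance
\[
\varepsilon^{-\alpha}\approx \varepsilon^{-2/\beta}\cdot\varepsilon^{-\alpha/\beta}.
\]
This gives $\alpha(1-1/\beta)=2/\beta$, i.e.\ $\alpha_\beta=2/(\beta-1)$. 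We then prove $C(\varepsilon)\le \tilde c\,\varepsilon^{-\alpha_\beta}\mathrm{polylog}(1/\varepsilon)$ rigorously by strong induction on $\lceil\log(\kappa/\varepsilon)\rceil$. The coarse call at $\varepsilon^{1/\beta}$ is at a strictly larger scale, and the exponent identity makes the induction close with the constant absorbed into the polylog factor.

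\textbf{Oracle calls.} Each \textsc{sampleV} call at the top level must have its output averaged to reach accuracy $\varepsilon$ with confidence $\delta$. Since the estimates are bounded with constant variance, this takes $\tildeO(\varepsilon^{-2})$ independent runs. Each run makes $\tildeO(\varepsilon^{-\alpha_\beta})$ value calls and $O(K)$ oracle calls per value call, which gives $n_\beta=\tildeO(\varepsilon^{-(2+2/(\beta-1))})$. Substituting $\beta=2,3$ gives the exponents $4$ and $3$.

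\textbf{Main obstacle.} We expect the hardest step to be the induction for $C(\varepsilon)$ in the presence of both branches. The sequence of accuracies mixes the multiplicative map $\varepsilon\mapsto\varepsilon/\sqrt\gamma$ with the power map $\varepsilon\mapsto\varepsilon^{1/\beta}$, and we must ensure that the polylog factors from the first do not compound across the power-map levels. Our first approach is to track the induction hypothesis as $C(\varepsilon)\le A\,\varepsilon^{-\alpha_\beta}\log^{p}(\kappa/\varepsilon)$, with $p$ chosen large enough that the $c_0$ branch is absorbed.
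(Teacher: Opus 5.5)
Your outline follows the paper's proof: per-level cost $\varepsilon^{-2/\beta}$ from \cref{prop:beta-tolerance}, reference-point children at the power-map accuracy $c\,\varepsilon^{1/\beta}$, the exponent from $\alpha(1-1/\beta)=2/\beta$, then an outer $\tildeO(\varepsilon^{-2})$ average. (The paper gets the exponent by unrolling along $\varepsilon_{t+1}=c'\varepsilon_t^{1/\beta}$ and summing $\sum_t 2/\beta^{t+1}$.) However, the step you flag as the main obstacle actually fails. The fine branch $c_0\,C(\varepsilon/\sqrt\gamma)$ does not cost only polylogarithmic factors. A constant branching factor over $\log_{1/\sqrt\gamma}(\kappa/\varepsilon)$ levels gives
\[
C(\varepsilon)\ \ge\ c_0^{\log_{1/\sqrt\gamma}(\kappa/\varepsilon)}=(\kappa/\varepsilon)^{2\ln c_0/\ln(1/\gamma)},
\]
and together with the main branch the cost is roughly $\varepsilon^{-\alpha_\beta}\sum_d (c_0\gamma^{\alpha_\beta/2})^d$. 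This is $\tildeO(\varepsilon^{-\alpha_\beta})$ only if $c_0\gamma^{\alpha_\beta/2}<1$.

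Your $\log^p$ induction fails for the same reason. The bound it assumes at $\varepsilon/\sqrt\gamma$ feeds back $(1-o(1))\,c_0\gamma^{\alpha_\beta/2}$ times the target, so no fixed $p$ absorbs it once $c_0\gamma^{\alpha_\beta/2}\ge 1$. For $c_0\gamma^{\alpha_\beta/2}>1$ the display shows the true count is polynomially larger. In SmoothCruiser the term is harmless only because $c_0=1$: one sampled action for the linear term. For $\beta\ge3$, an unbiased estimate of a quadratic form in unknown means needs at least two independent child estimates, so $c_0\ge2$ if all of them run at $\varepsilon/\sqrt\gamma$. Algorithm~\ref{alg:samplev2} makes five such calls, giving exponent $2\ln 5/\ln(1/\gamma)\approx 30$ at $\gamma=0.9$. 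Nor can you fix this by matching the paper: its recurrence $C_\beta(\varepsilon)\le 1+A\varepsilon^{-2/\beta}C_\beta(c'\varepsilon^{1/\beta})$ simply omits this branch, asserting that the Taylor-term calls change only constants.

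The repair is already in your bias paragraph. The order-$k$ term sees a child's bias $b$ only through factors $\norm{Q_s-\hat Q_s}_2^{k-1}\le\rho^{k-1}$; for the cross-product estimator the extra bias is $O((\rho b+b^2)/(\tau\lambda))$. So only the single linear-term call needs accuracy $\varepsilon/\sqrt\gamma$. Children feeding orders $k\ge2$ can run at accuracy $\asymp\varepsilon/\rho^{k-1}\propto\varepsilon^{1-(k-1)/\beta}$, at cost $C(\varepsilon^{1-(k-1)/\beta})=o(\varepsilon^{-\alpha_\beta})$. The fine branch is then a path whose costs form a geometric series with ratio $\gamma^{\alpha_\beta/2}<1$, contributing only a constant factor. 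Alternatively, assume $c_0\gamma^{\alpha_\beta/2}<1$ explicitly.

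A smaller gap concerns termination. Your only stopping rule is ``return $0$ when $\varepsilon\ge\kappa\sim 1/(1-\gamma)$'', but $\varepsilon\mapsto c\,\varepsilon^{1/\beta}$ increases only up to its fixed point $c^{\beta/(\beta-1)}$, which in general lies below $\kappa$. Then the reference-point branch never reaches the base case, and induction on $\lceil\log(\kappa/\varepsilon)\rceil$ is not well-founded. You need SmoothCruiser's coarse regime. Above a constant threshold below that fixed point, estimate $\hat Q_s$ to $\ell_\infty$-accuracy $\varepsilon$ with children at $\varepsilon/\sqrt\gamma$ and return $F_s(\hat Q_s)$. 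This is accurate because \cref{ass:curvature}(i) makes $F_s$ $1$-Lipschitz in $\norm{\cdot}_\infty$. It plays the role of the paper's $\varepsilon_{\mathrm{base}}$ and of the coarse branch of Algorithm~\ref{alg:samplev2}.
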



This theorem turns higher-order smoothness into a quantitative planning advantage, and our main task becomes finding useful aggregators with $\beta>2$.

\section{OT-Smoothed Bellman Aggregators over Actions}
\label{sec:ot-bellman}

We now define our OT-based regularizer on action distributions and derive closed form, gradient, and Lipschitz Hessian.

\subsection{Entropic OT over actions}\label{entropic_ot}

Fix a state $s$, and let $\vmu_s\in\DeltaA$ be a reference distribution over actions with strictly positive entries.
Let $C\in\R^{K\times K}$ be a cost matrix between actions, e.g., induced by an action metric.
For $\lambda>0$ we define the entropic OT cost between action distributions $\vpi,\vmu_s\in\DeltaA$ by
\begin{equation*}
W_\lambda(\vpi,\vmu_s)
=
\min_{\Gamma\in\Pi(\vpi,\vmu_s)}
\Big\{
\ip{\Gamma,C}
+
\lambda\sum_{i,j}\Gamma_{ij}(\log\Gamma_{ij}-1)
\Big\},
\end{equation*}
where $\Pi(\vpi,\vmu_s)=\{\Gamma\ge0:\Gamma\1=\vpi,\ \Gamma^\top\1=\vmu_s\}$.

\begin{definition}[OT-smoothed aggregator]
For $\tau>0$, the OT-smoothed Bellman aggregator at $s$ is
\begin{equation*}
F^{\mathrm{OT}}_s(Q)
=
\max_{\vpi\in\DeltaA}
\Big\{
\ip{\vpi,Q} - \tau W_\lambda(\vpi,\vmu_s)
\Big\}.
\end{equation*}
\end{definition}

\begin{assumption}[OT regularity]
\label{ass:ot-reg}
We assume: (i) $\min_a \mu_s(a)\ge\mu_{\min}>0$; (ii) $C_{ij}$ is bounded; (iii) $\lambda,\tau>0$ are fixed and independent of $\varepsilon$.
\end{assumption}

\paragraph{Choice of the reference distribution.}
The worst-case exponent in \cref{thm:main} does not depend on the particular choice of $\mu_s$, provided that $\mu_s$ has full support uniformly:
\[
\inf_{s,a}\mu_s(a)\ge \mu_{\min}>0.
\]
The choice affects constants and practical variance. 
The uniform distribution is the safest default. 
If a heuristic or learned prior $\pi_s^{\mathrm{prior}}$ is available, a robust full-support choice is
\[
\mu_s
=
(1-\xi)\pi_s^{\mathrm{prior}}
+
\xi\,\mathrm{Unif}(\mathcal A),
\qquad
\xi\in(0,1],
\]
which guarantees $\mu_{\min}\ge \xi/K$ while still biasing the OT geometry toward promising action regions. 
Through the identity in \cref{lem:variance-identity}, $\mu_s$ reweights the local variances that drive the second-order estimator.

\subsection{Closed form and derivatives}

We first show that the maximization over $\vpi$ can be solved in closed form.

\begin{proposition}[Closed form]
\label{prop:closed-form}
Under \cref{ass:ot-reg}, let $K_{ij}=\exp(-C_{ij}/\lambda)$ and define
\[
D_j(Q) = \sum_{i=1}^K \exp(Q_i/(\tau\lambda)) K_{ij}.
\]
Then
\begin{equation}
F^{\mathrm{OT}}_s(Q)
=
\mathrm{const}
+
\tau\lambda \sum_{j=1}^K \mu_s(j)\log D_j(Q),
\label{eq:closed-form}
\end{equation}
where the constant is independent of $Q$. In particular, $F^{\mathrm{OT}}_s$ is real-analytic.
\end{proposition}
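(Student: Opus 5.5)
The plan is to turn the nested max--min into a single maximization over couplings, which then splits column by column into independent entropy-regularized problems with known \textsc{LogSumExp} values. No minimax theorem is needed. Since $\tau>0$, we have $-\tau W_\lambda(\vpi,\vmu_s)=\max_{\Gamma\in\Pi(\vpi,\vmu_s)}\{-\tau\ip{\Gamma,C}-\tau\lambda\sum_{i,j}\Gamma_{ij}(\log\Gamma_{ij}-1)\}$. The minimum in $W_\lambda$ is attained because $\Pi(\vpi,\vmu_s)$ is compact and nonempty (it contains $\vpi\vmu_s^\top$), and $x\log x$ extends continuously to $0$. Hence
\[
F^{\mathrm{OT}}_s(Q)=\max_{\vpi\in\DeltaA}\ \max_{\Gamma\in\Pi(\vpi,\vmu_s)}\Big\{\sum_{i,j}\Gamma_{ij}(Q_i-\tau C_{ij})-\tau\lambda\sum_{i,j}\Gamma_{ij}(\log\Gamma_{ij}-1)\Big\},
\]
using $\ip{\vpi,Q}=\sum_{i,j}\Gamma_{ij}Q_i$ when $\Gamma\1=\vpi$.

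The next step removes $\vpi$. The joint maximization over pairs $(\vpi,\Gamma)$ with $\Gamma\1=\vpi$ is the same as maximizing over all $\Gamma\ge0$ with $\Gamma^\top\1=\vmu_s$. Indeed, any such $\Gamma$ has total mass $1$, so $\Gamma\1$ automatically lies in $\DeltaA$. The only remaining constraint fixes each column sum $\sum_i\Gamma_{ij}=\mu_s(j)$, so the problem decouples over $j$. In column $j$, I write $\Gamma_{ij}=\mu_s(j)p_i$ with $p\in\DeltaA$, which is legitimate because $\mu_s(j)>0$ by \cref{ass:ot-reg}(i). The column objective then becomes
\[
\mu_s(j)\Big[\ip{p,x^{(j)}}-\tau\lambda\sum_i p_i\log p_i\Big]+\tau\lambda\,\mu_s(j)\big(1-\log\mu_s(j)\big),\qquad x^{(j)}_i=Q_i-\tau C_{ij}.
\]
Here I use $\sum_i p_i=1$ to pull out the $\log\mu_s(j)$ and $-1$ terms.

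Now I apply the Gibbs variational formula $\max_{p\in\DeltaA}\{\ip{p,x}-\eta\sum_i p_i\log p_i\}=\eta\log\sum_i e^{x_i/\eta}$ with $\eta=\tau\lambda$; the maximizer is the softmax. This follows from strict concavity plus a Lagrange multiplier computation, or equivalently from nonnegativity of the KL divergence to the softmax. Since $\exp((Q_i-\tau C_{ij})/(\tau\lambda))=\exp(Q_i/(\tau\lambda))K_{ij}$, the $j$-th column contributes exactly $\tau\lambda\,\mu_s(j)\log D_j(Q)$. Summing over $j$ gives \eqref{eq:closed-form} with $\mathrm{const}=\tau\lambda\sum_j\mu_s(j)(1-\log\mu_s(j))$, which is independent of $Q$.

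Real-analyticity is then immediate. Each $D_j$ is a finite positive combination of exponentials of linear functions of $Q$, using $K_{ij}>0$ and the boundedness of $C$. So $D_j>0$ everywhere, and $\log D_j$ is a composition of real-analytic maps, as is the finite sum. The only delicate point is the bookkeeping in the first reduction: the two maximizations must merge into one over $\{\Gamma\ge0:\Gamma^\top\1=\vmu_s\}$ without losing the simplex constraint on $\vpi$, and the entropy term must be handled correctly at zero entries. Both are settled by the mass argument and by the convention $0\log0=0$. The softmax maximizer is interior, so the boundary never binds.
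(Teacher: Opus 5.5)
Your proof is correct. Its first half, absorbing $-\tau W_\lambda$ into an inner maximization and then merging the two maximizations into one over $\{\Gamma\ge0:\Gamma^\top\1=\vmu_s\}$ via the total-mass argument, is exactly the paper's Steps I--II. The routes diverge afterwards.

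The paper treats the coupling problem globally. It first argues existence, uniqueness and entrywise positivity of the maximizer. It then writes KKT conditions with one multiplier $\beta_j$ per column constraint, obtaining the factorization $\Gamma^\star_{ij}=a_i(Q)K_{ij}b_j(Q)$ with $b_j=\mu_s(j)/D_j(Q)$. Finally it evaluates the objective through the summed stationarity identity, which gives $F^{\mathrm{OT}}_s(Q)=\tau\lambda-\sum_j\mu_s(j)\beta_j$. You instead exploit the column separability explicitly, rescale each column to a probability vector, and invoke the Gibbs variational formula as a black-box lemma.

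Your version buys two things. First, it never needs the interior-positivity claim that licenses the KKT equalities. The paper's justification of that claim is informal; the real reason is the infinite slope of $-x\log x$ at $0$. By contrast, the KL-nonnegativity proof of the Gibbs formula is valid on the whole closed simplex. Second, it exhibits $F^{\mathrm{OT}}_s$ directly as a $\vmu_s$-mixture of \textsc{LogSumExp} backups on the shifted scores $Q-\tau C_{\cdot j}$, which is the form the paper only introduces later for $\bar F^{\mathrm{OT}}_s$.

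The paper's version in turn makes explicit the Sinkhorn-type scaling structure of the optimal coupling and the dual variables $\beta_j$. Your argument recovers the same optimizer, $\Gamma^\star_{ij}=\mu_s(j)p_{ij}(Q)$, as the columnwise softmax. Both arguments give the same constant $\tau\lambda\bigl(1-\sum_j\mu_s(j)\log\mu_s(j)\bigr)$.
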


\begin{proof}
See \cref{app:ot-derivatives} for a full derivation using a single-level formulation over couplings with fixed column sums.
\end{proof}

Define the ``local softmax'' for each column $j$,
\begin{equation}
p_{ij}(Q)
=
\frac{\exp(Q_i/(\tau\lambda)) K_{ij}}{D_j(Q)},
\qquad
\sum_i p_{ij}(Q)=1.
\label{eq:pij}
\end{equation}
Differentiating \eqref{eq:closed-form} yields:

\begin{proposition}[Gradient and Hessian]
\label{prop:grad-hess}
For any $Q\in\R^K$,
\begin{align}
\nabla F^{\mathrm{OT}}_s(Q) &= \vpi(Q),
\pi_i(Q) = \sum_{j=1}^K \mu_s(j)p_{ij}(Q), \label{eq:grad} \\
\nabla^2 F^{\mathrm{OT}}_s(Q)
&=
\frac{1}{\tau\lambda}
\Big(
\mathrm{diag}(\vpi(Q))
-
\sum_{j=1}^K \mu_s(j) p_{\cdot j}(Q)p_{\cdot j}(Q)^\top
\Big),
\label{eq:hess}
\end{align}
where $p_{\cdot j}(Q)\in\DeltaA$ is the vector $(p_{ij}(Q))_i$.
In particular, $\nabla F^{\mathrm{OT}}_s(Q)\in\DeltaA$ and can be used as a policy.
\end{proposition}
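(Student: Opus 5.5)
The plan is to differentiate the closed form \eqref{eq:closed-form} of \cref{prop:closed-form} term by term; the additive constant does not depend on $Q$ and drops out. Set $\eta=\tau\lambda$ and write $D_j(Q)=\sum_i e^{Q_i/\eta}K_{ij}$, so that $F^{\mathrm{OT}}_s(Q)=\mathrm{const}+\eta\sum_j\mu_s(j)\log D_j(Q)$. Each $D_j$ is strictly positive, since $K_{ij}=\exp(-C_{ij}/\lambda)>0$ and $C$ is bounded by \cref{ass:ot-reg}. Hence every summand is smooth and we may differentiate freely.

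\emph{Gradient.} Since $\partial D_j/\partial Q_i=\eta^{-1}e^{Q_i/\eta}K_{ij}$, the chain rule gives
\[
\partial_i F^{\mathrm{OT}}_s(Q)=\eta\sum_j\mu_s(j)\frac{\eta^{-1}e^{Q_i/\eta}K_{ij}}{D_j(Q)}=\sum_j\mu_s(j)p_{ij}(Q),
\]
which is \eqref{eq:grad}. For membership in the simplex, note that each $p_{\cdot j}(Q)$ has nonnegative entries summing to one by \eqref{eq:pij}. So $\vpi(Q)$ is a convex combination, with weights $\mu_s(j)\ge0$ and $\sum_j\mu_s(j)=1$, of points of $\DeltaA$, and therefore lies in $\DeltaA$.

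\emph{Hessian.} Differentiate $p_{ij}$ with respect to $Q_k$ using the quotient rule. The numerator contributes $\eta^{-1}\mathbb 1[i=k]\,p_{ij}$, and the denominator contributes $-p_{ij}\,\partial_k D_j/D_j=-\eta^{-1}p_{ij}p_{kj}$. Thus
\[
\partial_k p_{ij}=\eta^{-1}\bigl(\mathbb 1[i=k]p_{ij}-p_{ij}p_{kj}\bigr),
\]
which is the usual softmax Jacobian $\eta^{-1}(\diag(p_{\cdot j})-p_{\cdot j}p_{\cdot j}^\top)$. Multiplying by $\mu_s(j)$ and summing over $j$ gives
\[
\eta^{-1}\Bigl(\diag\bigl(\textstyle\sum_j\mu_s(j)p_{\cdot j}\bigr)-\sum_j\mu_s(j)p_{\cdot j}p_{\cdot j}^\top\Bigr).
\]
The first term is $\diag(\vpi(Q))$, so this is exactly \eqref{eq:hess}.

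The argument is routine, with no real obstacle. The only points needing care are the bookkeeping of the two indices (row $i$ is the action, column $j$ is the reference atom, and the softmax normalizes over $i$ within column $j$) and the positivity of $D_j$, which keeps $\log D_j$ well defined and smooth.
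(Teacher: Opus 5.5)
Your proposal is correct and matches the paper's proof: both differentiate the closed form of \cref{prop:closed-form}, use positivity of $D_j$ for smoothness, and obtain the gradient by the chain rule. Both then get simplex membership from the convex-combination structure, and the Hessian by summing the columnwise softmax Jacobians $\frac{1}{\tau\lambda}\bigl(\diag(p_{\cdot j})-p_{\cdot j}p_{\cdot j}^\top\bigr)$ with weights $\mu_s(j)$.
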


\subsection{Relation to entropy and unregularized backups}
\label{sec:ot-transfer}

The OT backup differs from the closed form in \cref{prop:closed-form} by a known $Q$-independent offset. 
Since such offsets do not affect gradients, Hessians, or the Taylor estimators, it is convenient for comparison to define the normalized operator
\[
\bar F^{\mathrm{OT}}_s(Q)
:=
\tau\lambda
\sum_{j=1}^K \mu_s(j)
\log
\sum_{i=1}^K
\exp\left(
\frac{Q_i-\tau C_{ij}}{\tau\lambda}
\right).
\]
Let $\eta=\tau\lambda$ and define the entropy backup
\[
F^{\mathrm{ent}}_\eta(Q)
:=
\eta \log\sum_{i=1}^K \exp(Q_i/\eta).
\]

\begin{proposition}[Transfer at the backup level]
\label{prop:transfer}
Let $C_{\max}:=\max_{i,j}|C_{ij}|$. 
For every state $s$ and every $Q\in\mathbb R^K$,
\[
\left|
\bar F^{\mathrm{OT}}_s(Q)-F^{\mathrm{ent}}_{\eta}(Q)
\right|
\le
\tau C_{\max}.
\]
In particular, if $C\equiv0$, then
\[
\bar F^{\mathrm{OT}}_s(Q)=F^{\mathrm{ent}}_{\eta}(Q).
\]
Moreover,
\[
\left|
\bar F^{\mathrm{OT}}_s(Q)-\max_i Q_i
\right|
\le
\tau C_{\max}+\eta\log K.
\]
\end{proposition}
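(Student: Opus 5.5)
The plan is to write both operators in a common form and compare them column by column. I will treat the cost shift as a bounded perturbation inside each \textsc{LogSumExp}, and use the fact that the weights $\mu_s(j)$ sum to one.

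Fix $s$ and $Q$, and set $\eta=\tau\lambda$. For each column $j$ define
\[
G_j(Q):=\eta\log\sum_{i=1}^K\exp\!\left(\frac{Q_i-\tau C_{ij}}{\eta}\right),
\]
so that $\bar F^{\mathrm{OT}}_s(Q)=\sum_j\mu_s(j)G_j(Q)$ by definition.

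The key step is that $\eta\log\sum_i\exp(x_i/\eta)$ is monotone in each coordinate and commutes with adding a constant vector $c\mathbf 1$. Since $-\tau C_{\max}\le-\tau C_{ij}\le\tau C_{\max}$, the shifted scores satisfy
\[
Q_i-\tau C_{\max}\le Q_i-\tau C_{ij}\le Q_i+\tau C_{\max}.
\]
Monotonicity and translation equivariance then give
\[
F^{\mathrm{ent}}_\eta(Q)-\tau C_{\max}\le G_j(Q)\le F^{\mathrm{ent}}_\eta(Q)+\tau C_{\max}.
\]
Averaging this over $j$ with weights $\mu_s(j)$, which form a probability vector ($\sum_j\mu_s(j)=1$), gives the first bound. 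When $C\equiv0$, every $G_j$ equals $F^{\mathrm{ent}}_\eta$ exactly, so the weighted average is $F^{\mathrm{ent}}_\eta(Q)$.

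For the last claim, I use the standard sandwich
\[
\max_iQ_i\le F^{\mathrm{ent}}_\eta(Q)\le\max_iQ_i+\eta\log K.
\]
This follows from bounding the sum $\sum_i\exp(Q_i/\eta)$ below by its largest term and above by $K$ times its largest term. This gives $|F^{\mathrm{ent}}_\eta(Q)-\max_iQ_i|\le\eta\log K$, and the triangle inequality combined with the first bound yields $\tau C_{\max}+\eta\log K$.

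I do not expect a real obstacle. The only point needing care is that the shift $-\tau C_{ij}$ depends on $i$, so it cannot be pulled out of the logarithm. Coordinatewise monotonicity is exactly what handles this, and it holds because the gradient of \textsc{LogSumExp} is a probability vector.
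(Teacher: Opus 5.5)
Your proposal is correct and follows essentially the same route as the paper: write each column term as $F^{\mathrm{ent}}_\eta(Q-\tau C_{\cdot j})$, bound its deviation from $F^{\mathrm{ent}}_\eta(Q)$ by $\tau C_{\max}$, average over $j\sim\mu_s$, and finish with the sandwich $\max_i Q_i\le F^{\mathrm{ent}}_\eta(Q)\le\max_i Q_i+\eta\log K$ and the triangle inequality. The only cosmetic difference is that the paper cites $1$-Lipschitzness of the entropy backup in $\|\cdot\|_\infty$, whereas you derive the same bound directly from coordinatewise monotonicity and translation equivariance, which is the standard proof of that Lipschitz property.
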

\begin{proof}
    Detailed proof is in Appendix~\ref{proof_proposition_transfer}.
\end{proof}
\paragraph{Implication.}
Our $\widetilde O(\varepsilon^{-3})$ theorem is stated for fixed smoothing parameters $(\tau,\lambda,C)$. 
For entropy-regularized planning, set $C\equiv0$ and $\eta=\tau\lambda$. 
For unregularized planning, one may choose $(\tau,\lambda)$ so that
\[
\tau(C_{\max}+\lambda\log K)\le (1-\gamma)\varepsilon_{\mathrm{bias}},
\]
and then estimate the corresponding OT value to the remaining error budget. 
When $(\tau,\lambda)$ are chosen as functions of $\varepsilon$, the constants hidden in $\widetilde O(\cdot)$, especially the Lipschitz-Hessian constant, must be tracked; we therefore do not claim a parameter-free $\widetilde O(\varepsilon^{-3})$ rate for the unregularized MDP.

\subsection{Lipschitz Hessian and cubic remainder}

\begin{lemma}[Lipschitz Hessian]
\label{lem:lipschitz-hess}
Under \cref{ass:ot-reg}, there exists $M=\calO(1/(\tau^2\lambda^2))\cdot\mathrm{poly}(K,\mu_{\min}^{-1})$ such that
\begin{equation*}
\norm{\nabla^2 F^{\mathrm{OT}}_s(Q) - \nabla^2F^{\mathrm{OT}}_s(\hat Q)}_{\mathrm{op}}
\le
M\norm{Q-\hat Q}_2
\end{equation*}
for all $Q,\hat Q\in\R^K$.
Consequently,
\begin{equation}
\abs{F^{\mathrm{OT}}_s(Q) - T_2(Q;\hat Q)}
\le
\frac{M}{6}\norm{Q-\hat Q}_2^3
\label{eq:cubic-remainder}
\end{equation}
for all $Q,\hat Q$.
\end{lemma}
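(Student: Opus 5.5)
We bound the third derivative of $F^{\mathrm{OT}}_s$ uniformly in $Q$ and then apply the mean-value theorem and the integral form of Taylor's theorem. By \cref{prop:closed-form}, up to a constant, $F^{\mathrm{OT}}_s(Q)=\eta\sum_j\mu_s(j)\,\mathrm{LSE}_j(Q/\eta)$ with $\eta=\tau\lambda$. Here $\mathrm{LSE}_j(x)=\log\sum_i e^{x_i}K_{ij}$ is a log-partition function of the tilted distribution $p_{\cdot j}$. Its successive derivatives in $x$ are the cumulant tensors of the one-hot random vector $e_I$ with $I\sim p_{\cdot j}(Q)$:
\begin{itemize}
\item the gradient is the mean $p_{\cdot j}$;
\item the Hessian is the covariance $\diag(p_{\cdot j})-p_{\cdot j}p_{\cdot j}^\top$, consistent with \eqref{eq:hess};
\item the third derivative is the third central moment tensor $\kappa^{(j)}=\E[(e_I-p_{\cdot j})^{\otimes 3}]$.
\end{itemize}
By the chain rule with $x=Q/\eta$,
\[
\nabla^3F^{\mathrm{OT}}_s(Q)=\eta^{-2}\sum_j\mu_s(j)\,\kappa^{(j)}(Q).
\]

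We then bound $\kappa^{(j)}$ as a symmetric trilinear form. For unit vectors $u,v,w\in\R^K$,
\[
\kappa^{(j)}[u,v,w]=\E\big[(u_I-\bar u)(v_I-\bar v)(w_I-\bar w)\big],
\]
where $\bar u=\langle p_{\cdot j},u\rangle$. Each centered factor satisfies $|u_I-\bar u|\le 2\|u\|_\infty\le2$. This gives $|\kappa^{(j)}[u,v,w]|\le 8$. Hölder and Cauchy--Schwarz give the sharper bound $\le 2\sqrt{\Var(u_I)}\sqrt{\Var(v_I)}\le 2$. The bound is uniform in $Q$, $C$ and $p$. Summing with the weights $\mu_s(j)$, which sum to one, gives $\sup_Q\|\nabla^3F^{\mathrm{OT}}_s(Q)\|_{\op}\le 8/\eta^2$, where $\|\cdot\|_{\op}$ is the operator norm of the trilinear form.

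The Lipschitz-Hessian claim follows by applying the mean-value inequality to $t\mapsto\nabla^2F^{\mathrm{OT}}_s(\hat Q+t(Q-\hat Q))$. This yields $M=8/(\tau^2\lambda^2)$, which is of the required form $\calO(1/(\tau^2\lambda^2))$ with trivial $\mathrm{poly}(K,\mu_{\min}^{-1})$ dependence. Real-analyticity from \cref{prop:closed-form} justifies the differentiation. For \eqref{eq:cubic-remainder}, write the Taylor remainder in integral form:
\[
F(Q)-T_2(Q;\hat Q)=\int_0^1 (1-t)\,\big\langle(\nabla^2F(\hat Q+t\Delta)-\nabla^2F(\hat Q))\Delta,\Delta\big\rangle\,dt,
\]
with $\Delta=Q-\hat Q$. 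The integrand is at most $Mt\|\Delta\|_2^3(1-t)$, and $\int_0^1 t(1-t)\,dt=1/6$, which gives the bound $\frac M6\|\Delta\|_2^3$.

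The main obstacle is the third-derivative step. One must check that the chain-rule factor is exactly $\eta^{-2}$: the outer factor $\eta$ combines with $\eta^{-3}$ from three inner derivatives. One must also avoid a crude entrywise bound on the tensor, which would introduce $K$-dependence. The cumulant interpretation gives both a clean formula and a dimension-free bound. If a direct verification is preferred, one can instead differentiate \eqref{eq:hess} along a direction $h$. The term $\diag(\vpi)$ contributes $\diag(D\vpi[h])$. The products $p_{\cdot j}p_{\cdot j}^\top$ contribute the derivatives $Dp_{\cdot j}[h]=\eta^{-1}(\diag(p_{\cdot j})-p_{\cdot j}p_{\cdot j}^\top)h$. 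One then bounds each piece in operator norm by a multiple of $\|h\|_2/\eta^2$.
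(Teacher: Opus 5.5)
Your argument is correct, but your main route differs from the paper's.

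The paper never forms $\nabla^3F^{\mathrm{OT}}_s$. It works directly with the Hessian difference:
\begin{itemize}
\item It takes the Jacobian $J_j=\frac{1}{\tau\lambda}(\diag(p_{\cdot j})-p_{\cdot j}p_{\cdot j}^\top)$ of each column softmax and bounds it by $\frac{1}{2\tau\lambda}$ in operator norm, viewing it as a variance of a variable with range at most $\sqrt2$.
\item It deduces that $p_{\cdot j}$ and $\vpi$ are $\frac{1}{2\tau\lambda}$-Lipschitz.
\item It applies the triangle inequality separately to the $\diag(\vpi)$ term and to the rank-one terms, using $uu^\top-vv^\top=(u-v)u^\top+v(u-v)^\top$.
\end{itemize}
This gives $M=\frac{3}{2\tau^2\lambda^2}$, and it is essentially the ``direct verification'' you sketch at the end.

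Your cumulant view instead treats the whole directional derivative of $\diag(p_{\cdot j})-p_{\cdot j}p_{\cdot j}^\top$ as a single centered third moment. It therefore keeps the cancellation between the diagonal and rank-one parts that the paper's triangle inequality discards. Using $\abs{w_I-\bar w}\le\sqrt2$ and $\Var(u_I)\le\frac12$ in your H\"older step yields $M=\frac{1}{\sqrt2\,\tau^2\lambda^2}$, which is smaller than the paper's constant. Your stated $8/(\tau^2\lambda^2)$ is valid, just loose.

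Your approach also extends verbatim to higher orders. The $k$th derivative is $(\tau\lambda)^{1-k}$ times a $\vmu_s$-mixture of $k$th cumulant tensors of the bounded one-hot vector $e_I$, bounded independently of $K$. That is exactly what a $\beta>3$ smoothness certificate would need.

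The paper's route, in exchange, uses only the already-derived Hessian formula, first derivatives of $p_{\cdot j}$, and ordinary matrix norms. It needs no tensor operator norms and no mean-value inequality for matrix-valued maps. The cubic-remainder step is identical in both.
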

\begin{proof}
Detailed proof is in \cref{app:ot-derivatives}.
\end{proof}

Thus $F^{\mathrm{OT}}_s$ satisfies \cref{ass:curvature} with $\beta=3$ and $c_3=M/6$, making it an explicit higher-order instantiation of our curvature theory.

\section{Second-Order SmoothCruiser with OT-Smoothed Aggregators}
\label{sec:second-order}

We now design a second-order SmoothCruiser algorithm using $F^{\mathrm{OT}}_s$.

\subsection{Quadratic term as a variance}
\label{sec:variance}

\begin{lemma}[Quadratic term as variance]
\label{lem:variance-identity}
Fix a state $s$ and a reference point $\hat Q_s\in\R^K$.
For any direction $\Delta\in\R^K$,
\[
\Delta^\top\nabla^2 F_s^{\mathrm{OT}}(\hat Q_s)\,\Delta
=
\frac{1}{\tau\lambda}\,\E_{J\sim\vmu_s}\left[\Var_{A\sim p_{\cdot J}(\hat Q_s)}\big(\Delta_A\big)\right],
\]
where $p_{\cdot J}(\hat Q_s)$ is the ``local softmax'' distribution defined in \eqref{eq:pij}.
\end{lemma}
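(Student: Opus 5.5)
The plan is to substitute the explicit Hessian formula \eqref{eq:hess} from \cref{prop:grad-hess}, evaluated at $Q=\hat Q_s$, into the quadratic form. Then we rewrite each resulting piece as an expectation under the local softmax distributions $p_{\cdot j}(\hat Q_s)$. The key structural fact is that the gradient policy is a $\vmu_s$-mixture of these columns, $\pi_i=\sum_j\mu_s(j)p_{ij}$. This lets the diagonal term be split column by column.

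\textbf{Step 1: expand the quadratic form.} Write $p_{ij}$ for $p_{ij}(\hat Q_s)$ and $\vpi$ for $\vpi(\hat Q_s)$. From \eqref{eq:hess},
\[
\Delta^\top\nabla^2F^{\mathrm{OT}}_s(\hat Q_s)\Delta=\frac{1}{\tau\lambda}\Big(\sum_i\pi_i\Delta_i^2-\sum_j\mu_s(j)\big(p_{\cdot j}^\top\Delta\big)^2\Big).
\]

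\textbf{Step 2: decompose the diagonal term.} Substituting $\pi_i=\sum_j\mu_s(j)p_{ij}$ from \eqref{eq:grad} and exchanging the finite sums gives
\[
\sum_i\pi_i\Delta_i^2=\sum_j\mu_s(j)\sum_ip_{ij}\Delta_i^2 .
\]
The inner sum is $\E_{A\sim p_{\cdot j}}[\Delta_A^2]$. Similarly, $p_{\cdot j}^\top\Delta=\E_{A\sim p_{\cdot j}}[\Delta_A]$. Both quantities are well defined because $p_{\cdot j}\in\DeltaA$: its entries are positive since $K_{ij}>0$, and by \eqref{eq:pij} they sum to one.

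\textbf{Step 3: combine.} Grouping the two terms column by column yields
\[
\sum_j\mu_s(j)\Big(\E_{p_{\cdot j}}[\Delta_A^2]-(\E_{p_{\cdot j}}[\Delta_A])^2\Big)=\sum_j\mu_s(j)\Var_{A\sim p_{\cdot j}}(\Delta_A),
\]
which is $\E_{J\sim\vmu_s}[\Var_{A\sim p_{\cdot J}}(\Delta_A)]$. Multiplying by $1/(\tau\lambda)$ gives the claim.

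There is no genuine obstacle here; the argument is bookkeeping once \cref{prop:grad-hess} is available. The only point needing care is Step 2. The single diagonal matrix $\diag(\vpi)$ must be recognised as the $\vmu_s$-average of $\diag(p_{\cdot j})$, so that each column's second moment pairs with that same column's squared mean. If one instead tried to pair $\diag(\vpi)$ with the squared mean under $\vpi$, the result would be the variance under the mixture, which is not the stated expression. Pairing within each column is what yields the mixture of per-column variances.
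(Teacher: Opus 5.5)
Your proposal is correct and follows essentially the same route as the paper's proof. Both substitute the Hessian formula from \cref{prop:grad-hess}, rewrite the $\diag(\vpi)$ term column by column via $\pi_i=\sum_j\mu_s(j)p_{ij}$, expand each rank-one term as a squared mean, and identify each column's bracket as $\Var_{A\sim p_{\cdot j}(\hat Q_s)}(\Delta_A)$ before averaging over $J\sim\vmu_s$.
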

\begin{proof}
    Full details are given in \cref{app:variance-identity}.
\end{proof}

Using $\Var(X)=\frac12\E[(X-X')^2]$ with $X,X'$ i.i.d.\ yields
\begin{equation}
\frac{1}{2}\Delta^\top\nabla^2F^{\mathrm{OT}}_s(\hat Q_s)\Delta
=
\frac{1}{4\tau\lambda}
\E\Big[(\Delta_A-\Delta_{A'})^2\Big],
\label{eq:quad-pair}
\end{equation}
where $J\sim\vmu_s$ and $A,A'\stackrel{\mathrm{i.i.d.}}{\sim}p_{\cdot J}(\hat Q_s)$.

In our application, the direction is $\Delta := Q_s-\hat Q_s$.
We never observe $\Delta_a$ exactly, but we can obtain \emph{independent unbiased estimates}
$\widetilde\Delta_a^{(1)},\widetilde\Delta_a^{(2)}$ via independent oracle and recursive calls.
The cross-product identity
\begin{equation}
\E\Big[(\widetilde\Delta_A^{(1)}-\widetilde\Delta_{A'}^{(1)})(\widetilde\Delta_A^{(2)}-\widetilde\Delta_{A'}^{(2)})\Big]
=
(\Delta_A-\Delta_{A'})^2
\label{eq:cross}
\end{equation}
then gives an unbiased estimator of the quadratic term.

\subsection{Algorithms}

We briefly present the main routines; they mirror SmoothCruiser but with second-order corrections and OT structure.

\paragraph{Value bound.}
Let $B$ be a known upper bound on values, e.g.
\begin{equation*}
0\le V(s)\le B:=\frac{1+\sup_s F_s(\mathbf{0})-\inf_s F_s(\mathbf{0})}{1-\gamma}.
\end{equation*}
We clip all intermediate estimates to $[0,B]$.

\paragraph{Tolerance schedule.}
From \cref{lem:lipschitz-hess}, to make the Taylor remainder $\le\varepsilon$ it suffices to enforce
\begin{equation*}
\norm{Q_s-\hat Q_s}_2 \le \rho(\varepsilon):=\Big(\frac{6\varepsilon}{M}\Big)^{1/3}.
\end{equation*}

\begin{algorithm}[t]
\caption{\textsc{SecondOrderSmoothCruiser}$(s,\varepsilon,\delta)$}
\label{alg:top}
\footnotesize
\begin{algorithmic}[1]
\Require state $s$, accuracy $\varepsilon$, failure prob.\ $\delta$
\State $\hat Q_s \gets \textsc{estimateQ}(s,\varepsilon,\delta/2)$; \Return $F^{\mathrm{OT}}_s(\hat Q_s)$
\end{algorithmic}
\end{algorithm}

\begin{algorithm}[t]
\caption{\textsc{estimateQ}$(s,\varepsilon,\delta)$}
\label{alg:estimateq}
\begin{algorithmic}[1]
\Require state $s$, tolerance $\varepsilon$, failure prob.\ $\delta$
\State $N \gets \Theta\big(\varepsilon^{-2}\log(2K/\delta)\big)$
\For{$a\in\mathcal{A}$}
  \For{$i=1$ to $N$}
    \State $(R_i,Z_i)\gets\textsc{Oracle}(s,a)$
    \State $\hat V_i \gets \textsc{sampleV2}(Z_i,\varepsilon/\sqrt{\gamma},\delta')$
    \State $q_i\gets R_i + \gamma \hat V_i$
  \EndFor
  \State $\hat Q_s(a)\gets \frac{1}{N}\sum_i q_i$ \hfill\Comment{clip to $[0,B]$}
\EndFor
\State \Return $\hat Q_s$
\end{algorithmic}
\end{algorithm}


\begin{algorithm}[t]
\caption{\textsc{SampleV2}$(s,\varepsilon,\delta)$ — Second-Order Correction Estimator}
\label{alg:samplev2}
\small
\begin{algorithmic}[1]
\Require State $s$, tolerance $\varepsilon$, failure probability $\delta$
\State \textbf{[Base]} \textbf{if} $\varepsilon \ge B$ \textbf{then return} $0$ \hfill\Comment{trivial bound}
\State \textbf{[Coarse]} \textbf{if} $\varepsilon \ge \kappa$: \quad $\hat Q_s \gets \textsc{EstimateQ}(s,\varepsilon,\delta/4)$; \quad \textbf{return} $F^{\mathrm{OT}}_s(\hat Q_s)$
\Statex \rule{\linewidth}{0.4pt}
\Statex \hfill \textbf{Fine Regime: Second-Order Taylor Correction} \hfill\null
\Statex \rule{\linewidth}{0.4pt}
\State \textbf{Setup:}  $\hat Q_s \gets \textsc{EstimateQ}(s,\rho(\varepsilon),\delta/8)$;  
       compute $\boldsymbol{\mu}_s$, $\{p_{\cdot j}(\hat Q_s)\}_j$; 
       $\boldsymbol{\pi} \gets \nabla F^{\mathrm{OT}}_s(\hat Q_s)$; 
       $c_0 \gets \langle\boldsymbol{\pi},\hat Q_s\rangle$
\end{algorithmic}
\vspace{0.3em}
\noindent
\fbox{%
\begin{minipage}[t]{0.46\textwidth}
\centering\textbf{(I) Linear Term}\\[1pt]
{\scriptsize\textit{Unbiased estimate of $\langle\boldsymbol{\pi},Q_s\rangle$}}
\begin{algorithmic}[1]
\setcounter{ALG@line}{4}
\footnotesize
\State Sample $J_0\sim \boldsymbol{\mu}_s$
\State Sample $A_0\sim p_{\cdot J_0}(\hat Q_s)$
\State $(R_0,Z_0)\gets \textsc{Oracle}(s,A_0)$
\State $\hat V_0 \gets \textsc{SampleV2}(Z_0,\frac{\varepsilon}{\sqrt{\gamma}},\frac{\delta}{16})$
\State $\tilde Q_0 \gets R_0 + \gamma \hat V_0$
\State $\widehat{\mathrm{Lin}} \gets \tilde Q_0 - c_0$
\end{algorithmic}
\end{minipage}%
}%
\hfill
\fbox{%
\begin{minipage}[t]{0.46\textwidth}
\centering\textbf{(II) Quadratic Term}\\[1pt]
{\scriptsize\textit{Unbiased estimate of $\tfrac12\delta^\top\nabla^2F(\hat Q_s)\delta$}}
\begin{algorithmic}[1]
\setcounter{ALG@line}{10}
\footnotesize
\State Sample $J\sim \boldsymbol{\mu}_s$
\State Sample $A,A'\stackrel{\mathrm{iid}}{\sim} p_{\cdot J}(\hat Q_s)$
\For{$m\in\{1,2\}$}
  \State $(R_m,Z_m)\gets \textsc{Oracle}(s,A)$
  \State $(R'_m,Z'_m)\gets \textsc{Oracle}(s,A')$
  \State $\hat V_m \gets \textsc{SampleV2}\left(Z_m,\frac{\varepsilon}{\sqrt{\gamma}},\frac{\delta}{64}\right)$
  \State $\hat V'_m \gets \textsc{SampleV2}\left(Z'_m,\frac{\varepsilon}{\sqrt{\gamma}},\frac{\delta}{64}\right)$
  \State $\tilde Q_m \gets R_m+\gamma \hat V_m$
  \State $\tilde Q'_m \gets R'_m+\gamma \hat V'_m$
  \State $\Delta_m \gets (\tilde Q_m-\tilde Q'_m)-(\hat Q_s(A)-\hat Q_s(A'))$
\EndFor
\State $\widehat{\mathrm{Quad}} \gets \frac{1}{4\tau\lambda}\Delta_1\Delta_2$
\end{algorithmic}
\end{minipage}%
}
\vspace{0.3em}
\begin{algorithmic}[1]
\setcounter{ALG@line}{18}
\small
\Statex \rule{\linewidth}{0.4pt}
\State \textbf{Return:} $\mathrm{clip}_{[0,B]}\bigl(F^{\mathrm{OT}}_s(\hat Q_s) + \widehat{\mathrm{Lin}} + \widehat{\mathrm{Quad}}\bigr)$
\end{algorithmic}
\end{algorithm}

The oracle sample appears explicitly in the estimator (as in SmoothCruiser), and the quadratic term is debiased via \eqref{eq:cross}.

\section{Complexity Analysis for SecondOrderSmoothCruiser}
\label{sec:complexity}

Combining the curvature theory and the specific OT properties yields our main worst-case result.

\begin{lemma}[Bias of \textsc{sampleV2}]
\label{lem:bias}
Under \cref{ass:ot-reg} and \cref{lem:lipschitz-hess}, for any $s$ and $0<\varepsilon<\kappa$, on the event $\norm{Q_s-\hat Q_s}_2\le\rho(\varepsilon)$ we have
\[
\abs{\E[\textsc{sampleV2}(s,\varepsilon,\delta)\mid\hat Q_s] - V(s)}
\le
\varepsilon + \text{(recursion bias)}.
\]
With appropriate distribution of failure probabilities across recursive calls and clipping, the unconditional bias is $\calO(\varepsilon)$.
\end{lemma}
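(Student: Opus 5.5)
The plan is to condition on $\hat Q_s$ and compare the conditional mean of the unclipped output $F^{\mathrm{OT}}_s(\hat Q_s)+\widehat{\mathrm{Lin}}+\widehat{\mathrm{Quad}}$ with the second-order Taylor polynomial $T_2(Q_s;\hat Q_s)$. Then \cref{lem:lipschitz-hess} closes the gap to $F^{\mathrm{OT}}_s(Q_s)=V(s)$. Write $\Delta:=Q_s-\hat Q_s$ and $\bvarepsilon:=\varepsilon/\sqrt\gamma$. Set $b(z):=\E[\textsc{sampleV2}(z,\bvarepsilon,\cdot)]-V(z)$ for the recursion bias at a child, and $\bar b:=\sup_z|b(z)|$.

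\emph{Linear term.} Conditionally on $\hat Q_s$, the pair $(J_0,A_0)$ makes $A_0$ marginally distributed as $\vpi(\hat Q_s)=\nabla F^{\mathrm{OT}}_s(\hat Q_s)$, by \cref{prop:grad-hess}. The oracle call is independent of $\hat Q_s$ given $A_0$, so
\[
\E[\tilde Q_0\mid \hat Q_s]=\ip{\vpi,Q_s}+\gamma\,\E[b(Z_0)].
\]
Hence $\E[\widehat{\mathrm{Lin}}\mid\hat Q_s]=\ip{\nabla F^{\mathrm{OT}}_s(\hat Q_s),\Delta}+e_1$ with $|e_1|\le\gamma\bar b$.

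\emph{Quadratic term.} Fix $J,A,A'$. Conditionally on these, $\Delta_1$ and $\Delta_2$ are independent because they use fresh oracle calls and fresh recursive calls. Their common mean is $(\Delta_A-\Delta_{A'})+\gamma(\bar b_A-\bar b_{A'})$, where $\bar b_a:=\E[b(Z)\mid Z\sim P(\cdot|s,a)]$. The mean of the product is therefore
\[
\E[\Delta_1\Delta_2\mid J,A,A']=\big((\Delta_A-\Delta_{A'})+\gamma(\bar b_A-\bar b_{A'})\big)^2.
\]
Expanding, averaging over $J,A,A'$, and applying \eqref{eq:quad-pair} and \eqref{eq:cross} gives
\[
\E[\widehat{\mathrm{Quad}}\mid\hat Q_s]=\tfrac12\Delta^\top\nabla^2F^{\mathrm{OT}}_s(\hat Q_s)\Delta+e_2,
\]
with $|e_2|\le\frac{1}{4\tau\lambda}\big(4\gamma\bar b\,\norm{\Delta}_\infty+4\gamma^2\bar b^2\big)$. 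On the event $\norm{\Delta}_2\le\rho(\varepsilon)$, and with $\bar b\le B$, this is $O(\bar b)$ with a constant depending only on $\tau,\lambda,M,B$.

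\emph{Combining the terms.} The conditional mean is $T_2(Q_s;\hat Q_s)+e_1+e_2$. The choice $\rho(\varepsilon)=(6\varepsilon/M)^{1/3}$ together with \eqref{eq:cubic-remainder} gives $|T_2-V(s)|\le\varepsilon$. This yields the displayed bound, with recursion bias $e_1+e_2=O(\bar b)$.

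\emph{Unconditional statement.} I would argue by induction on the depth of the tolerance, which grows geometrically by $1/\sqrt\gamma$ per level until it reaches the base or coarse cases. The base case has bias at most $B\le\varepsilon$. The coarse case has bias $O(\varepsilon)$ via the Lipschitz gradient plus the \textsc{estimateQ} accuracy.

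I expect this propagation to be the main obstacle. The recursion multiplies the child bias by roughly $\gamma(1+O(\rho/(\tau\lambda)))$, and I would need the per-level contraction to give a geometric sum $\varepsilon\sum_k\gamma^{k/2}=O(\varepsilon)$. The first thing I would try is to strengthen the induction hypothesis to bias $\le c\,\varepsilon$ and choose $\kappa$ small enough that $\rho(\kappa)/(\tau\lambda)\le(\gamma^{-1/2}-1)/2$.

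Two further error sources remain. The first is the failure event $\norm{\Delta}_2>\rho(\varepsilon)$. It has probability at most $\delta/8$ by Hoeffding (\cref{prop:beta-tolerance}), and clipping to $[0,B]$ bounds its contribution by $B\delta$. Taking $\delta\le\varepsilon/B$ at each level, as in the failure-probability allocation, keeps this term $O(\varepsilon)$. The second is the nonlinearity of clipping. Clipping can only move the estimate toward $[0,B]\ni V(s)$, so it does not increase the absolute error pointwise. To control the bias of the mean itself, I would bound it by the tail mass outside $[0,B]$ using the bounded variance of the estimator. This last step needs care, and I would handle it via Chebyshev with the variance bound already assumed in \cref{thm:beta-tradeoff}.
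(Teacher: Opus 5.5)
Up to the final clip, your argument is the paper's with the child bias kept explicit. The paper compares \textsc{sampleV2} with an idealized estimator that uses exact next-state values and no clipping. It shows that estimator is unbiased for $T_2(Q_s;\hat Q_s)$ using the same three facts you use: the marginal of $A_0$ is $\vpi(\hat Q_s)$, $\Delta_1,\Delta_2$ are conditionally independent, and the variance identity holds. It then applies the cubic remainder and lumps every discrepancy into an unquantified recursion bias, which it claims contracts like $\gamma\,b(\varepsilon/\sqrt{\gamma})$.

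Your $e_1,e_2$ accounting is sharper. It exposes the cross term $\frac{\gamma}{2\tau\lambda}\E[(\Delta_A-\Delta_{A'})(\bar b_A-\bar b_{A'})]$, so the per-level factor is $\gamma(1+O(\rho(\varepsilon)/(\tau\lambda)))$ rather than $\gamma$. A smallness condition on $\kappa$ like yours is therefore genuinely needed. One small correction: since $|\Delta_A-\Delta_{A'}|\le 2\norm{\Delta}_\infty$ and $|\bar b_A-\bar b_{A'}|\le 2\bar b$, the coefficient of $\gamma\bar b\norm{\Delta}_\infty$ is $8$, not $4$. The constant in your $\kappa$ condition must be halved accordingly, which is harmless.

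The step that fails is the clip. Your $e_1+e_2$ describes the unclipped sum $X:=F^{\mathrm{OT}}_s(\hat Q_s)+\widehat{\mathrm{Lin}}+\widehat{\mathrm{Quad}}$. The lemma, and the induction hypothesis $b(\eta)\le c\,\eta$ you pass up to parents, concern $\mathrm{clip}_{[0,B]}(X)$ instead. So the shift $\E[\mathrm{clip}_{[0,B]}(X)-X\mid\hat Q_s]$ is part of the recursion bias at every node and must itself be $O(\varepsilon)$.

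Chebyshev with the variance bound of \cref{thm:beta-tradeoff} cannot give this, because that variance is a constant independent of $\varepsilon$. $X$ is a single unaveraged draw, and the cascade count depends on exactly this. It is built from one reward, one child output in $[0,B]$, and $\Delta_1\Delta_2/(4\tau\lambda)$, whose factors fluctuate at order $1+\gamma B$ even on the good event $\norm{\Delta}_2\le\rho(\varepsilon)$. Chebyshev bounds the mass outside $[0,B]$ only by $\sigma^2/d^2$, where $d\le B/2$ is the distance from $V(s)$ to the nearer endpoint, and this is a constant.

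In general the clip then removes one tail with probability bounded away from zero. For example, take $V(s)$ near $0$ because rewards are small, while the product term has a two-sided spread of order $(1+\gamma B)^2/(\tau\lambda)$. The clip then shifts the conditional mean by an $\varepsilon$-independent amount. That error is re-injected at every level and damped only by $\gamma$, so your induction does not close.

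The paper's proof does not supply this step either. It invokes clipping only on the failure event $\mathcal G^c$, to bound the error there by $B$. Otherwise it folds the clip shift into the recursion bias without bounding it, and that shift is not of the form $\gamma\,b(\varepsilon/\sqrt{\gamma})$. What your argument and the paper's actually establish is the conditional bound for the unclipped sum. The unconditional $O(\varepsilon)$ claim follows only under the tacit assumption that the final clip is inactive on the good event.

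Closing the gap requires changing the estimator. One option is to keep the clip only on the expansion point $\hat Q_s$ inside \textsc{estimateQ} and return the unclipped sum. This is harmless for bias, since the linear and quadratic blocks are unbiased for the Taylor terms at any expansion point. But output magnitudes must then be controlled some other way. That is delicate, because $\Delta_1\Delta_2$ multiplies two child outputs, so naive magnitude bounds compound quadratically over the $\Theta(\log(1/\varepsilon))$ levels of $\varepsilon/\sqrt{\gamma}$-calls.
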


As in \citet{grill2019planning}, recursion bias is controlled by calling children with accuracy $\varepsilon/\sqrt{\gamma}$.

\begin{theorem}[Worst-case sample complexity]
\label{thm:main}
Under \cref{ass:ot-reg}, for any state $s$ and any $\varepsilon,\delta\in(0,1)$, \textsc{SecondOrderSmoothCruiser}$(s,\varepsilon,\delta)$ returns an $\varepsilon$-accurate estimate of $V(s)$ with probability at least $1-\delta$ and uses
\[
n(\varepsilon,\delta)
\le
\tildeO(\varepsilon^{-3})
\]
oracle calls, where constants depend only on $(K,\gamma,\lambda,\tau,\mu_{\min},\norm{C}_\infty)$.
\end{theorem}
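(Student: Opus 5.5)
The plan is to follow the SmoothCruiser analysis of \citet{grill2019planning} and replace its quadratic remainder by the cubic remainder of \cref{lem:lipschitz-hess}. The argument has three parts. The first is correctness: bias and concentration of \textsc{sampleV2}. The second is a recursion for the number of calls. The third converts calls into oracle queries via \textsc{estimateQ}.

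\textbf{Correctness.} Let $\Delta=Q_s-\hat Q_s$. On the good event $\norm{\Delta}_2\le\rho(\varepsilon)$, decompose
\[
F^{\mathrm{OT}}_s(Q_s)=F^{\mathrm{OT}}_s(\hat Q_s)+\ip{\vpi,\Delta}+\tfrac12\Delta^\top\nabla^2F^{\mathrm{OT}}_s(\hat Q_s)\Delta+r,
\qquad |r|\le\varepsilon .
\]
The bound on $r$ is \eqref{eq:cubic-remainder} combined with the choice of $\rho$. Three facts then give the bias.
\begin{itemize}[leftmargin=*]
\item $\widehat{\mathrm{Lin}}$ is conditionally unbiased for $\ip{\vpi,Q_s}-c_0=\ip{\vpi,\Delta}$, because $A_0$ has marginal law $\vpi$ by \eqref{eq:grad}.
\item $\widehat{\mathrm{Quad}}$ is unbiased for the quadratic term by \cref{lem:variance-identity}, \eqref{eq:quad-pair} and \eqref{eq:cross}. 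Independence of the two recursive branches $m=1,2$ given $(J,A,A')$ makes the cross product factorize.
\item Child estimates have bias at most $\varepsilon/\sqrt\gamma$, which enters multiplied by $\gamma$. Through the linear term this contributes $\sqrt\gamma\,\varepsilon$. Through the quadratic term, the cross product of two biased estimates picks up an extra $O(\rho\cdot\varepsilon/\sqrt\gamma+\varepsilon^2)$, which is $o(\varepsilon)$.
\end{itemize}
Telescoping the bias over depth as in \cref{lem:bias} gives total bias $O(\varepsilon/(1-\sqrt\gamma))$. Clipping to $[0,B]$ and a union bound on the failure events of every \textsc{estimateQ} call handle the off-event contributions. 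The outputs are bounded: $|\widehat{\mathrm{Quad}}|\le (2B)^2/(4\tau\lambda)$. Hoeffding in the top-level \textsc{estimateQ}, together with the $1$-Lipschitzness of $F^{\mathrm{OT}}_s$ in $\ell_\infty$ (from $\nabla F\in\DeltaA$), then yields $\varepsilon$-accuracy with probability $1-\delta$.

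\textbf{Call recursion.} Let $n(\varepsilon)$ be the number of oracle calls made by \textsc{sampleV2}$(\cdot,\varepsilon)$. In the fine regime:
\begin{itemize}[leftmargin=*]
\item Setup calls \textsc{estimateQ} at tolerance $\rho(\varepsilon)$. This needs $N=\tilde O(\rho^{-2})=\tilde O(\varepsilon^{-2/3})$ samples per action, each spawning a child call at accuracy $\rho(\varepsilon)/\sqrt\gamma$.
\item The correction terms spawn $5$ child calls at accuracy $\varepsilon/\sqrt\gamma$.
\end{itemize}
This gives
\[
n(\varepsilon)\le K N(\varepsilon)\bigl(1+n(\rho(\varepsilon)/\sqrt\gamma)\bigr)+5\bigl(1+n(\varepsilon/\sqrt\gamma)\bigr).
\]
Setting $\alpha=1$, I will show by induction over the geometric grid $\varepsilon\in\{\kappa\gamma^{k/2}\}$ that $n(\varepsilon)\le c\,\varepsilon^{-1}\log^{p}(1/\varepsilon)$. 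For the Setup term, $\rho(\varepsilon)\asymp\varepsilon^{1/3}$ is large. Unrolling, each $\rho$-application lifts the tolerance to a coarse level quickly, so $n(\rho)$ is at most $\tilde O(\rho^{-1})=\tilde O(\varepsilon^{-1/3})$. The term $KN\,n(\rho)$ is therefore $\tilde O(\varepsilon^{-2/3}\varepsilon^{-1/3})=\tilde O(\varepsilon^{-1})$, matching \cref{thm:beta-tradeoff} with $\alpha_3=2/(\beta-1)=1$.

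\textbf{Main obstacle.} The delicate part is the second term, $5n(\varepsilon/\sqrt\gamma)$, since $5\gamma^{1/2}$ may exceed $1$. The branching factor $5$ of the fine-regime tree means the depth-$h$ contribution scales like $(5\sqrt\gamma)^h$ times a power of $\varepsilon$. I would treat this as \citet{grill2019planning} does for their branching factor. The total depth until the base case $\varepsilon\gamma^{-h/2}\ge B$ is $H=O(\log(1/\varepsilon)/\log(1/\gamma))$, so $5^H=\varepsilon^{-O(\log 5/\log(1/\gamma))}$. I expect one must either absorb $5^H$ into the constant by assuming $\gamma$ small, or reweight the child accuracies by calling the quadratic-term children with looser tolerance. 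The second option is possible because those children enter multiplied by $\Delta$, of size $\rho$, so their bias only needs to be $O(\varepsilon/\rho)$. Making this reweighting consistent across the induction is the step I expect to require the most care. The constants it incurs depend only on $(K,\gamma,\lambda,\tau,\mu_{\min},\norm{C}_\infty)$ via $M$ and $B$.

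\textbf{Oracle calls.} The top-level \textsc{estimateQ}$(s,\varepsilon,\delta/2)$ uses $KN=\tilde O(\varepsilon^{-2})$ samples, each spawning an $n(\varepsilon/\sqrt\gamma)=\tilde O(\varepsilon^{-1})$ subtree. The total is $\tilde O(\varepsilon^{-3})$, with logarithmic factors coming from the union bounds over $\delta$.
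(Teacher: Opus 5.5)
Your outline matches the paper's proof: the same fine-regime recurrence, the same $O(\log\log(1/\varepsilon))$-depth unrolling of the chain $\eta\mapsto c\,\eta^{1/3}$, and the same top-level Hoeffding plus $\ell_\infty$-Lipschitz step. The step you flag as the main obstacle is a genuine gap, and your proposal leaves it open. Do not expect to find the fix in the paper. It keeps the term $c_2\bigl(1+T(\eta/\sqrt\gamma)\bigr)$ in its recurrence and then drops it, arguing that it ``appears with only constant multiplicity'' and that $\eta/\sqrt\gamma$ is a constant rescaling of $\eta$. Substituting an ansatz $T(\eta')\le D\eta'^{-1}$ into that term gives $c_2\sqrt\gamma\,D\eta^{-1}$, so the induction closes only when $c_2\sqrt\gamma<1$. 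With $c_2=5$, this is not an artifact of the analysis. The recursion tree of \cref{alg:samplev2} is deterministic. A fine-regime call at tolerance $\eta$ has at least $5^{H-1}$ fine-regime descendants along the correction branches, where $H=\log(1/\eta)/\log(1/\sqrt\gamma)-O(1)$, and each makes at least one oracle call. Hence $T(\eta)\gtrsim\eta^{-2\log 5/\log(1/\gamma)}$, and the top-level cost is at least of order $\varepsilon^{-2-2\log 5/\log(1/\gamma)}$. That is not $\tilde O(\varepsilon^{-3})$ once $\gamma>1/25$. Borrowing from \citet{grill2019planning} does not help. SmoothCruiser's smooth regime spawns a single child at $\varepsilon/\sqrt\gamma$, which is exactly why the $\sqrt\gamma$ schedule gives a convergent geometric series there.

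Neither of your remedies proves the statement as written. Assuming $5\sqrt\gamma<1$ does work, because $\sum_h(5\sqrt\gamma)^h$ then converges. Strictly, $5^H$ is not absorbed into a constant; its exponent simply drops below $1$. But this restricts the theorem. Reweighting changes the algorithm, and it is the right repair: keep the linear child at $\eta/\sqrt\gamma$ and call the four quadratic children at $\theta\eta/\rho(\eta)\asymp\eta^{2/3}$. Given $(J,A,A')$, the two branches are i.i.d., so $\E[\Delta_1\Delta_2\mid J,A,A']=\bigl(\Delta_A-\Delta_{A'}+\gamma(\beta_A-\beta_{A'})\bigr)^2$, where $\abs{\beta_a}\le b$ are the child biases. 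The quadratic-term bias is therefore at most $(2\gamma\norm{\Delta}_\infty b+\gamma^2 b^2)/(\tau\lambda)$. With $b\le C'\theta\eta/\rho(\eta)$ and $\theta$ of order $\tau\lambda(1-\sqrt\gamma)/\gamma$, the bias recursion still contracts. The cost recurrence then has multiplicity one at $\eta/\sqrt\gamma$, plus four children costing $\tilde O(\eta^{-2/3})$. Separately, your claim that clipping only matters off the good event is unsupported. On the good event, $\widehat{\mathrm{Quad}}$ is a single draw whose conditional mean is $O(\rho^2/(\tau\lambda))$ but whose fluctuations are of constant order, up to $B^2/(\tau\lambda)$. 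The final $\mathrm{clip}_{[0,B]}$ can therefore shift the conditional mean by an amount not controlled by $\varepsilon$. The paper's bias lemma glosses over the same point.
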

\begin{proof}
    Detailed proof is in Appendix~\ref{proof_main_theorem}.
\end{proof}

\paragraph{Oracle complexity versus local computation.}
Our formal complexity measure is the number of generative-model calls, following SmoothCruiser and related planning work. 
The second-order correction does not require constructing the full Hessian and does not require running Sinkhorn iterations at each visited state. 
The closed form in \cref{prop:closed-form} reduces the local computation to evaluating the columnwise softmax probabilities $p_{\cdot j}(\hat Q_s)$; for a dense action-cost matrix this costs $O(K^2)$ arithmetic per visited state, and for $C\equiv0$ or structured/sparse $C$ it can be cheaper. 
The quadratic correction uses only a constant number of additional sampled actions and recursive calls. 
Thus the theorem should be read as an oracle-complexity result with additional finite-action arithmetic overhead, rather than as a claim of immediate large-scale computational superiority.

\section{OT-GapE: Matching \texorpdfstring{$\varepsilon^{-2}$}{epsilon\^{}-2} via Confidence-Bound Trajectory Planning}
\label{sec:ot-gape}

This section redesigns \textsc{OT-GapCruiser} into a \emph{confidence-bound trajectory planner} in the style of
MDP-GapE \citep{jonsson2020mdpgape}, in order to recover the \emph{bandit-optimal} fixed-confidence scaling
with a root-gap dependence of order $(\Delta \vee \varepsilon)^{-2}$.
The key change is conceptual: rather than calling a high-accuracy value-estimation subroutine
(e.g., \textsc{SampleV2}/\textsc{SecondOrderSmoothCruiser}) to build confidence intervals for $Q_0(a)$, we instead
maintain \emph{time-uniform} confidence bounds on rewards and transitions and propagate them through a
\emph{smooth Bellman aggregator} $F_s$.

\subsection{Setting and objective}

We consider a finite-horizon (possibly discounted) episodic MDP of horizon $H$ with rewards in $[0,1]$.
For each step $h\in\{1,\dots,H\}$ define the regularized optimal action-value functions
\begin{equation*}
  Q_h^\star(s,a)
  =
  \mathbb{E}\left[r_h(s,a) + \gamma\, V_{h+1}^\star(S')\right],
  S'\sim P_h(\cdot\mid s,a),
\end{equation*}
with terminal condition $V_{H+1}^\star(\cdot)\equiv 0$, and the regularized value recursion
\begin{equation}
  V_h^\star(s) = F_s\bigl(Q_h^\star(s,\cdot)\bigr).
  \label{eq:Vstar-agg}
\end{equation}
Here $F_s$ is a \emph{monotone smooth aggregator} (e.g.\ \textsc{LogSumExp} or our OT-smoothed aggregator
$F_s^{\mathrm{OT}}$), so that $Q\le \widetilde Q$ componentwise implies $F_s(Q)\le F_s(\widetilde Q)$.

We focus on \emph{fixed-confidence best-action identification at the root}: given a root state $s_0$,
return an action $\hat a$ such that
\begin{equation}
  \Pr\left( Q_1^\star(s_0,\hat a) \ge \max_{a\in\mathcal{A}} Q_1^\star(s_0,a) - \varepsilon \right)
  \ge 1-\delta.
  \label{eq:BAI-goal}
\end{equation}
Let $a^\star\in\arg\max_a Q_1^\star(s_0,a)$ and define root gaps
$\Delta(a)=Q_1^\star(s_0,a^\star)-Q_1^\star(s_0,a)$.

\subsection{Confidence bounds and propagation through a smooth aggregator}

Let $N_h^t(s,a)$ be the number of visits to $(s,a,h)$ up to episode $t$.
Let $\hat r_h^t(s,a)$ and $\hat p_h^t(\cdot\mid s,a)$ be the empirical reward and transition estimates.
We assume time-uniform confidence bounds (e.g.\ KL-based as in \citep{jonsson2020mdpgape}, or Hoeffding/Bernstein
with appropriate union bounds) providing, for each $(s,a,h)$, intervals
\begin{equation}
  \ell_h^t(s,a)\le r_h(s,a)\le u_h^t(s,a),
  \label{eq:reward-CI}
\end{equation}
and a transition confidence set
\begin{equation}
  C_h^t(s,a)\subseteq \Delta(\mathcal{S})
  \quad\text{such that}\quad
  P_h(\cdot\mid s,a)\in C_h^t(s,a)
  \label{eq:trans-CI}
\end{equation}
simultaneously for all $t$ with prob.\ $\ge 1-\delta$.
Given these objects, define optimistic/pessimistic bounds recursively as follows.
Initialize
\begin{equation*}
  U_{V,H+1}^t(s)=L_{V,H+1}^t(s)=0,\qquad \forall s.
\end{equation*}
For $h=H,H-1,\ldots,1$:
{\footnotesize
\begin{align}
  U_{Q,h}^t(s,a) &= u_h^t(s,a) + \gamma \max_{p\in C_h^t(s,a)} \textstyle\sum_{s'} p(s'|s,a)\, U_{V,h+1}^t(s'), \label{eq:UQ-def}\\
  L_{Q,h}^t(s,a) &= \ell_h^t(s,a) + \gamma \min_{p\in C_h^t(s,a)} \textstyle\sum_{s'} p(s'|s,a)\, L_{V,h+1}^t(s'), \label{eq:LQ-def}\\
  U_{V,h}^t(s) &= F_s(U_{Q,h}^t(s,\cdot)), \quad L_{V,h}^t(s) = F_s(L_{Q,h}^t(s,\cdot)). \label{eq:UVLV-def}
\end{align}}
By monotonicity of $F_s$, if $L_{Q,h}^t(s,\cdot)\le Q_h^\star(s,\cdot)\le U_{Q,h}^t(s,\cdot)$ componentwise,
then $L_{V,h}^t(s)\le V_h^\star(s)\le U_{V,h}^t(s)$ as well.

\subsection{Soft optimistic policies from \texorpdfstring{$\nabla F_s$}{nabla F\_s}}

A distinctive feature in the regularized setting is that the gradient of the aggregator defines a natural
\emph{soft} policy. We therefore propose to sample non-root actions using the \emph{smooth optimistic policy}
\begin{equation}
  \pi_{h}^t(\cdot\mid s)
  =
  \nabla F_s\bigl(U_{Q,h}^t(s,\cdot)\bigr).
  \label{eq:soft-optimistic-policy}
\end{equation}
For $F_s=\mathrm{LogSumExp}_\lambda$, \eqref{eq:soft-optimistic-policy} is a Boltzmann policy over optimistic
$Q$-bounds; for $F_s=F_s^{\mathrm{OT}}$, it is the OT-induced policy.
Optionally, to preserve the \emph{greedy optimistic} behavior used in MDP-GapE \citep{jonsson2020mdpgape} while
retaining smooth exploration, one may use a mixture
\begin{align}
  \tilde\pi_h^t(\cdot\mid s)
  &=
  (1-\zeta)\,\delta_{a_h^t(s)}
  + \zeta\,\pi_h^t(\cdot\mid s),\\
  a_h^t(s) &\in \arg\max_{a\in\mathcal{A}} U_{Q,h}^t(s,a),
  \label{eq:mixture-policy}
\end{align}
for a small constant $\zeta\in(0,1)$.

\subsection{Algorithm: OT-GapE (root UGapE + trajectory optimism)}

At the root, we adopt the same \emph{best guess vs.\ challenger} principle as MDP-GapE \citep{jonsson2020mdpgape}.
Let
\begin{align}
  b_t \in \arg\max_{a\in\mathcal{A}} L_{Q,1}^t(s_0,a),
  c_t \in \arg\max_{a\in\mathcal{A}\setminus\{b_t\}} U_{Q,1}^t(s_0,a),
  \label{eq:best-challenger}
\end{align}
and stop when $U_{Q,1}^t(s_0,c_t)-L_{Q,1}^t(s_0,b_t)\le \varepsilon$.

\begin{algorithm}[t]
\caption{\textsc{OT-GapE}$(s_0,\varepsilon,\delta)$: matching $\varepsilon^{-2}$ via confidence-bound trajectories}
\label{alg:ot-gape}
\small
\begin{algorithmic}[1]
\Require root state $s_0$, accuracy $\varepsilon$, failure prob.\ $\delta$, horizon $H$, aggregator $\{F_s\}$, discount $\gamma$
\State Initialize counts and empirical estimates $\{N_h^0,\hat r_h^0,\hat p_h^0\}$.
\For{$t=1,2,\ldots$} \Comment{episode index}
  \State Build confidence bounds $\{\ell_h^t,u_h^t,C_h^t\}$ from data and $\delta$-schedule.
  \State Compute $(U_{Q,h}^t,L_{Q,h}^t,U_{V,h}^t,L_{V,h}^t)$ by \eqref{eq:UQ-def}--\eqref{eq:UVLV-def}.
  \State Compute $b_t,c_t$ by \eqref{eq:best-challenger}.
  \If{$U_{Q,1}^t(s_0,c_t)-L_{Q,1}^t(s_0,b_t)\le \varepsilon$}
     \State \Return $\hat a \gets b_t$
  \EndIf
  \State Select root action
  $A_1 \in \arg\max_{a\in\{b_t,c_t\}} \big(U_{Q,1}^t(s_0,a)-L_{Q,1}^t(s_0,a)\big)$ \Comment{UGapE-style}
  \State $S_1 \gets s_0$
  \For{$h=1,\ldots,H$}
     \State Query $(R_h,S_{h+1})\gets \textsc{Oracle}(S_h,A_h)$
     \State Update $(N_h^t,\hat r_h^t,\hat p_h^t)$ for $(S_h,A_h,h)$ using $(R_h,S_{h+1})$
     \If{$h < H$}
        \State Sample $A_{h+1}\sim \tilde\pi_{h+1}^t(\cdot\mid S_{h+1})$ \Comment{use \eqref{eq:soft-optimistic-policy} or \eqref{eq:mixture-policy}}
     \EndIf
     \State $S_h\gets S_{h+1}$
  \EndFor
\EndFor
\end{algorithmic}
\end{algorithm}

\subsection{Guarantee: matching the \texorpdfstring{$\varepsilon^{-2}$}{epsilon\^{}-2} gap exponent}

The analysis follows the fixed-confidence template of MDP-GapE \citep{jonsson2020mdpgape}:
define a high-probability event on which all reward and transition confidence sets are simultaneously valid,
establish by backward induction that $L_{Q,h}^t\le Q_h^\star\le U_{Q,h}^t$, and conclude correctness from the
root stopping rule.

\begin{theorem}[Fixed-confidence BAI with $\boldsymbol{\varepsilon^{-2}}$ gap dependence]
\label{thm:ot-gape-gap}
Assume the confidence bounds \eqref{eq:reward-CI}--\eqref{eq:trans-CI} hold uniformly over time with probability
at least $1-\delta$, and that the optimistic/pessimistic recursions
\eqref{eq:UQ-def}--\eqref{eq:UVLV-def} are computed exactly (e.g.\ under a finite-support assumption as in
\citep{jonsson2020mdpgape}). Then \textsc{OT-GapE} returns an $\varepsilon$-optimal root action in the sense of
\eqref{eq:BAI-goal} with probability at least $1-\delta$.
Moreover, its (instance-dependent) sample complexity satisfies
\begin{equation}
  n(\varepsilon,\delta)
  =
  \tilde O\left(
    H\sum_{a\in\mathcal{A}}
    \frac{\mathsf{C}(H,K,\gamma)}{(\Delta(a)\vee \varepsilon)^2}
  \right),
  \label{eq:ot-gape-eps-2}
\end{equation}
where $\mathsf{C}(H,K,\gamma)$ captures horizon/branching factors induced by the transition confidence sets
(similar to the factors appearing in MDP-GapE), and $\tilde O(\cdot)$ hides polylogarithmic terms in
$1/\varepsilon$ and $1/\delta$.
In particular, the dependence on $\varepsilon$ (and on gaps) matches the bandit-optimal exponent $2$.
\end{theorem}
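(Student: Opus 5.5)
We follow the MDP-GapE template: first correctness on a good event, then a sample-complexity bound driven by how fast the root confidence widths shrink along the trajectories actually sampled. Let $\mathcal{E}$ be the event on which \eqref{eq:reward-CI}--\eqref{eq:trans-CI} hold simultaneously for all $(s,a,h,t)$; by assumption $\Pr(\mathcal{E})\ge1-\delta$.

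\textbf{Correctness.} On $\mathcal{E}$ we show by backward induction on $h$ that $L_{Q,h}^t\le Q_h^\star\le U_{Q,h}^t$ and $L_{V,h}^t\le V_h^\star\le U_{V,h}^t$.
\begin{itemize}
\item The base case $h=H+1$ is trivial.
\item The $Q$-step uses $\ell\le r\le u$ together with $P_h(\cdot\mid s,a)\in C_h^t(s,a)$. The $\max$ or $\min$ over $p$ then dominates, or is dominated by, the true expectation of the inductively bounded $V$.
\item The $V$-step uses monotonicity of $F_s$. For $F^{\mathrm{OT}}_s$ this holds because its gradient lies in $\DeltaA$ by \cref{prop:grad-hess}.
\end{itemize}
At stopping, $U_{Q,1}(s_0,c_t)-L_{Q,1}(s_0,b_t)\le\varepsilon$. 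If $a^\star=b_t$ we are done. Otherwise
\[
Q_1^\star(s_0,a^\star)\le U_{Q,1}(s_0,a^\star)\le U_{Q,1}(s_0,c_t)\le L_{Q,1}(s_0,b_t)+\varepsilon\le Q_1^\star(s_0,b_t)+\varepsilon,
\]
which gives \eqref{eq:BAI-goal}.

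\textbf{Sample complexity, step 1: width recursion.} Write $D_h^t(s,a)=U_{Q,h}^t-L_{Q,h}^t$. Since $\nabla F_s\in\DeltaA$, the mean-value theorem gives
\[
U_{V,h}-L_{V,h}=\langle\nabla F_s(\xi),U_{Q,h}-L_{Q,h}\rangle
\]
for some $\xi$ between $L_{Q,h}$ and $U_{Q,h}$. This is a convex combination of action widths. We want to replace $\nabla F_s(\xi)$ by the sampling policy $\pi_h^t=\nabla F_s(U_{Q,h})$, and this is the main obstacle. The replacement error is bounded by smoothness, $\lVert\nabla F_s(\xi)-\nabla F_s(U)\rVert\le L\,\lVert D\rVert$, so it is second order in the widths. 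I would try absorbing it as $L\,\lVert D\rVert_\infty^2$, which is smaller than the widths once they are $O(1/L)$. If that fails, I would fall back on the mixture policy \eqref{eq:mixture-policy}, whose weight $\zeta/K$ on every action lets us change measure at a $K/\zeta$ cost, absorbed into $\mathsf{C}(H,K,\gamma)$. Unrolling the recursion, $D_1^t(s_0,a)$ is bounded by the expected sum, along a trajectory drawn from $\tilde\pi^t$ starting with $a$, of $\gamma^{h-1}$ times the local confidence width. That local width is of order $\sqrt{\log(1/\delta)/N_h^t}$, with the transition-set contribution multiplied by the value range, as in MDP-GapE.

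\textbf{Sample complexity, step 2: root UGapE argument.} If the algorithm has not stopped at round $t$, the standard UGapE lemma shows that the selected root action $A_1$ satisfies $D_1^t(s_0,A_1)\ge(\Delta(A_1)\vee\varepsilon)/2$, up to constants. Combining this with step 1 and summing over $t$, a pigeonhole/elliptical-type argument over visit counts, $\sum_t 1/\sqrt{N_h^t}$ weighted by the visitation probabilities, bounds the number of episodes in which $A_1=a$ by $\tilde O\big(\mathsf{C}(H,K,\gamma)/(\Delta(a)\vee\varepsilon)^2\big)$. Here $\mathsf{C}(H,K,\gamma)$ collects the horizon and branching factors exactly as in MDP-GapE. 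Since each episode uses $H$ oracle calls, summing over $a$ yields \eqref{eq:ot-gape-eps-2}.
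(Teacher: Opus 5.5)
Your correctness argument is the same as the paper's: a backward-induction sandwich on $\mathcal E$ using monotonicity of $F_s$, followed by the stopping-rule chain. The genuine gap is in Step~1 of the complexity part, where neither of your ways of replacing $\nabla F_s(\xi)$ by the sampling policy works.

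\textbf{Why the absorption fails.} Absorbing the error as $L\norm{D}_\infty^2$ breaks down because $\norm{D}_\infty$ ranges over all actions. That includes actions the soft policy rarely samples. Their widths shrink only at the rate of their (possibly exponentially small) sampling probability, so they stay of the order of the value range for a long time. The error term is therefore a non-decaying constant, not something that shrinks like $1/\sqrt N$. Even once every width is $O(1/L)$, you only get $L\norm{D}_\infty^2\lesssim\norm{D}_\infty$. The recursion controls only $\ip{\pi_h^t,D}$, which can be much smaller.

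\textbf{Why the fallback fails.} It misreads \eqref{eq:mixture-policy}. That policy mixes the greedy action with $\pi_h^t$, not with the uniform distribution, so there is no $\zeta/K$ floor: $\zeta\pi_h^t(a)$ can be tiny. In any case you would need $\tilde\pi_h^t$ to dominate $\nabla F_s(\xi)$, not $\nabla F_s(U)$.

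\textbf{The fix.} The missing observation is that the second-order term has a sign. $F_s$ is convex: $F^{\mathrm{OT}}_s$ is a supremum of affine functions of $Q$, equivalently its Hessian in \cref{prop:grad-hess} is PSD, and the same holds for LogSumExp. Hence
\[
U_{V,h}^t(s)-L_{V,h}^t(s)=F_s(U_{Q,h}^t(s,\cdot))-F_s(L_{Q,h}^t(s,\cdot))\le\ip{\nabla F_s(U_{Q,h}^t(s,\cdot)),D_h^t(s,\cdot)}=\sum_a\pi_h^t(a\mid s)\,D_h^t(s,a),
\]
with no remainder at all. The soft optimistic policy \eqref{eq:soft-optimistic-policy} is exactly the right weighting. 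For the mixture, $\tilde\pi_h^t\ge\zeta\,\pi_h^t$ componentwise gives a $1/\zeta$ change of measure per level, i.e.\ a $\zeta^{-H}$ factor that goes into $\mathsf C$.

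\textbf{Comparison with the paper.} With that repair, your route differs genuinely from the paper's. The paper never writes a width recursion. It posits \eqref{eq:root-width-generic}, a deterministic bound $w_t(a)\le 2\sqrt{\mathsf C\log(c_0t/\delta)/(N^t(a)\vee 1)}$ in the root count alone, deferring the propagation to MDP-GapE. Squaring the UGapE inequality $w_t(A_1)\ge\tfrac12(\Delta(A_1)\vee\varepsilon)$ then gives $N^t(a)\le 16\,\mathsf C\log(c_0t/\delta)/(\Delta(a)\vee\varepsilon)^2$ directly, with no summation over episodes.

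Your route derives the width control instead of assuming it. However, it yields only a bound by expected bonuses along sampled trajectories, so Step~2 must do real work that your sketch only names:
\begin{itemize}
\item a pseudo-count summation restricted to rounds with $A_1=a$;
\item an extra concentration event tying $N_h^t(s,a')$ to accumulated visitation probabilities, which costs failure probability beyond the theorem's hypothesis;
\item a count of reachable $(s,a')$ pairs per depth, which is where the branching factor in $\mathsf C(H,K,\gamma)$ actually comes from.
\end{itemize}
That is precisely the content the paper's proof leaves inside its assumed root-width bound.
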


\begin{remark}[Role of $\nabla F_s$]
The gradient policy \eqref{eq:soft-optimistic-policy} provides a coherent exploration rule aligned with the
regularized Bellman operator: it concentrates on actions with large optimistic values while remaining smooth.
For $F_s=\mathrm{LogSumExp}_\lambda$, it recovers greedy optimism as $\lambda\to 0$; for $F_s^{\mathrm{OT}}$,
it yields a geometry-aware policy induced by the OT cost.
The mixture \eqref{eq:mixture-policy} can be used to retain the greedy optimistic trajectory behavior required
in the tightest versions of MDP-GapE-style analyses, while leveraging smooth sampling for stability and variance reduction.
\end{remark}

\section{Gap-Dependent Extensions}
\label{sec:gap-dependent}

The previous sections contain the main contribution: a second-order SmoothCruiser estimator with worst-case oracle complexity $\widetilde O(\varepsilon^{-3})$. 
We now briefly describe gap-dependent extensions. 
These results are not needed for \cref{thm:main}; their purpose is to show how the same OT-smoothed, second-order estimator can be combined with confidence-gap ideas when the root action gaps are favorable.

We now sketch a gap-dependent variant for root action selection, inspired by MDP-GapE.

Let $s_0$ be a fixed root state, and denote root action values by $Q_0(a)=Q_{s_0}(a)$, optimal value $V^\star=V(s_0)$ and gaps $\Delta(a)=V^\star - Q_0(a)$.
We assume access to SecondOrderSmoothCruiser as a subroutine that, when queried on $(s_0,\varepsilon)$, returns a value estimate with worst-case complexity $\tildeO(\varepsilon^{-3})$.

\paragraph{Algorithm outline.}
OT-GapCruiser maintains confidence intervals $(L_t(a),U_t(a))$ for $Q_0(a)$ at each round $t$ using empirical means and a refined deviation bound that exploits our variance-reduced estimator (below).
At each round it:
\begin{enumerate}[leftmargin=*]
\item selects an action $A_t$ according to a UGapE-style index (comparing upper bounds of plausible best actions to lower bounds of others);
\item calls SecondOrderSmoothCruiser on a root-centered estimation problem for $Q_0(A_t)$ at accuracy $\varepsilon_t$;
\item updates the confidence intervals via self-normalized concentration inequalities;
\item stops when the best action $\hat a_t$ satisfies $U_t(\hat a_t)-\max_{b\neq \hat a_t} L_t(b)\le\varepsilon$.
\end{enumerate}

\begin{theorem}[Gap-dependent bound]
\label{thm:gap}
Assume standard sub-Gaussian noise conditions for the value estimates (induced by bounded rewards and $\gamma<1$).
Then OT-GapCruiser returns an $\varepsilon$-optimal root action with probability at least $1-\delta$, and its sample complexity satisfies
\[
n_{\mathrm{gap}}(\varepsilon,\delta)
=
\tildeO\left(
\sum_{a\in\mathcal{A}}
\frac{1}{(\Delta(a)\vee\varepsilon)^p}
\right),
\]
where $p\in(2,4)$ depends on the curvature parameter (here $\beta=3$ yields $p\approx3$) and on the variance-reduction factor.
In particular, the instance-dependent exponent is strictly smaller than the worst-case exponent $4$ of SmoothCruiser.
\end{theorem}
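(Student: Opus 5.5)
The proof has two parts: correctness of the returned action, and the sample-complexity bound. Both run through the concentration properties of the \textsc{SecondOrderSmoothCruiser} estimates at the root, using the standard UGapE template.

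\emph{Correctness.} Define a good event $\mathcal E$ on which, for every round $t$ and every action $a$, the interval $[L_t(a),U_t(a)]$ contains $Q_0(a)$. Each call to \textsc{SecondOrderSmoothCruiser} at accuracy $\varepsilon_t$ returns an estimate whose bias is $\calO(\varepsilon_t)$ by \cref{lem:bias}. Its fluctuation is sub-Gaussian by the assumed noise condition; this is justified because clipping to $[0,B]$ and the bounded variance of the Taylor estimators, from \cref{thm:beta-tradeoff}, give a state-independent variance proxy. I will take the confidence width to be the bias bound plus a self-normalized sub-Gaussian deviation term with a time-uniform $\log(Kt^2/\delta)$ factor, so that a union bound gives $\Pr(\mathcal E)\ge1-\delta$. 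On $\mathcal E$, the stopping rule $U_t(\hat a_t)-\max_{b\ne\hat a_t}L_t(b)\le\varepsilon$ yields the $\varepsilon$-optimality of $\hat a_t$ directly, exactly as in UGapE: if $\hat a_t$ were more than $\varepsilon$-suboptimal, then $L_t(a^\star)\le Q_0(a^\star)$ and $Q_0(\hat a_t)\le U_t(\hat a_t)$ would contradict the stopping inequality.

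\emph{Sample complexity.} First I will prove the standard UGapE lemma: on $\mathcal E$, if action $a$ is selected at round $t$ and the algorithm has not stopped, then the width of its interval satisfies $\beta_t(a)\gtrsim\Delta(a)\vee\varepsilon$, up to constants. This bounds the number of pulls $T_a$ of action $a$. Next I convert pulls into oracle calls using the per-call cost of \textsc{SecondOrderSmoothCruiser}, namely $\tildeO(\varepsilon_t^{-3})$ from \cref{thm:main}. The accuracy schedule is the key design choice. I would set $\varepsilon_t$ proportional to the current width, i.e.\ $\varepsilon_t\asymp\beta_t(a)$, with the widths halved in doubling phases. Then the cost of action $a$ is dominated by the final phase, where the accuracy is $\asymp\Delta(a)\vee\varepsilon$ and only $\tildeO(1)$ calls are made at that accuracy. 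This gives a contribution of $\tildeO((\Delta(a)\vee\varepsilon)^{-3})$ per action. Summing over $a$ yields the bound with $p=3$ when $\beta=3$. For a general curvature order, the same argument gives $p=2+2/(\beta-1)$ via \cref{thm:beta-tradeoff}, which lies in $(2,4]$ and is strictly below $4$ for $\beta>2$. The geometric sums over phases cost only logarithmic factors.

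The main obstacle is making the confidence intervals valid when each sample is produced at a different accuracy $\varepsilon_t$, since the estimates are then not identically distributed. The biases vary across rounds and must be absorbed into the widths. My first attempt will be to discard the samples from earlier phases and use only the estimates from the current phase. This keeps each phase i.i.d., so the union bound over phases and actions costs only $\log\log$ factors. It also lets the self-normalized bound reduce to Hoeffding's inequality within each phase.
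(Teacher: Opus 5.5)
\textbf{Gap 1: your cost accounting and your confidence intervals rest on incompatible noise models.} You charge each call the cost of the full planner, $\tildeO(\varepsilon_t^{-3})$ from \cref{thm:main}. But you build the interval from a Hoeffding/sub-Gaussian deviation term whose variance proxy is state-independent, i.e.\ of order $B^2$ from clipping to $[0,B]$. With $N$ calls in a phase, that term is of order $B\sqrt{\log(Kt^2/\delta)/N}$, which stays of order $B$ or larger when $N=\tildeO(1)$. So $\tildeO(1)$ calls per phase cannot produce widths $\asymp\varepsilon_t$. Under that noise model, halving the width requires $N\asymp\varepsilon_t^{-2}$ calls at $\tildeO(\varepsilon_t^{-3})$ each, which gives $p=5$, not $3$. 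Relatedly, \cref{lem:bias} is about \textsc{SampleV2}, not the top-level routine; the latter outputs $F(\hat Q)$ and is not unbiased.

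The repair is to drop the Hoeffding layer and use what \cref{thm:main} actually guarantees. In phase $k$, make one call (on the root-centered problem for $Q_0(a)$) at accuracy $\varepsilon_k=2^{-k}$ and confidence $\delta_k=\delta/(2Kk^2)$. This gives $|\hat Q-Q_0(a)|\le\varepsilon_k$ with probability $1-\delta_k$. The interval $[\hat Q-\varepsilon_k,\hat Q+\varepsilon_k]$ then absorbs both bias and fluctuation, and $\log(1/\delta_k)$ enters only the cost.

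With this fix your route is valid and genuinely different from the paper's. The paper keeps constant-variance single root samples, each costing $\tildeO(r^{-\alpha})$ with $\alpha=2/(\beta-1)$ (the \textsc{SampleV2} cascade cost, not the $\varepsilon^{-3}$ of the full planner). It multiplies this by the UGapE sample count $\tildeO((\Delta(a)\vee\varepsilon)^{-2})$. You instead push the $\varepsilon^{-2}$ averaging inside each call. Your doubling schedule then handles the unknown scale $\Delta(a)\vee\varepsilon$ and the $\calO(r)$ bias of root samples explicitly. The paper's proof handles these only informally: it assumes centered root samples and fixes the call accuracy at $\Delta(a)\vee\varepsilon$ without saying how that scale is reached.

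\textbf{Gap 2: the correctness step fails for the stopping rule you took from the algorithm outline.} On the good event, $Q_0(\hat a_t)\le U_t(\hat a_t)$ and $\max_{b\neq\hat a_t}L_t(b)\le Q_0(a^\star)$ give only
$U_t(\hat a_t)-\max_{b\neq\hat a_t}L_t(b)\ge Q_0(\hat a_t)-Q_0(a^\star)$.
This lower bound is negative when $\hat a_t$ is suboptimal, so there is no contradiction. For example, give $\hat a_t$ and a third action tight, nearby intervals while $a^\star$ has a wide one: the rule fires and returns a bad action.

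You need the UGapE rule used in the paper's proof: $b_t\in\arg\max_a L_t(a)$, $c_t\in\arg\max_{a\neq b_t}U_t(a)$, stop when $U_t(c_t)-L_t(b_t)\le\varepsilon$, and return $b_t$. The relevant inclusions are then $U_t(a^\star)\ge Q_0(a^\star)$ and $L_t(b_t)\le Q_0(b_t)$. Whenever $b_t\neq a^\star$, $a^\star$ is a candidate for $c_t$, so $\Delta(b_t)\le U_t(c_t)-L_t(b_t)\le\varepsilon$.

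Your width lower bound $w_t(A_t)\ge\frac12(\Delta(A_t)\vee\varepsilon)$ likewise depends on sampling the wider of $b_t,c_t$. The paper proves it by cases on $A_t\in\{b_t,c_t\}$ and on whether $b_t=a^\star$. You should prove it the same way rather than invoke it as standard under a different rule.
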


The precise value of $p$ and constants depend on how aggressively we schedule accuracies $\varepsilon_t$ and on the variance bounds in \cref{sec:variance}.

\section{Discussion and Limitations}

We developed a curvature-aware view of generative-model planning with smooth Bellman backups. 
The central message is that the order of the local Taylor remainder controls the recursive planning cascade. 
A first-order Taylor model with quadratic remainder recovers the SmoothCruiser exponent $4$, while an estimable second-order model with cubic remainder yields exponent $3$.

The OT-smoothed Bellman operator provides a concrete way to realize this second-order regime. 
It has a closed form, a gradient policy, a Lipschitz Hessian, and a variance identity that makes the quadratic Taylor correction estimable without explicitly forming the Hessian. 
When the OT cost is zero, the normalized operator reduces to the standard entropy-regularized $\operatorname{LogSumExp}$ backup; when the cost is nonzero, it incorporates action geometry through the reference distribution and cost matrix.

Our guarantees are primarily oracle-complexity guarantees for finite action sets and fixed smoothing parameters. 
For entropy-regularized planning, the connection is exact by taking $C\equiv0$. 
For unregularized planning, the normalized OT value approximates the hard-max value with bias at most
$
\frac{\tau\|C\|_\infty+\tau\lambda\log K}{1-\gamma},
$
but choosing smoothing parameters as a function of $\varepsilon$ changes the constants hidden in the oracle-complexity bound. 
We therefore view the unregularized transfer as a regularization-bias tradeoff rather than as a parameter-free $\widetilde O(\varepsilon^{-3})$ theorem.

The gap-dependent algorithms are extensions of the main result. 
In particular, the confidence-bound OT-GapE analysis assumes exact propagation of optimistic and pessimistic bounds, which is appropriate for finite-support/tabular settings but would require approximation in continuous or very large state spaces. 
Developing practical large-scale implementations, sharper parameter-dependent bounds, and lower bounds for the curvature--complexity model are important directions for future work.
\section*{Impact Statement}

This paper presents work whose goal is to advance the field of Machine Learning, specifically in the area of planning algorithms with sample complexity guarantees. There are many potential societal consequences of our work, none which we feel must be specifically highlighted here.
\vspace{-.4cm}
\section*{Acknowledgments}
\vspace{-.2cm}
This work is funded by Hanoi University of Science and Technology (HUST) under Project No. T2024-TD-024.

\bibliographystyle{icml2026}
\bibliography{main}

\clearpage
\onecolumn

\appendix

\begin{center}
{\bf \Large Appendix of ``Second-Order Smooth Planning with Optimal-Transport Bellman Smoothing''}
\end{center}

\etocsetlocaltop.toc{part}

\etocsetnexttocdepth{subsection}

\localtableofcontents
\clearpage

\section*{Notation}
\addcontentsline{toc}{section}{Notation}
\begin{longtable}{ll}
\toprule
Symbol & Meaning \\
\midrule
$\mathcal A$ & Finite action set; $K=|\mathcal A|$. \\
$s$ & State. \\
$Q\in\mathbb R^K$ & Vector of action scores. \\
$\hat Q_s$ & Taylor expansion point. \\
$\Delta=Q_s-\hat Q_s$ & Taylor error direction. \\
$C\in\mathbb R^{K\times K}$ & Action-cost matrix; $C_{\cdot j}$ is its $j$th column. \\
$\mu_s$ & Reference distribution over actions/columns. \\
$\lambda,\tau$ & OT entropy and outer regularization parameters. \\
$\eta=\tau\lambda$ & Effective LogSumExp temperature. \\
$K_{ij}$ & OT Gibbs kernel $K_{ij}=\exp(-C_{ij}/\lambda)$. \\
$p_{\cdot j}(Q)$ & Columnwise softmax:
$p_{ij}(Q)\propto \exp((Q_i-\tau C_{ij})/(\tau\lambda))$. \\
$\pi(Q)$ & Gradient policy $\pi_i(Q)=\sum_j\mu_s(j)p_{ij}(Q)$. \\
\bottomrule
\end{longtable}
\vspace{1em}

\section{Curvature--complexity theory}
\label{app:curvature}

\subsection{Proof of Proposition~\ref{prop:beta-tolerance}}
\begin{proof}
Fix a state $s$ and write
\[
\Delta := Q_s-\hat Q_s \in \R^K,
\qquad K := |\mathcal{A}|.
\]
Recall that the $(\beta-1)$-order Taylor polynomial of $F_s$ at $\hat Q_s$ is
\[
T_{\beta-1}(Q_s;\hat Q_s)
:=
F_s(\hat Q_s)+\ip{\nabla F_s(\hat Q_s),(Q_s-\hat Q_s)} + \frac{1}{2} \ip{\nabla^2 F_s(\hat Q_s),(Q_s-\hat Q_s)^2} + ... + 
\frac{1}{(\beta-1)!} \ip{\nabla^{\beta-1} F_s(\hat Q_s),(Q_s-\hat Q_s)^{\beta-1}}.
\]

\paragraph{I. tolerance that makes the Taylor remainder $O(\varepsilon)$.}
By \cref{ass:curvature}(ii), for all $Q,\hat Q\in\R^K$,
\[
\Big|F_s(Q)-T_{\beta-1}(Q;\hat Q)\Big|
\le
c_\beta \norm{Q-\hat Q}_2^\beta.
\]
Applying this with $(Q,\hat Q)=(Q_s,\hat Q_s)$ gives
\begin{equation}
\Big|F_s(Q_s)-T_{\beta-1}(Q_s;\hat Q_s)\Big|
\le
c_\beta \norm{\Delta}_2^\beta.
\label{eq:beta-remainder-bound}
\end{equation}
Therefore, if we enforce
\[
\norm{\Delta}_2
\le
C\varepsilon^{1/\beta}
\quad\text{with}\quad
C:=c_\beta^{-1/\beta},
\]
then \eqref{eq:beta-remainder-bound} yields
\[
\Big|F_s(Q_s)-T_{\beta-1}(Q_s;\hat Q_s)\Big|
\le
c_\beta\,(C^\beta)\varepsilon
=
\varepsilon.
\]
This proves the claimed curvature-driven tolerance (and, more generally, any $C=\Theta(c_\beta^{-1/\beta})$ makes the remainder $O(\varepsilon)$).

\paragraph{II. samples needed to estimate $Q_s$ to that tolerance.}
Now suppose we estimate each coordinate $Q_s(a)$ by averaging bounded i.i.d.\ samples.
Concretely, for each action $a\in\mathcal{A}$ let $X^{(a)}$ be a bounded random variable with
\[
Q_s(a)=\E[X^{(a)}],
\qquad
X^{(a)}\in[0,B]\text{a.s.}
\]
(e.g., in the planning setting one may take $X^{(a)}:=R(s,a)+\gamma V(Z)$ and clip to $[0,B]$).
Given $N$ independent samples $X^{(a)}_1,\dots,X^{(a)}_N$ of $X^{(a)}$, define the empirical mean
\[
\hat Q_s(a):=\frac1N\sum_{i=1}^N X_i^{(a)}.
\]

\smallskip
\emph{Coordinate concentration via Hoeffding.}
For any $\eta>0$, Hoeffding's inequality gives, for each fixed $a$,
\begin{equation}
\Pr\left(\big|\hat Q_s(a)-Q_s(a)\big|\ge \eta\right)
\le
2\exp\left(-\frac{2N\eta^2}{B^2}\right).
\label{eq:hoeffding-per-action}
\end{equation}

\smallskip
\emph{From coordinate error to an $\ell_2$ bound.}
If $\max_{a\in\mathcal{A}}|\hat Q_s(a)-Q_s(a)|\le \eta$, then
\[
\norm{\hat Q_s-Q_s}_2^2
=
\sum_{a\in\mathcal{A}}\big(\hat Q_s(a)-Q_s(a)\big)^2
\le
\sum_{a\in\mathcal{A}}\eta^2
=
K\eta^2,
\]
hence
\begin{equation}
\norm{\hat Q_s-Q_s}_2 \le \sqrt{K}\eta.
\label{eq:l2-from-linf}
\end{equation}
Therefore,
\[
\Pr\left(\norm{\hat Q_s-Q_s}_2 \ge t\right)
\le
\Pr\left(\max_{a}|\hat Q_s(a)-Q_s(a)| \ge \frac{t}{\sqrt{K}}\right).
\]

\smallskip
\emph{Union bound over actions.}
Applying \eqref{eq:hoeffding-per-action} with $\eta=t/\sqrt{K}$ and union bounding over $K$ actions yields
\begin{align}
\Pr\left(\norm{\hat Q_s-Q_s}_2 \ge t\right)
&\le
\sum_{a\in\mathcal{A}}
\Pr\left(\big|\hat Q_s(a)-Q_s(a)\big|\ge \frac{t}{\sqrt{K}}\right) \notag\\
&\le
2K\exp\left(-\frac{2N}{B^2}\cdot \frac{t^2}{K}\right).
\label{eq:l2-hoeffding}
\end{align}

\smallskip
\emph{Choosing $t=C\varepsilon^{1/\beta}$.}
To ensure $\norm{\hat Q_s-Q_s}_2 \le C\varepsilon^{1/\beta}$ with (say) probability at least $1-\delta$,
it suffices by \eqref{eq:l2-hoeffding} to choose $N$ such that
\[
2K\exp\left(-\frac{2N}{B^2}\cdot \frac{C^2\varepsilon^{2/\beta}}{K}\right)\le \delta.
\]
Solving for $N$ gives the explicit sufficient condition
\begin{equation}
N
\ge
\frac{B^2K}{2C^2}
\varepsilon^{-2/\beta}
\log\Big(\frac{2K}{\delta}\Big).
\label{eq:N-eps-beta}
\end{equation}
Thus, up to the (standard) logarithmic factor and constants depending on $B,K$ and $C$,
the per-action sample cost scales as
\[
N(\varepsilon)=\Theta(\varepsilon^{-2/\beta}).
\]

\paragraph{(Optional) Why the $\varepsilon^{-2/\beta}$ dependence is tight.}
Even in the scalar case $K=1$, estimating the mean of a bounded random variable to accuracy
$t=\Theta(\varepsilon^{1/\beta})$ with constant success probability requires $\Omega(t^{-2})$
samples in general (e.g., by a two-point/Bernoulli testing argument), which implies a lower bound
$\Omega(\varepsilon^{-2/\beta})$. Hence the exponent $2/\beta$ cannot be improved by
any method that relies only on averaging bounded samples.
\end{proof}

\subsection{Proof of Theorem~\ref{thm:beta-tradeoff}}

\begin{proof}
We give a cost analysis that is deliberately \emph{algorithm-agnostic} but matches any
SmoothCruiser-type scheme that (i) builds a local Taylor approximation of $F_s(Q_s)$ around a
Monte Carlo baseline $\hat Q_s$, (ii) controls the Taylor remainder via the curvature assumption,
and (iii) obtains $Q_s$ only through one-step rollouts and \emph{recursive} value calls on next states.

Throughout, $K:=|\mathcal{A}|$ is the number of actions, and all constants may depend on
$K$, $\gamma$, the reward/value bounds, and the curvature constant $c_\beta$, but never on
$\varepsilon$ or $\delta$. We suppress such constants whenever they do not affect exponents.

\paragraph{What exactly is being counted.}
It is convenient to separate two layers that are standard in this literature:

\begin{itemize}[leftmargin=*]
\item A \emph{single-sample} routine, call it $\textsc{Sample}_\beta(s,\varepsilon)$, which returns a random variable
$Y_{s,\varepsilon}$ that is a (nearly) unbiased proxy for $V(s)$:
\begin{equation}
\label{eq:single-sample-bias-var}
\big|\E[Y_{s,\varepsilon}] - V(s)\big| \le c\varepsilon,
\qquad
\Var(Y_{s,\varepsilon}) \le \sigma^2,
\end{equation}
for some \emph{state-independent} constants $c,\sigma^2$.  The theorem assumes
the Taylor-term estimators are unbiased and have bounded variance, which is exactly what yields
\eqref{eq:single-sample-bias-var}.  Crucially, $\textsc{Sample}_\beta$ is allowed to be
\emph{random} and to have constant variance; we will reduce that variance at the \emph{outer} layer.

\item A \emph{high-probability planner} $\textsc{Plan}_\beta(s,\varepsilon,\delta)$ that calls
$\textsc{Sample}_\beta$ independently $m$ times and aggregates (e.g.\ by averaging or
median-of-means) to achieve probability $1-\delta$.
This is where the classical $\varepsilon^{-2}$ factor enters.
\end{itemize}

\smallskip
We now define the two complexity measures that appear in the theorem.

\begin{itemize}[leftmargin=*]
\item $C_\beta(\varepsilon)$ denotes the worst-case (over states) total number of
\emph{value-estimation calls}, i.e.\ recursive invocations of $\textsc{Sample}_\beta(\cdot,\cdot)$,
triggered by a \emph{single} call $\textsc{Sample}_\beta(s,\varepsilon)$ (including all descendants in the recursion tree).
This quantity depends only on $\varepsilon$ (up to polylogs), consistent with the theorem statement.
\item $n_\beta(\varepsilon,\delta)$ denotes the worst-case total number of \emph{oracle calls}
to the generative model made by $\textsc{Plan}_\beta(s,\varepsilon,\delta)$.
\end{itemize}

\paragraph{I. The tolerance schedule implies a specific recursion map.}
Fix a state $s$ and write $V(s)=F_s(Q_s)$ with $Q_s\in\R^K$.
Let $\hat Q_s$ be the baseline point used by the Taylor approximation inside $\textsc{Sample}_\beta$.
By \cref{prop:beta-tolerance}, to make the Taylor remainder $O(\varepsilon)$ it is sufficient to enforce
\begin{equation}
\label{eq:tolerance-rho}
\|Q_s-\hat Q_s\|_2 \le \rho(\varepsilon),
\qquad
\rho(\varepsilon):=C\varepsilon^{1/\beta},
\end{equation}
where $C=\Theta(c_\beta^{-1/\beta})$ depends only on the curvature constant.

How is $\hat Q_s$ obtained? By assumption, the planner has only black-box access to $Q_s$,
so $\hat Q_s$ is formed by Monte Carlo averaging of bounded one-step returns of the form
\[
X^{(a)} = R(s,a)+\gamma\,V(Z),
\qquad Z\sim P(\cdot\mid s,a).
\]
But $V(Z)$ is unknown, so each such sample is obtained by a recursive value call:
we query the oracle once to obtain $(R,Z)$ and then call the planner on $Z$.
Thus, \emph{each Monte Carlo draw used to estimate $Q_s$ triggers one recursive value-estimation call}.

To ensure the induced error in $Q_s(a)=\E[X^{(a)}]$ is at most $O(\rho(\varepsilon))$,
it is enough to estimate $V(Z)$ to accuracy $\Theta(\rho(\varepsilon))$,
because the discount factor $\gamma<1$ is a constant and can be absorbed.
Concretely, if a recursive call returns $\widehat V(Z)$ with bias $O(\varepsilon')$,
then the induced bias in $X^{(a)}$ is $O(\gamma\varepsilon')$, so choosing
$\varepsilon'=\Theta(\rho(\varepsilon))$ keeps the resulting contribution at the scale $\rho(\varepsilon)$.
We therefore introduce the shorthand
\begin{equation}
\label{eq:eps-next}
\varepsilon_+ := c\rho(\varepsilon)
=
cC\varepsilon^{1/\beta},
\end{equation}
where $c>0$ is a constant that may depend on $\gamma$ and on how the implementation allocates bias/variance,
but not on $\varepsilon$.

\paragraph{II. Deriving the key recurrence for $C_\beta(\varepsilon)$.}
Consider one execution of $\textsc{Sample}_\beta(s,\varepsilon)$.

\smallskip
\emph{(i) Cost to build $\hat Q_s$.}
To enforce \eqref{eq:tolerance-rho} with constant success probability,
\cref{prop:beta-tolerance} shows that estimating $Q_s$ to $\ell_2$ tolerance $\rho(\varepsilon)$
requires
\begin{equation}
\label{eq:N-rho}
N_Q(\varepsilon) = \Theta(\rho(\varepsilon)^{-2})
=
\Theta(\varepsilon^{-2/\beta})
\end{equation}
samples \emph{per action}, up to log factors (which we hide in $\tilde O(\cdot)$).
Thus, the total number of one-step rollout samples needed to build $\hat Q_s$ is
\[
K\,N_Q(\varepsilon) = \Theta\big(K\varepsilon^{-2/\beta}\big).
\]

\smallskip
\emph{(ii) Each such sample triggers exactly one recursive value call.}
As argued above, each rollout sample for $Q_s(a)$ requires a recursive value estimate at the next state $Z$
with accuracy parameter $\varepsilon_+=\Theta(\varepsilon^{1/\beta})$ (cf.\ \eqref{eq:eps-next}).
By definition of $C_\beta(\cdot)$, one such recursive call spawns (in expectation / worst-case)
$C_\beta(\varepsilon_+)$ total value-estimation calls in its entire subtree.

Hence, the \emph{dominant} contribution to the number of value calls inside
$\textsc{Sample}_\beta(s,\varepsilon)$ is
\[
\Theta\big(K\varepsilon^{-2/\beta}\big)\cdot C_\beta(\varepsilon_+).
\]

\smallskip
\emph{(iii) The Taylor-term computation is lower order for $C_\beta$.}
By assumption, the Taylor terms (gradient/Hessian/$\dots$ up to order $\beta-1$) are estimated
by unbiased estimators with state-independent bounded variance.
This means $\textsc{Sample}_\beta$ does \emph{not} need to run an inner $\varepsilon^{-2}$ loop
to accurately compute those terms; instead, it outputs a single unbiased draw whose variance is constant,
and the \emph{outer} layer handles variance reduction.
Operationally, this adds at most a constant number of additional rollouts/recursive calls
(or can be implemented using the same rollouts used to build $\hat Q_s$),
so it changes only constants, not exponents.

\smallskip
Putting these points together, we obtain the fundamental recurrence (up to constant factors):
\begin{equation}
\label{eq:C-rec}
C_\beta(\varepsilon)
\le
1 +
A\varepsilon^{-2/\beta} C_\beta\big(c'\varepsilon^{1/\beta}\big),
\end{equation}
where $A=\Theta(K)$ and $c':=cC$ are constants, and the leading $1$ accounts for the root call itself.
(Any additional constant overhead can be absorbed into the $+1$ term.)

\paragraph{III. Solving the recurrence and identifying the exponent $\alpha_\beta$.}
We now show that \eqref{eq:C-rec} implies
\[
C_\beta(\varepsilon) = \tilde O\big(\varepsilon^{-\alpha_\beta}\big),
\qquad
\alpha_\beta=\frac{2}{\beta-1}.
\]

\smallskip
\emph{(i) Define the tolerance sequence.}
Let $\varepsilon_0:=\varepsilon$ and define recursively
\begin{equation}
\label{eq:eps-seq}
\varepsilon_{t+1} := c'\varepsilon_t^{1/\beta}.
\end{equation}
Because $1/\beta<1$, this sequence \emph{increases} quickly: starting from tiny $\varepsilon_0$,
it reaches a constant in $O(\log\log(1/\varepsilon_0))$ steps.
Formally, taking logs gives
\[
\log\frac{1}{\varepsilon_{t+1}}
=
\frac{1}{\beta}\log\frac{1}{\varepsilon_t} + O(1),
\]
so after $t$ steps, $\log(1/\varepsilon_t)$ shrinks by a factor $\beta^{-t}$ up to additive constants.
Let $T$ be the first index such that $\varepsilon_T\ge \varepsilon_{\rm base}$,
where $\varepsilon_{\rm base}\in(0,1)$ is a fixed constant threshold at which the implementation
stops recursing (e.g.\ switches to a direct bounded-variance estimator).
Then
\begin{equation}
\label{eq:depth}
T = O(\log\log(1/\varepsilon)).
\end{equation}
Moreover, $C_\beta(\varepsilon_T)=O(1)$ since the base routine uses no further recursion.

\smallskip
\emph{(ii) Unroll the recurrence.}
Apply \eqref{eq:C-rec} at $\varepsilon=\varepsilon_0$, then at $\varepsilon_1$, etc.
Ignoring additive constants for a moment and focusing on the dominant term,
\[
C_\beta(\varepsilon_0)
\lesssim
A\varepsilon_0^{-2/\beta}\,C_\beta(\varepsilon_1)
\lesssim
A^2\varepsilon_0^{-2/\beta}\varepsilon_1^{-2/\beta}\,C_\beta(\varepsilon_2)
\lesssim\cdots\lesssim
A^T\Big(\prod_{t=0}^{T-1}\varepsilon_t^{-2/\beta}\Big)\,C_\beta(\varepsilon_T).
\]
If we keep the additive $+1$ terms, we obtain the standard unrolling identity
\begin{equation}
\label{eq:unroll}
C_\beta(\varepsilon_0)
\le
\sum_{j=0}^{T-1}
\Bigg(
\prod_{t=0}^{j-1} A\varepsilon_t^{-2/\beta}
\Bigg)
+
\Bigg(
\prod_{t=0}^{T-1} A\varepsilon_t^{-2/\beta}
\Bigg) C_\beta(\varepsilon_T),
\end{equation}
with the convention that an empty product equals $1$.

\smallskip
\emph{(iii) Compute the $\varepsilon$-exponent in the product.}
The key is to understand $\prod_{t=0}^{T-1}\varepsilon_t^{-2/\beta}$.
From \eqref{eq:eps-seq}, one can write $\varepsilon_t$ explicitly as
\begin{equation}
\label{eq:eps-explicit}
\varepsilon_t
=
(c')^{\,1+1/\beta+\cdots+1/\beta^{t-1}}\varepsilon^{1/\beta^t}
=
(c')^{\frac{1-(1/\beta)^t}{1-1/\beta}}\varepsilon^{1/\beta^t}.
\end{equation}
Therefore,
\[
\varepsilon_t^{-2/\beta}
=
(c')^{-\frac{2}{\beta}\cdot \frac{1-(1/\beta)^t}{1-1/\beta}}
\varepsilon^{-\frac{2}{\beta}\cdot \frac{1}{\beta^t}}
=
(c')^{-\Theta(1)}\varepsilon^{-2/\beta^{t+1}}.
\]
Multiplying over $t=0,\dots,T-1$ yields
\begin{equation}
\label{eq:prod-eps}
\prod_{t=0}^{T-1}\varepsilon_t^{-2/\beta}
=
(c')^{-\Theta(T)}
\varepsilon^{-\sum_{t=0}^{T-1} 2/\beta^{t+1}}
=
(c')^{-\Theta(T)}
\varepsilon^{-\frac{2(1-(1/\beta)^T)}{\beta-1}}.
\end{equation}
As $T\to\infty$, the geometric term $(1/\beta)^T$ vanishes, so the limiting exponent is
\begin{equation}
\label{eq:alpha}
\alpha_\beta
=
\frac{2}{\beta-1}.
\end{equation}

\smallskip
\emph{(iv) Identify the polylog factors.}
The remaining multiplicative terms in \eqref{eq:unroll} are:
\begin{itemize}[leftmargin=*]
\item $A^T = \exp(T\log A)$. Using $T=O(\log\log(1/\varepsilon))$ from \eqref{eq:depth},
\[
A^T = \big(\log(1/\varepsilon)\big)^{O(1)} \qquad\text{(a polylog factor)}.
\]
\item $(c')^{\Theta(T)}$ from \eqref{eq:prod-eps}, which is also a polylog factor for the same reason.
\end{itemize}
The sum in \eqref{eq:unroll} is dominated by its last term up to another polylog factor
(since the products grow rapidly as $\varepsilon_t$ decreases backward), and $C_\beta(\varepsilon_T)=O(1)$.
Combining these observations with \eqref{eq:prod-eps} gives
\[
C_\beta(\varepsilon)
=
\tilde O\Big(\varepsilon^{-\frac{2}{\beta-1}}\Big),
\]
proving item~(1).

\paragraph{IV. From value-call complexity to oracle-call complexity.}
Now let $T_\beta(\varepsilon)$ denote the worst-case expected number of \emph{oracle} calls made by a single
execution of $\textsc{Sample}_\beta(s,\varepsilon)$.
At tolerance $\varepsilon$, the routine makes $\Theta(K\varepsilon^{-2/\beta})$ one-step rollouts to build $\hat Q_s$,
and each rollout incurs \emph{one} oracle query at $(s,a)$ plus the oracle calls used in the recursive value call.
Thus, up to constants, $T_\beta$ satisfies the analogous recurrence
\begin{equation}
\label{eq:T-rec}
T_\beta(\varepsilon)
\le
B\varepsilon^{-2/\beta}\Big(1+T_\beta(c'\varepsilon^{1/\beta})\Big) + O(1),
\end{equation}
for some constant $B=\Theta(K)$.
Recurrences \eqref{eq:T-rec} and \eqref{eq:C-rec} have the same $\varepsilon$-scaling structure,
so the same unrolling argument yields
\[
T_\beta(\varepsilon) = \tilde O\Big(\varepsilon^{-\frac{2}{\beta-1}}\Big).
\]
(Informally: every node in the recursion tree contributes at least one oracle call,
so $T_\beta(\varepsilon)$ is within constant/polylog factors of $C_\beta(\varepsilon)$.)

\paragraph{V. Outer concentration contributes the additional $\varepsilon^{-2}$ factor.}
Finally, we convert the single-sample routine into a high-probability estimator.
Let $Y_1,\dots,Y_m$ be i.i.d.\ outputs of $\textsc{Sample}_\beta(s,\varepsilon/2)$.
By \eqref{eq:single-sample-bias-var}, each has bias at most $c(\varepsilon/2)$ and variance at most $\sigma^2$.
Choose
\begin{equation}
\label{eq:m}
m = \Theta\Big(\sigma^2\varepsilon^{-2}\log(1/\delta)\Big).
\end{equation}
Then by a standard median-of-means (or Bernstein/Hoeffding if $Y_i$ are clipped to be bounded),
the aggregate $\widehat V(s)$ satisfies $|\widehat V(s)-V(s)|\le \varepsilon$ with probability at least $1-\delta$.
This step introduces the canonical Monte Carlo factor $\varepsilon^{-2}$ and a $\log(1/\delta)$ factor,
both of which are absorbed into the $\tilde O(\cdot)$ notation used in the theorem.

Therefore, the total number of oracle calls is
\[
n_\beta(\varepsilon,\delta)
=
m\cdot T_\beta(\varepsilon/2)
=
\tilde O\Big(\varepsilon^{-2}\cdot \varepsilon^{-\frac{2}{\beta-1}}\Big)
=
\tilde O\Big(\varepsilon^{-\big(2+\frac{2}{\beta-1}\big)}\Big),
\]
which proves item~(2).

\paragraph{VI. Checking the special cases.}
Plugging $\beta=2$ into \eqref{eq:alpha} gives $\alpha_2=2$, hence
$n_2(\varepsilon,\delta)=\tilde O(\varepsilon^{-(2+2)})=\tilde O(\varepsilon^{-4})$ (SmoothCruiser).
Plugging $\beta=3$ gives $\alpha_3=1$, hence
$n_3(\varepsilon,\delta)=\tilde O(\varepsilon^{-(2+1)})=\tilde O(\varepsilon^{-3})$.
\end{proof}

\section{OT-smoothed aggregator: closed form and derivatives}
\label{app:ot-derivatives}

\subsection{Proof of Proposition~\ref{prop:closed-form}}

\begin{proof}
Fix a state $s$ and write $\vmu \equiv \vmu_s\in\Delta(\mathcal{A})$ with $K:=|\mathcal{A}|$.
Recall the definitions
\begin{align*}
W_\lambda(\vpi,\vmu)
&=
\min_{\Gamma\in\Pi(\vpi,\vmu)}
\Big\{
\ip{\Gamma,C}
+
\lambda \sum_{i,j=1}^K \Gamma_{ij}(\log \Gamma_{ij}-1)
\Big\},\\
F^{\mathrm{OT}}_s(Q)
&=
\max_{\vpi\in\Delta(\mathcal{A})}
\Big\{
\ip{\vpi,Q} - \tau W_\lambda(\vpi,\vmu)
\Big\}.
\end{align*}
Under \cref{ass:ot-reg} (in particular, $\tau,\lambda>0$, $C$ is bounded, and $\vmu$ has strictly positive entries),
all quantities below are finite and the optimizers exist.

\paragraph{I. Rewrite the max--min as a single maximization.}
Plugging the definition of $W_\lambda$ into $F^{\mathrm{OT}}_s$ gives, for any $Q\in\R^K$,
\begin{align}
F^{\mathrm{OT}}_s(Q)
&=
\max_{\vpi\in\Delta(\mathcal{A})}
\Big\{
\ip{\vpi,Q} - \tau \min_{\Gamma\in\Pi(\vpi,\vmu)}
\Big(\ip{\Gamma,C}+\lambda\sum_{i,j}\Gamma_{ij}(\log\Gamma_{ij}-1)\Big)
\Big\}\notag\\
&=
\max_{\vpi\in\Delta(\mathcal{A})}
\max_{\Gamma\in\Pi(\vpi,\vmu)}
\Big\{
\ip{\vpi,Q}
-\tau\ip{\Gamma,C}
-\tau\lambda\sum_{i,j}\Gamma_{ij}(\log\Gamma_{ij}-1)
\Big\}.
\label{eq:ot-maxmax}
\end{align}
The second line uses the elementary identity $-\tau \min_x f(x)=\max_x(-\tau f(x))$ for $\tau>0$.

\paragraph{II. Eliminate the policy variable $\vpi$.}
Every $\Gamma\in\Pi(\vpi,\vmu)$ satisfies $\Gamma\1=\vpi$ and $\Gamma^\top\1=\vmu$, hence
\[
\ip{\vpi,Q}=\ip{\Gamma\1,Q}=\sum_{i=1}^K Q_i\Big(\sum_{j=1}^K\Gamma_{ij}\Big)
=\sum_{i,j=1}^K \Gamma_{ij}Q_i.
\]
In particular, for any fixed $\Gamma$ the value of $\ip{\vpi,Q}$ is determined uniquely by $\Gamma$,
because $\vpi=\Gamma\1$. Therefore maximizing over $\vpi$ in \eqref{eq:ot-maxmax} is redundant:
the feasible set of pairs $(\vpi,\Gamma)$ is exactly the set
\[
\{(\Gamma\1,\Gamma): \Gamma\ge 0,\ \Gamma^\top\1=\vmu\},
\]
and for any $\Gamma\ge 0$ with $\Gamma^\top\1=\vmu$, the induced row-sums $\vpi=\Gamma\1$ satisfy
$\vpi\ge 0$ and $\sum_i \pi_i = \sum_{i,j}\Gamma_{ij}=\sum_j\mu_j=1$, so indeed $\vpi\in\Delta(\mathcal{A})$.
Hence \eqref{eq:ot-maxmax} is equivalent to the \emph{single-level} problem
\begin{equation}
F^{\mathrm{OT}}_s(Q)
=
\max_{\Gamma\ge 0:\ \Gamma^\top\1=\vmu}
\Big\{
\ip{\Gamma\1,Q}
-\tau\ip{\Gamma,C}
-\tau\lambda\sum_{i,j}\Gamma_{ij}(\log\Gamma_{ij}-1)
\Big\}.
\label{eq:single-level-OT}
\end{equation}

\paragraph{III. existence/uniqueness and positivity of the maximizer.}
The feasible set $\{\Gamma\ge 0:\Gamma^\top\1=\vmu\}$ is nonempty (e.g.\ $\Gamma=\vpi\vmu^\top$ for any $\vpi$),
closed, and bounded because each entry satisfies $0\le\Gamma_{ij}\le\mu_j$.
Hence it is compact.
The objective in \eqref{eq:single-level-OT} is continuous on this compact set, so a maximizer exists.

Moreover, the term $-\tau\lambda\sum_{i,j}\Gamma_{ij}\log\Gamma_{ij}$ is strictly concave on the positive orthant,
and the remaining terms are linear in $\Gamma$, so the objective is strictly concave on the feasible affine slice.
Therefore the maximizer $\Gamma^\star(Q)$ is unique.
Because $\vmu$ has strictly positive entries, each column constraint $\sum_i\Gamma_{ij}=\mu_j>0$ forces
some mass in every column, and strict concavity of the negative-entropy term implies the unique maximizer
actually satisfies $\Gamma^\star_{ij}(Q)>0$ for all $i,j$ (otherwise one could increase entropy while preserving column sums).

\paragraph{IV. KKT conditions give the closed-form optimizer.}
Introduce Lagrange multipliers $\beta\in\R^K$ for the $K$ column-sum constraints
$g_j(\Gamma):=\sum_{i=1}^K\Gamma_{ij}-\mu_j=0$.
Define the Lagrangian
\[
\mathcal{L}(\Gamma,\beta)
=
\ip{\Gamma\1,Q}
-\tau\ip{\Gamma,C}
-\tau\lambda\sum_{i,j}\Gamma_{ij}(\log\Gamma_{ij}-1)
+
\sum_{j=1}^K \beta_j\Big(\sum_{i=1}^K\Gamma_{ij}-\mu_j\Big).
\]
Since $\Gamma^\star(Q)$ is strictly positive entrywise, the KKT stationarity conditions are simply
$\partial \mathcal{L}/\partial \Gamma_{ij}=0$ for all $i,j$ (no boundary complications).
Using $\frac{\partial}{\partial x}\,x(\log x-1)=\log x$, we obtain
\begin{align*}
0
=
\frac{\partial \mathcal{L}}{\partial \Gamma_{ij}}
&=
Q_i - \tau C_{ij} - \tau\lambda \log \Gamma_{ij} + \beta_j.
\end{align*}
Equivalently,
\[
\log \Gamma_{ij}
=
\frac{Q_i-\tau C_{ij}+\beta_j}{\tau\lambda},
\qquad\text{so}\qquad
\Gamma_{ij}
=
\exp\Big(\frac{Q_i}{\tau\lambda}\Big)\,
\exp\Big(-\frac{C_{ij}}{\lambda}\Big)\,
\exp\Big(\frac{\beta_j}{\tau\lambda}\Big).
\]
Define the kernel $K_{ij}:=\exp(-C_{ij}/\lambda)$ (as in the proposition),
$a_i(Q):=\exp(Q_i/(\tau\lambda))$, and $b_j(Q):=\exp(\beta_j/(\tau\lambda))$.
Then the optimizer has the multiplicative form
\begin{equation}
\Gamma^\star_{ij}(Q) = a_i(Q)\,K_{ij}\,b_j(Q).
\label{eq:Gamma-factor}
\end{equation}

\paragraph{V. Enforce the constraints and identify $D_j(Q)$.}
Imposing the column-sum constraints $\sum_i \Gamma^\star_{ij}(Q)=\mu_j$ in \eqref{eq:Gamma-factor} yields, for each $j$,
\[
\mu_j
=
\sum_{i=1}^K a_i(Q)K_{ij}b_j(Q)
=
b_j(Q)\sum_{i=1}^K \exp\Big(\frac{Q_i}{\tau\lambda}\Big)K_{ij}.
\]
Define exactly as in the proposition
\[
D_j(Q):=\sum_{i=1}^K \exp\Big(\frac{Q_i}{\tau\lambda}\Big)K_{ij}.
\]
Then $b_j(Q)=\mu_j/D_j(Q)$ and therefore the optimal coupling is explicitly
\begin{equation}
\Gamma^\star_{ij}(Q)
=
\mu_j\,
\frac{\exp(Q_i/(\tau\lambda))K_{ij}}{D_j(Q)}.
\label{eq:Gamma-star-explicit}
\end{equation}

\paragraph{VI. Plug the optimizer into the objective to get the closed form value.}
We now compute $F^{\mathrm{OT}}_s(Q)$ by evaluating the objective in \eqref{eq:single-level-OT} at $\Gamma^\star(Q)$.

A clean way is to reuse the stationarity identity.
From stationarity we have for all $i,j$:
\[
Q_i - \tau C_{ij} = \tau\lambda \log \Gamma^\star_{ij}(Q) - \beta_j.
\]
Multiply by $\Gamma^\star_{ij}(Q)$ and sum over $i,j$:
\begin{align}
\sum_{i,j}\Gamma^\star_{ij}(Q)\big(Q_i-\tau C_{ij}\big)
&=
\tau\lambda \sum_{i,j}\Gamma^\star_{ij}(Q)\log \Gamma^\star_{ij}(Q)
-
\sum_{j}\beta_j\sum_{i}\Gamma^\star_{ij}(Q)\notag\\
&=
\tau\lambda \sum_{i,j}\Gamma^\star_{ij}(Q)\log \Gamma^\star_{ij}(Q)
-
\sum_{j}\beta_j\mu_j,
\label{eq:stationarity-summed}
\end{align}
where we used $\sum_i\Gamma^\star_{ij}(Q)=\mu_j$.

Next, expand the objective at $\Gamma^\star(Q)$:
\begin{align*}
F^{\mathrm{OT}}_s(Q)
&=
\sum_{i,j}\Gamma^\star_{ij}(Q)Q_i
-\tau\sum_{i,j}\Gamma^\star_{ij}(Q)C_{ij}
-\tau\lambda\sum_{i,j}\Gamma^\star_{ij}(Q)\big(\log\Gamma^\star_{ij}(Q)-1\big)\\
&=
\sum_{i,j}\Gamma^\star_{ij}(Q)\big(Q_i-\tau C_{ij}\big)
-\tau\lambda\sum_{i,j}\Gamma^\star_{ij}(Q)\log\Gamma^\star_{ij}(Q)
+\tau\lambda\sum_{i,j}\Gamma^\star_{ij}(Q).
\end{align*}
Substitute \eqref{eq:stationarity-summed} and use $\sum_{i,j}\Gamma^\star_{ij}(Q)=\sum_j\mu_j=1$ (since $\vmu\in\Delta(\mathcal{A})$):
\begin{align}
F^{\mathrm{OT}}_s(Q)
&=
\Big(\tau\lambda \sum_{i,j}\Gamma^\star_{ij}(Q)\log \Gamma^\star_{ij}(Q)
-\sum_j \beta_j\mu_j\Big)
-\tau\lambda\sum_{i,j}\Gamma^\star_{ij}(Q)\log\Gamma^\star_{ij}(Q)
+\tau\lambda\notag\\
&=
-\sum_{j=1}^K \mu_j\beta_j
+\tau\lambda.
\label{eq:F-beta}
\end{align}
It remains to express $\beta_j$ in terms of $D_j(Q)$.
By definition, $b_j(Q)=\exp(\beta_j/(\tau\lambda))=\mu_j/D_j(Q)$, so
\[
\beta_j = \tau\lambda\log b_j(Q) = \tau\lambda\big(\log\mu_j - \log D_j(Q)\big).
\]
Plug this into \eqref{eq:F-beta}:
\begin{align*}
F^{\mathrm{OT}}_s(Q)
&=
-\sum_{j=1}^K \mu_j\cdot \tau\lambda\big(\log\mu_j - \log D_j(Q)\big)
+\tau\lambda\\
&=
\tau\lambda \sum_{j=1}^K \mu_j \log D_j(Q)
+
\tau\lambda
-\tau\lambda\sum_{j=1}^K \mu_j\log\mu_j.
\end{align*}
Thus we have shown
\[
F^{\mathrm{OT}}_s(Q)
=
\underbrace{\Big(\tau\lambda-\tau\lambda\sum_{j=1}^K \mu_s(j)\log\mu_s(j)\Big)}_{=:\mathrm{const}\ \text{(independent of $Q$)}}
+
\tau\lambda \sum_{j=1}^K \mu_s(j)\log D_j(Q),
\]
which is exactly \eqref{eq:closed-form} (absorbing the explicit $Q$-independent term into
$\mathrm{const}$).

\paragraph{VII. Real-analyticity.}
For each fixed $j$, $D_j(Q)=\sum_{i=1}^K \exp(Q_i/(\tau\lambda))K_{ij}$ is a finite sum of compositions
of real-analytic functions (exponential and addition), hence is real-analytic on $\R^K$.
Moreover, under \cref{ass:ot-reg} we have $\lambda>0$ and $C_{ij}$ bounded, so $K_{ij}=\exp(-C_{ij}/\lambda)>0$,
and therefore $D_j(Q)>0$ for all $Q$.
Since $\log:(0,\infty)\to\R$ is real-analytic and compositions of real-analytic functions are real-analytic,
each map $Q\mapsto \log D_j(Q)$ is real-analytic.
Finally, $F^{\mathrm{OT}}_s(Q)$ is a finite $\mu_s$-weighted sum of these real-analytic terms plus a constant,
so $F^{\mathrm{OT}}_s$ is real-analytic on $\R^K$.
\end{proof}

\subsection{Proof of Proposition~\ref{prop:grad-hess}}

\begin{proof}
Fix a state $s$ and abbreviate $\mu_j:=\mu_s(j)$ and $K:=|\mathcal{A}|$.
By \cref{prop:closed-form}, for every $Q\in\R^K$ we can write
\begin{equation}
F^{\mathrm{OT}}_s(Q)
=
\mathrm{const}
+
\tau\lambda\sum_{j=1}^K \mu_j \log D_j(Q),
\qquad
D_j(Q):=\sum_{i=1}^K \exp\Big(\frac{Q_i}{\tau\lambda}\Big)K_{ij},
\label{eq:F-closed-form-again}
\end{equation}
where $\mathrm{const}$ does not depend on $Q$ and $K_{ij}=\exp(-C_{ij}/\lambda)>0$.
Since $\exp(Q_i/(\tau\lambda))>0$ and $K_{ij}>0$, we have $D_j(Q)>0$ for all $Q$ and all $j$; hence
$\log D_j(Q)$ is well-defined and smooth (indeed real-analytic).

For convenience, define
\[
a_i(Q) := \exp\Big(\frac{Q_i}{\tau\lambda}\Big) \quad (>0),
\qquad
D_j(Q) = \sum_{i=1}^K a_i(Q)K_{ij},
\]
and (as in the paper) define for each $i,j$
\begin{equation}
p_{ij}(Q)
:=
\frac{a_i(Q)K_{ij}}{D_j(Q)}
=
\frac{\exp(Q_i/(\tau\lambda))K_{ij}}{\sum_{k=1}^K \exp(Q_k/(\tau\lambda))K_{kj}}.
\label{eq:pij-def}
\end{equation}
For each fixed $j$, $p_{\cdot j}(Q)=(p_{ij}(Q))_{i=1}^K$ is a probability vector, because
$p_{ij}(Q)\ge 0$ and
\begin{equation}
\sum_{i=1}^K p_{ij}(Q)
=
\frac{\sum_{i=1}^K a_i(Q)K_{ij}}{D_j(Q)}
=
1.
\label{eq:p-sums-to-1}
\end{equation}

\paragraph{I. Compute the gradient.}
Let $e_i$ denote the $i$th standard basis vector in $\R^K$.
Differentiate \eqref{eq:F-closed-form-again} with respect to the coordinate $Q_i$.
Since $\mathrm{const}$ is independent of $Q$, it disappears.
Using the chain rule,
\begin{equation}
\frac{\partial}{\partial Q_i} F^{\mathrm{OT}}_s(Q)
=
\tau\lambda \sum_{j=1}^K \mu_j \cdot \frac{1}{D_j(Q)}\cdot
\frac{\partial D_j(Q)}{\partial Q_i}.
\label{eq:grad-start}
\end{equation}
Now compute $\partial D_j/\partial Q_i$.
From $D_j(Q)=\sum_{k=1}^K a_k(Q)K_{kj}$ and $a_k(Q)=\exp(Q_k/(\tau\lambda))$, we have
\[
\frac{\partial a_k(Q)}{\partial Q_i}
=
\begin{cases}
\frac{1}{\tau\lambda}\,a_i(Q), & k=i,\\
0, & k\neq i,
\end{cases}
\qquad\Rightarrow\qquad
\frac{\partial D_j(Q)}{\partial Q_i}
=
\frac{1}{\tau\lambda}\,a_i(Q)\,K_{ij}.
\]
Plugging this into \eqref{eq:grad-start} gives
\begin{align*}
\frac{\partial}{\partial Q_i} F^{\mathrm{OT}}_s(Q)
&=
\tau\lambda \sum_{j=1}^K \mu_j \cdot \frac{1}{D_j(Q)}\cdot
\frac{1}{\tau\lambda}a_i(Q)K_{ij}\\
&=
\sum_{j=1}^K \mu_j \frac{a_i(Q)K_{ij}}{D_j(Q)}
=
\sum_{j=1}^K \mu_j\,p_{ij}(Q).
\end{align*}
Define
\begin{equation}
\pi_i(Q) := \sum_{j=1}^K \mu_j\,p_{ij}(Q),
\qquad
\vpi(Q):=(\pi_i(Q))_{i=1}^K.
\label{eq:pi-def}
\end{equation}
Then the previous display is exactly the gradient identity
\[
\nabla F^{\mathrm{OT}}_s(Q) = \vpi(Q),
\]
which is \eqref{eq:grad}.

\paragraph{II. $\nabla F^{\mathrm{OT}}_s(Q)$ lies in the simplex.}
From \eqref{eq:pij-def}, $p_{ij}(Q)\ge 0$ and $\mu_j\ge 0$, so $\pi_i(Q)\ge 0$ for every $i$.
Moreover, using \eqref{eq:p-sums-to-1} and $\sum_j \mu_j=1$ (since $\vmu\in\Delta(\mathcal{A})$),
\[
\sum_{i=1}^K \pi_i(Q)
=
\sum_{i=1}^K \sum_{j=1}^K \mu_j p_{ij}(Q)
=
\sum_{j=1}^K \mu_j \sum_{i=1}^K p_{ij}(Q)
=
\sum_{j=1}^K \mu_j
=
1.
\]
Hence $\vpi(Q)\in\Delta(\mathcal{A})$, so it can indeed be interpreted as a policy.

\paragraph{III. compute the Hessian entrywise.}
Let $H(Q):=\nabla^2 F^{\mathrm{OT}}_s(Q)\in\R^{K\times K}$.
From Step~1, the $i$th component of the gradient is $\pi_i(Q)$, so
\[
H_{ik}(Q) = \frac{\partial}{\partial Q_k}\pi_i(Q)
=
\sum_{j=1}^K \mu_j \frac{\partial}{\partial Q_k} p_{ij}(Q).
\]
Thus it remains to compute $\partial p_{ij}(Q)/\partial Q_k$.

Fix a column index $j$ and write $N_{ij}(Q):=a_i(Q)K_{ij}$ (the numerator of $p_{ij}$).
Then $p_{ij}(Q)=N_{ij}(Q)/D_j(Q)$ and
\[
\frac{\partial N_{ij}(Q)}{\partial Q_k}
=
\begin{cases}
\frac{1}{\tau\lambda}\,N_{ij}(Q), & k=i,\\
0, & k\neq i,
\end{cases}
\qquad
\frac{\partial D_j(Q)}{\partial Q_k}
=
\frac{1}{\tau\lambda}\,N_{kj}(Q)
\quad\text{(as computed above).}
\]
Apply the quotient rule:
\begin{align}
\frac{\partial}{\partial Q_k}p_{ij}(Q)
&=
\frac{D_j(Q)\frac{\partial N_{ij}(Q)}{\partial Q_k} - N_{ij}(Q)\frac{\partial D_j(Q)}{\partial Q_k}}{D_j(Q)^2}
\notag\\
&=
\frac{1}{\tau\lambda}\,
\frac{D_j(Q)\,N_{ij}(Q)\mathbf{1}\{i=k\} - N_{ij}(Q)\,N_{kj}(Q)}{D_j(Q)^2}.
\label{eq:dp-quotient}
\end{align}
Now use $p_{ij}(Q)=N_{ij}(Q)/D_j(Q)$ and $p_{kj}(Q)=N_{kj}(Q)/D_j(Q)$ to rewrite the fraction:
\[
\frac{D_j\,N_{ij}\mathbf{1}\{i=k\} - N_{ij}\,N_{kj}}{D_j^2}
=
\mathbf{1}\{i=k\}\frac{N_{ij}}{D_j} - \frac{N_{ij}}{D_j}\frac{N_{kj}}{D_j}
=
\mathbf{1}\{i=k\}\,p_{ij}(Q) - p_{ij}(Q)p_{kj}(Q).
\]
Substituting back into \eqref{eq:dp-quotient} yields the softmax-type derivative identity
\begin{equation}
\frac{\partial}{\partial Q_k}p_{ij}(Q)
=
\frac{1}{\tau\lambda}\Big(\mathbf{1}\{i=k\}\,p_{ij}(Q) - p_{ij}(Q)p_{kj}(Q)\Big)
=
\frac{1}{\tau\lambda}\,p_{ij}(Q)\Big(\mathbf{1}\{i=k\}-p_{kj}(Q)\Big).
\label{eq:dp}
\end{equation}

\paragraph{IV. sum over $j$ to get the Hessian formula.}
Using \eqref{eq:dp} and $\pi_i(Q)=\sum_j\mu_j p_{ij}(Q)$, we obtain
\begin{align*}
H_{ik}(Q)
=
\frac{\partial}{\partial Q_k}\pi_i(Q)
&=
\sum_{j=1}^K \mu_j \frac{\partial}{\partial Q_k}p_{ij}(Q)\\
&=
\frac{1}{\tau\lambda}
\sum_{j=1}^K \mu_j\Big(\mathbf{1}\{i=k\}\,p_{ij}(Q) - p_{ij}(Q)p_{kj}(Q)\Big)\\
&=
\frac{1}{\tau\lambda}\Big(
\mathbf{1}\{i=k\}\sum_{j=1}^K \mu_j p_{ij}(Q)
-
\sum_{j=1}^K \mu_j p_{ij}(Q)p_{kj}(Q)
\Big)\\
&=
\frac{1}{\tau\lambda}\Big(
\mathbf{1}\{i=k\}\pi_i(Q)
-
\sum_{j=1}^K \mu_j\, p_{ij}(Q)p_{kj}(Q)
\Big).
\end{align*}
This is already a correct componentwise Hessian expression.
To obtain the matrix form in \eqref{eq:hess}, note that:
\begin{itemize}[leftmargin=*]
\item the matrix with entries $\mathbf{1}\{i=k\}\pi_i(Q)$ is exactly $\mathrm{diag}(\vpi(Q))$;
\item for each fixed $j$, the rank-one matrix $p_{\cdot j}(Q)p_{\cdot j}(Q)^\top$ has $(i,k)$ entry $p_{ij}(Q)p_{kj}(Q)$.
\end{itemize}
Therefore
\[
\nabla^2 F^{\mathrm{OT}}_s(Q)
=
\frac{1}{\tau\lambda}
\Big(
\mathrm{diag}(\vpi(Q))
-
\sum_{j=1}^K \mu_j\, p_{\cdot j}(Q)p_{\cdot j}(Q)^\top
\Big),
\]
which is exactly \eqref{eq:hess}.

\paragraph{(Optional sanity checks.)}
The formula makes several structural properties transparent:
\begin{itemize}[leftmargin=*]
\item \emph{Symmetry:} $\mathrm{diag}(\vpi(Q))$ is symmetric and each $p_{\cdot j}p_{\cdot j}^\top$ is symmetric,
so $\nabla^2F^{\mathrm{OT}}_s(Q)$ is symmetric as expected.
\item \emph{Positive semidefiniteness:} since $F^{\mathrm{OT}}_s$ is a pointwise maximum of linear functions of $Q$,
it is convex and thus its Hessian is PSD. This is also visible directly from \eqref{eq:hess}:
for any $v\in\R^K$,
\begin{align*}
v^\top \nabla^2F^{\mathrm{OT}}_s(Q)v
&=
\frac{1}{\tau\lambda}
\Big(
\sum_{i=1}^K \pi_i(Q)v_i^2
-
\sum_{j=1}^K \mu_j \big(v^\top p_{\cdot j}(Q)\big)^2
\Big)\\
&=
\frac{1}{\tau\lambda}
\sum_{j=1}^K \mu_j
\Big(
\sum_{i=1}^K p_{ij}(Q)v_i^2 - \big(\sum_{i=1}^K p_{ij}(Q)v_i\big)^2
\Big)\\
&=
\frac{1}{\tau\lambda}
\sum_{j=1}^K \mu_j\Var_{I\sim p_{\cdot j}(Q)}[v_I]
\ge0.
\end{align*}
\end{itemize}
This completes the proof.
\end{proof}

\subsection{Proof of Lemma~\ref{lem:variance-identity}}
\label{app:variance-identity}

\begin{proof}
Fix a state $s$ and a reference point $\hat Q_s\in\R^K$.
For brevity, write
\[
H:=\nabla^2 F_s^{\mathrm{OT}}(\hat Q_s)\in\R^{K\times K}.
\]
Recall from \cref{prop:grad-hess} that the OT-smoothed aggregator has Hessian
\begin{equation}
\label{eq:HhatQ-again}
H
=
\frac{1}{\tau\lambda}
\Big(
\diag(\vpi(\hat Q_s))
-
\sum_{j=1}^K \mu_s(j)\,p_{\cdot j}(\hat Q_s)p_{\cdot j}(\hat Q_s)^\top
\Big),
\end{equation}
where $p_{\cdot j}(\hat Q_s)\in\Delta(\mathcal{A})$ is the columnwise softmax distribution
defined in \eqref{eq:pij}, and the policy is the mixture
\begin{equation}
\label{eq:pi-mixture-again}
\vpi(\hat Q_s)=\sum_{j=1}^K \mu_s(j)\,p_{\cdot j}(\hat Q_s).
\end{equation}

Let $\Delta\in\R^K$ be any direction.
We expand the quadratic form $\Delta^\top H\Delta$ using \eqref{eq:HhatQ-again}.

\paragraph{I. expand the diagonal term.}
Since $\diag(\vpi)$ is diagonal,
\[
\Delta^\top\diag(\vpi(\hat Q_s))\Delta
=
\sum_{i=1}^K \pi_i(\hat Q_s)\Delta_i^2.
\]
Using the mixture identity \eqref{eq:pi-mixture-again} to rewrite $\pi_i(\hat Q_s)$,
\[
\sum_{i=1}^K \pi_i(\hat Q_s)\Delta_i^2
=
\sum_{i=1}^K\Big(\sum_{j=1}^K \mu_s(j)\,p_{ij}(\hat Q_s)\Big)\Delta_i^2
=
\sum_{j=1}^K \mu_s(j)\sum_{i=1}^K p_{ij}(\hat Q_s)\Delta_i^2.
\]

\paragraph{II. expand the rank-one term.}
For each $j$, the matrix $p_{\cdot j}(\hat Q_s)p_{\cdot j}(\hat Q_s)^\top$ is rank-one, hence
\[
\Delta^\top\big(p_{\cdot j}(\hat Q_s)p_{\cdot j}(\hat Q_s)^\top\big)\Delta
=
\big(p_{\cdot j}(\hat Q_s)^\top\Delta\big)^2
=
\Big(\sum_{i=1}^K p_{ij}(\hat Q_s)\Delta_i\Big)^2.
\]

\paragraph{III. combine and identify a variance.}
Substituting the two expansions into \eqref{eq:HhatQ-again} yields
\begin{align*}
\Delta^\top H\Delta
&=
\frac{1}{\tau\lambda}
\sum_{j=1}^K \mu_s(j)
\left(
\sum_{i=1}^K p_{ij}(\hat Q_s)\Delta_i^2
-
\Big(\sum_{i=1}^K p_{ij}(\hat Q_s)\Delta_i\Big)^2
\right).
\end{align*}

Now fix $j$ and define a random action $A\sim p_{\cdot j}(\hat Q_s)$.
Then by definition of expectation under a discrete distribution,
\[
\E\big[\Delta_A\mid j\big]
=
\sum_{i=1}^K p_{ij}(\hat Q_s)\Delta_i,
\qquad
\E\big[\Delta_A^2\mid j\big]
=
\sum_{i=1}^K p_{ij}(\hat Q_s)\Delta_i^2.
\]
Therefore the bracketed term is exactly
\[
\E\big[\Delta_A^2\mid j\big]-\Big(\E\big[\Delta_A\mid j\big]\Big)^2
=
\Var\big(\Delta_A\mid j\big)
=
\Var_{A\sim p_{\cdot j}(\hat Q_s)}\big(\Delta_A\big).
\]
Finally, sampling $J\sim\vmu_s$ and taking expectation over $J$ turns the sum into
\[
\sum_{j=1}^K \mu_s(j)\Var_{A\sim p_{\cdot j}(\hat Q_s)}\big(\Delta_A\big)
=
\E_{J\sim\vmu_s}\left[\Var_{A\sim p_{\cdot J}(\hat Q_s)}\big(\Delta_A\big)\right],
\]
which proves the lemma.
\end{proof}

\subsection{Proof of Lemma~\ref{lem:lipschitz-hess}}

\begin{proof}
Fix a state $s$ and write $\mu_j:=\mu_s(j)$ and $K:=|\mathcal{A}|$.
Under \cref{ass:ot-reg} we have $\tau>0$, $\lambda>0$, and the cost matrix $C$ is finite, so
\[
K_{ij}:=\exp(-C_{ij}/\lambda)>0\qquad\text{for all }i,j,
\]
and therefore for every $Q\in\R^K$ and every $j$,
\[
D_j(Q):=\sum_{i=1}^K \exp(Q_i/(\tau\lambda))K_{ij}>0.
\]
In particular all expressions below are well-defined and smooth in $Q$ (indeed real-analytic).

Recall the columnwise softmax-like probabilities
\begin{equation}
p_{ij}(Q)
:=\frac{\exp(Q_i/(\tau\lambda))K_{ij}}{D_j(Q)},
\qquad
p_{\cdot j}(Q):=\big(p_{ij}(Q)\big)_{i=1}^K\in\Delta(\mathcal{A}),
\label{eq:def-pij-again}
\end{equation}
and the induced policy
\begin{equation}
\pi_i(Q):=\sum_{j=1}^K \mu_j\,p_{ij}(Q),
\qquad
\vpi(Q):=\big(\pi_i(Q)\big)_{i=1}^K\in\Delta(\mathcal{A}).
\label{eq:def-pi-again}
\end{equation}
By \cref{prop:grad-hess}, the Hessian admits the explicit representation
\begin{equation}
H(Q):=\nabla^2 F^{\mathrm{OT}}_s(Q)
=
\frac{1}{\tau\lambda}
\Big(
\diag(\vpi(Q))
-
\sum_{j=1}^K \mu_j\, p_{\cdot j}(Q)p_{\cdot j}(Q)^\top
\Big).
\label{eq:hess-form}
\end{equation}
We will show that $H(\cdot)$ is globally Lipschitz in operator norm with constant
\[
M = \frac{3}{2\tau^2\lambda^2},
\]
which in particular implies the stated scaling
$M=\mathcal{O}(1/(\tau^2\lambda^2))\cdot\mathrm{poly}(K,\mu_{\min}^{-1})$
(since a constant is a polynomial, and also $K\mu_{\min}^{-1}\ge 1$).

\paragraph{I. a uniform Jacobian bound for $Q\mapsto p_{\cdot j}(Q)$.}
Fix a column index $j$.
For each $i$ define the numerator $N_{ij}(Q):=\exp(Q_i/(\tau\lambda))K_{ij}$ so that
$p_{ij}(Q)=N_{ij}(Q)/D_j(Q)$ and $D_j(Q)=\sum_{k=1}^K N_{kj}(Q)$.
Differentiate $p_{ij}(Q)$ with respect to $Q_k$.
Since
\[
\frac{\partial N_{ij}(Q)}{\partial Q_k}
=
\frac{1}{\tau\lambda}\,N_{ij}(Q)\mathbf{1}\{i=k\},
\qquad
\frac{\partial D_j(Q)}{\partial Q_k}
=
\frac{1}{\tau\lambda}\,N_{kj}(Q),
\]
the quotient rule yields
\begin{align}
\frac{\partial p_{ij}(Q)}{\partial Q_k}
&=
\frac{D_j(Q)\frac{\partial N_{ij}}{\partial Q_k}-N_{ij}(Q)\frac{\partial D_j}{\partial Q_k}}{D_j(Q)^2}
=
\frac{1}{\tau\lambda}\left(\mathbf{1}\{i=k\}\frac{N_{ij}(Q)}{D_j(Q)}-\frac{N_{ij}(Q)}{D_j(Q)}\frac{N_{kj}(Q)}{D_j(Q)}\right)
\notag\\
&=
\frac{1}{\tau\lambda}\Big(\mathbf{1}\{i=k\}p_{ij}(Q)-p_{ij}(Q)p_{kj}(Q)\Big).
\label{eq:dpij}
\end{align}
Let $J_j(Q)\in\R^{K\times K}$ be the Jacobian matrix of the map $Q\mapsto p_{\cdot j}(Q)$, i.e.,
$\big(J_j(Q)\big)_{ik}:=\partial p_{ij}(Q)/\partial Q_k$.
Then \eqref{eq:dpij} exactly says
\begin{equation}
J_j(Q)
=
\frac{1}{\tau\lambda}
\Big(
\diag(p_{\cdot j}(Q)) - p_{\cdot j}(Q)p_{\cdot j}(Q)^\top
\Big).
\label{eq:jacobian-pj}
\end{equation}

\medskip
\noindent\textbf{Claim (uniform covariance bound).}
For any probability vector $p\in\Delta(\mathcal{A})$,
\begin{equation}
\Big\|\diag(p)-pp^\top\Big\|_{\op}\le \frac12.
\label{eq:cov-op-bound}
\end{equation}

\smallskip
\noindent\emph{Proof of the claim.}
Let $\Sigma(p):=\diag(p)-pp^\top$.
For any unit vector $v\in\R^K$ with $\|v\|_2=1$, the Rayleigh quotient satisfies
\[
v^\top \Sigma(p)\,v
=
\sum_{i=1}^K p_i v_i^2 - \Big(\sum_{i=1}^K p_i v_i\Big)^2
=
\Var\big(v_I\big),
\]
where $I\sim p$ and $v_I$ is the random variable taking value $v_i$ with probability $p_i$.
Let $m:=\min_i v_i$ and $M:=\max_i v_i$.
Since $v_I\in[m,M]$ almost surely, the standard bounded-variance inequality gives
$\Var(v_I)\le (M-m)^2/4$.
It remains to bound the range $M-m$ in terms of $\|v\|_2$.
For any indices $i,k$,
\[
|v_i-v_k|
=
|\langle v,e_i-e_k\rangle|
\le
\|v\|_2\|e_i-e_k\|_2
=
\sqrt{2}\|v\|_2
=
\sqrt{2},
\]
so in particular $M-m\le\sqrt{2}$.
Therefore, for every $\|v\|_2=1$,
\[
v^\top \Sigma(p)\,v
=
\Var(v_I)
\le
\frac{(M-m)^2}{4}
\le
\frac{(\sqrt{2})^2}{4}
=
\frac12.
\]
Taking the supremum over unit $v$ yields \eqref{eq:cov-op-bound}.
\hfill$\square$

\medskip
Combining \eqref{eq:jacobian-pj} with \eqref{eq:cov-op-bound}, we obtain the uniform Jacobian bound
\begin{equation}
\|J_j(Q)\|_{\op}
\le
\frac{1}{\tau\lambda}\cdot\frac12
=
\frac{1}{2\tau\lambda}
\qquad\text{for all $Q\in\R^K$ and all $j\in[K]$.}
\label{eq:Jj-bound}
\end{equation}

\paragraph{II. Lipschitzness of $p_{\cdot j}(\cdot)$ and $\vpi(\cdot)$.}
Fix $Q,\hat Q\in\R^K$ and set $\Delta:=Q-\hat Q$.
By the mean value theorem in integral form for vector-valued maps,
\[
p_{\cdot j}(Q)-p_{\cdot j}(\hat Q)
=
\int_0^1 J_j(\hat Q+t\Delta)\Delta\,dt.
\]
Taking $\ell_2$ norms and using \eqref{eq:Jj-bound} gives
\begin{equation}
\|p_{\cdot j}(Q)-p_{\cdot j}(\hat Q)\|_2
\le
\int_0^1 \|J_j(\hat Q+t\Delta)\|_{\op}\|\Delta\|_2\,dt
\le
\frac{1}{2\tau\lambda}\|Q-\hat Q\|_2.
\label{eq:pj-Lip}
\end{equation}
Since $\vpi(Q)=\sum_{j=1}^K \mu_j p_{\cdot j}(Q)$ is a convex combination of the $p_{\cdot j}(Q)$'s,
we also have
\begin{align}
\|\vpi(Q)-\vpi(\hat Q)\|_2
&=
\Big\|\sum_{j=1}^K \mu_j\big(p_{\cdot j}(Q)-p_{\cdot j}(\hat Q)\big)\Big\|_2
\le
\sum_{j=1}^K \mu_j\|p_{\cdot j}(Q)-p_{\cdot j}(\hat Q)\|_2
\notag\\
&\le
\frac{1}{2\tau\lambda}\|Q-\hat Q\|_2.
\label{eq:pi-Lip}
\end{align}

\paragraph{III. Lipschitzness of the Hessian in operator norm.}
Using the representation \eqref{eq:hess-form}, we write
\[
H(Q)-H(\hat Q)
=
\frac{1}{\tau\lambda}
\Big(
\diag(\vpi(Q)-\vpi(\hat Q))
-
\sum_{j=1}^K \mu_j\big[p_{\cdot j}(Q)p_{\cdot j}(Q)^\top - p_{\cdot j}(\hat Q)p_{\cdot j}(\hat Q)^\top\big]
\Big).
\]
Take operator norms and apply the triangle inequality:
\begin{align}
\|H(Q)-H(\hat Q)\|_{\op}
&\le
\frac{1}{\tau\lambda}
\Big(
\|\diag(\vpi(Q)-\vpi(\hat Q))\|_{\op}
+
\sum_{j=1}^K \mu_j
\big\|p_{\cdot j}(Q)p_{\cdot j}(Q)^\top - p_{\cdot j}(\hat Q)p_{\cdot j}(\hat Q)^\top\big\|_{\op}
\Big).
\label{eq:H-diff-start}
\end{align}

\smallskip
\noindent\emph{(i) Bounding the diagonal term.}
For any vector $x$, $\|\diag(x)\|_{\op}=\|x\|_\infty\le \|x\|_2$, hence
\begin{equation}
\|\diag(\vpi(Q)-\vpi(\hat Q))\|_{\op}
\le
\|\vpi(Q)-\vpi(\hat Q)\|_2
\le
\frac{1}{2\tau\lambda}\|Q-\hat Q\|_2,
\label{eq:diag-term}
\end{equation}
where we used \eqref{eq:pi-Lip}.

\smallskip
\noindent\emph{(ii) Bounding each rank-one difference.}
For arbitrary vectors $u,v\in\R^K$,
\[
uu^\top - vv^\top = (u-v)u^\top + v(u-v)^\top,
\]
so using $\|ab^\top\|_{\op}=\|a\|_2\|b\|_2$ we get
\begin{equation}
\|uu^\top - vv^\top\|_{\op}
\le
\|(u-v)u^\top\|_{\op}+\|v(u-v)^\top\|_{\op}
=
\|u-v\|_2\|u\|_2 + \|v\|_2\|u-v\|_2
=
(\|u\|_2+\|v\|_2)\|u-v\|_2.
\label{eq:rankone-diff}
\end{equation}
Apply this with $u=p_{\cdot j}(Q)$ and $v=p_{\cdot j}(\hat Q)$.
Because $p_{\cdot j}(Q),p_{\cdot j}(\hat Q)\in\Delta(\mathcal{A})$, we have
$\|p_{\cdot j}(Q)\|_2\le\|p_{\cdot j}(Q)\|_1=1$ and similarly for $\hat Q$.
Thus \eqref{eq:rankone-diff} yields
\[
\big\|p_{\cdot j}(Q)p_{\cdot j}(Q)^\top - p_{\cdot j}(\hat Q)p_{\cdot j}(\hat Q)^\top\big\|_{\op}
\le
2\|p_{\cdot j}(Q)-p_{\cdot j}(\hat Q)\|_2
\le
2\cdot\frac{1}{2\tau\lambda}\|Q-\hat Q\|_2
=
\frac{1}{\tau\lambda}\|Q-\hat Q\|_2,
\]
where we used \eqref{eq:pj-Lip} in the last inequality.
Summing with weights $\mu_j$ (which satisfy $\mu_j\ge 0$ and $\sum_j\mu_j=1$) gives
\begin{equation}
\sum_{j=1}^K \mu_j
\big\|p_{\cdot j}(Q)p_{\cdot j}(Q)^\top - p_{\cdot j}(\hat Q)p_{\cdot j}(\hat Q)^\top\big\|_{\op}
\le
\frac{1}{\tau\lambda}\|Q-\hat Q\|_2.
\label{eq:rankone-sum}
\end{equation}

\smallskip
\noindent\emph{(iii) Combine the bounds.}
Plugging \eqref{eq:diag-term} and \eqref{eq:rankone-sum} into \eqref{eq:H-diff-start} yields
\[
\|H(Q)-H(\hat Q)\|_{\op}
\le
\frac{1}{\tau\lambda}
\left(
\frac{1}{2\tau\lambda}\|Q-\hat Q\|_2
+
\frac{1}{\tau\lambda}\|Q-\hat Q\|_2
\right)
=
\frac{3}{2\tau^2\lambda^2}\|Q-\hat Q\|_2.
\]
Thus the Hessian is globally Lipschitz with constant
\begin{equation}
M:=\frac{3}{2\tau^2\lambda^2}.
\label{eq:M-explicit}
\end{equation}
This proves the first inequality in the lemma (and in particular
$M=\mathcal{O}(1/(\tau^2\lambda^2))\cdot \mathrm{poly}(K,\mu_{\min}^{-1})$).

\paragraph{IV. cubic Taylor remainder from Lipschitz Hessian.}
Fix $Q,\hat Q\in\R^K$ and again set $\Delta:=Q-\hat Q$.
Define the scalar function $g:[0,1]\to\R$ by $g(t):=F^{\mathrm{OT}}_s(\hat Q+t\Delta)$.
Since $F^{\mathrm{OT}}_s$ is smooth, $g$ is twice continuously differentiable with
\[
g'(t)=\ip{\nabla F^{\mathrm{OT}}_s(\hat Q+t\Delta),\Delta},
\qquad
g''(t)=\Delta^\top \nabla^2F^{\mathrm{OT}}_s(\hat Q+t\Delta)\Delta
=\Delta^\top H(\hat Q+t\Delta)\Delta.
\]
A standard integral form of the second-order Taylor expansion (obtained by integrating $g''$ twice)
gives
\begin{align}
F^{\mathrm{OT}}_s(Q) - T_2(Q;\hat Q)
&=
g(1)-\Big(g(0)+g'(0)+\tfrac12 g''(0)\Big)
\notag\\
&=
\int_0^1 (1-t)\Big(g''(t)-g''(0)\Big)\,dt
\notag\\
&=
\int_0^1 (1-t)\Delta^\top\big(H(\hat Q+t\Delta)-H(\hat Q)\big)\Delta\,dt.
\label{eq:remainder-integral}
\end{align}
Taking absolute values and using Cauchy--Schwarz plus the operator norm bound yields
\begin{align*}
\big|F^{\mathrm{OT}}_s(Q) - T_2(Q;\hat Q)\big|
&\le
\int_0^1 (1-t)\|\Delta\|_2^2\|H(\hat Q+t\Delta)-H(\hat Q)\|_{\op}\,dt\\
&\le
\int_0^1 (1-t)\|\Delta\|_2^2\Big(M\|t\Delta\|_2\Big)\,dt
\qquad\text{(by Lipschitzness of $H$)}\\
&=
M\|\Delta\|_2^3\int_0^1 t(1-t)\,dt
=
M\|\Delta\|_2^3\cdot\frac{1}{6}
=
\frac{M}{6}\|Q-\hat Q\|_2^3.
\end{align*}
This is exactly \eqref{eq:cubic-remainder}, completing the proof.
\end{proof}

\subsection{Proof of Proposition~\ref{prop:transfer}}
\label{proof_proposition_transfer}
\begin{proof}
For each column $j$,
\[
\tau\lambda
\log
\sum_i
\exp\left(
\frac{Q_i-\tau C_{ij}}{\tau\lambda}
\right)
=
F^{\mathrm{ent}}_{\tau\lambda}(Q-\tau C_{\cdot j}).
\]
The entropy backup is $1$-Lipschitz in $\|\cdot\|_\infty$, so
\[
\left|
F^{\mathrm{ent}}_{\tau\lambda}(Q-\tau C_{\cdot j})
-
F^{\mathrm{ent}}_{\tau\lambda}(Q)
\right|
\le
\tau C_{\max}.
\]
Averaging over $j\sim\mu_s$ gives the first claim. 
The case $C\equiv0$ is immediate. 
The max-backup bound follows from the standard inequality
\[
\max_i Q_i
\le
F^{\mathrm{ent}}_{\eta}(Q)
\le
\max_i Q_i+\eta\log K
\]
with $\eta=\tau\lambda$. 
Finally, all three Bellman operators are $\gamma$-contractions in sup norm, so a uniform one-step backup discrepancy of $b$ implies a fixed-point discrepancy at most $b/(1-\gamma)$.
\end{proof}

\section{Gap-dependent analysis}
\label{app:gap}

\subsection{Proof of Theorem~\ref{thm:ot-gape-gap}}

\begin{proof}
For readability, we write $U_{Q,h}^t(a):=U_{Q,h}^t(s_0,a)$ and $L_{Q,h}^t(a):=L_{Q,h}^t(s_0,a)$ when $h=1$ and the
state is the root $s_0$.
Let $a^\star\in\arg\max_{a\in\mathcal{A}}Q_1^\star(s_0,a)$ be a fixed optimal root action and recall the root gaps
$\Delta(a)=Q_1^\star(s_0,a^\star)-Q_1^\star(s_0,a)\ge 0$.
We denote the number of times root action $a$ is selected up to (and including) episode $t$ by
\[
N^t(a):=N_1^t(s_0,a).
\]
The total number of oracle calls equals $H$ times the number of executed episodes, because each episode performs exactly
$H$ queries in Algorithm~\ref{alg:ot-gape}.

\paragraph{I. Define the good event and prove optimism/pessimism of the recursion.}
Let $\mathcal{E}$ be the event on which all reward and transition confidence sets are valid \emph{simultaneously for all times}:
\[
\mathcal{E}
:=
\Big\{
\forall t\ge 1,\ \forall(h,s,a):\ 
\ell_h^t(s,a)\le r_h(s,a)\le u_h^t(s,a)
\ \text{and}\
P_h(\cdot\mid s,a)\in C_h^t(s,a)
\Big\}.
\]
By assumption, $\Pr(\mathcal{E})\ge 1-\delta$.

We now prove that on $\mathcal{E}$, the dynamic-programming bounds sandwich the true optimal $Q^\star,V^\star$ at \emph{all} times.
Fix $t$ and proceed by backward induction on $h=H,H-1,\dots,1$.

\smallskip
\noindent\emph{Base case $h=H+1$.}
By definition, $U_{V,H+1}^t(\cdot)=L_{V,H+1}^t(\cdot)=0$ and $V_{H+1}^\star(\cdot)=0$, so the claim holds.

\smallskip
\noindent\emph{Induction step.}
Assume that for some $h+1\le H+1$ we have, for all states $s'$,
\[
L_{V,h+1}^t(s')\ \le\ V_{h+1}^\star(s')\ \le\ U_{V,h+1}^t(s').
\]
Fix any $(s,a)$ at stage $h$. On $\mathcal{E}$ we have $r_h(s,a)\le u_h^t(s,a)$ and $P_h(\cdot\mid s,a)\in C_h^t(s,a)$, hence
\begin{align*}
Q_h^\star(s,a)
&= r_h(s,a) + \gamma \sum_{s'} P_h(s'\mid s,a)\,V_{h+1}^\star(s')\\
&\le u_h^t(s,a) + \gamma \sum_{s'} P_h(s'\mid s,a)\,U_{V,h+1}^t(s')\\
&\le u_h^t(s,a) + \gamma \max_{p\in C_h^t(s,a)} \sum_{s'} p(s'\mid s,a)\,U_{V,h+1}^t(s')\\
&= U_{Q,h}^t(s,a),
\end{align*}
which is exactly \eqref{eq:UQ-def}.
Similarly, using $r_h(s,a)\ge \ell_h^t(s,a)$ and again $P_h(\cdot\mid s,a)\in C_h^t(s,a)$,
\begin{align*}
Q_h^\star(s,a)
&\ge \ell_h^t(s,a) + \gamma \sum_{s'} P_h(s'\mid s,a)\,L_{V,h+1}^t(s')\\
&\ge \ell_h^t(s,a) + \gamma \min_{p\in C_h^t(s,a)} \sum_{s'} p(s'\mid s,a)\,L_{V,h+1}^t(s')\\
&= L_{Q,h}^t(s,a),
\end{align*}
which is \eqref{eq:LQ-def}.

Now apply the aggregator $F_s$ to the vector inequalities. Since the bounds hold componentwise over actions,
\[
L_{Q,h}^t(s,\cdot)\ \le\ Q_h^\star(s,\cdot)\ \le\ U_{Q,h}^t(s,\cdot),
\]
and $F_s$ is monotone (as stated right before \eqref{eq:BAI-goal}), we get
\[
L_{V,h}^t(s)=F_s(L_{Q,h}^t(s,\cdot))
\ \le\ 
F_s(Q_h^\star(s,\cdot))=V_h^\star(s)
\ \le\
F_s(U_{Q,h}^t(s,\cdot))=U_{V,h}^t(s),
\]
which is exactly \eqref{eq:UVLV-def}. This completes the induction.

\smallskip
\noindent\textbf{Conclusion of I.}
On the event $\mathcal{E}$, for all $t,h,s,a$,
\begin{equation}
\label{eq:valid-U-L}
L_{Q,h}^t(s,a)\le Q_h^\star(s,a)\le U_{Q,h}^t(s,a),
\qquad
L_{V,h}^t(s)\le V_h^\star(s)\le U_{V,h}^t(s).
\end{equation}

\paragraph{II. Correctness of the stopping rule.}
Assume $\mathcal{E}$ holds and suppose Algorithm~\ref{alg:ot-gape} stops at episode $t$ and returns $\hat a=b_t$.
By definition of $c_t$ in \eqref{eq:best-challenger},
\[
U_{Q,1}^t(s_0,c_t)\ge\max_{a\neq b_t}U_{Q,1}^t(s_0,a).
\]
In particular, either $b_t=a^\star$ (in which case we are done), or $a^\star\neq b_t$ and therefore
$U_{Q,1}^t(s_0,c_t)\ge U_{Q,1}^t(s_0,a^\star)\ge Q_1^\star(s_0,a^\star)$ by \eqref{eq:valid-U-L}.
Also, by \eqref{eq:valid-U-L}, $L_{Q,1}^t(s_0,b_t)\le Q_1^\star(s_0,b_t)$.
Hence, using the stopping condition $U_{Q,1}^t(s_0,c_t)-L_{Q,1}^t(s_0,b_t)\le\varepsilon$,
\[
Q_1^\star(s_0,a^\star)-Q_1^\star(s_0,b_t)
\ \le\
U_{Q,1}^t(s_0,c_t)-L_{Q,1}^t(s_0,b_t)
\ \le\
\varepsilon,
\]
which rearranges to
\[
Q_1^\star(s_0,b_t)\ \ge\ \max_{a\in\mathcal{A}}Q_1^\star(s_0,a) - \varepsilon.
\]
Thus, on $\mathcal{E}$ the returned action is $\varepsilon$-optimal in the sense of \eqref{eq:BAI-goal}.
Since $\Pr(\mathcal{E})\ge 1-\delta$, this proves the correctness guarantee.

\paragraph{III. Root confidence widths and their scaling.}
For each episode $t$ and root action $a$, define the root confidence width
\[
w_t(a):=U_{Q,1}^t(s_0,a)-L_{Q,1}^t(s_0,a)\ \ge\ 0.
\]
The specific construction of $u_h^t,\ell_h^t,C_h^t$ determines how $w_t(a)$ depends on the visit counts
$\{N_h^t(s,a)\}$; in all standard choices (Hoeffding/Bernstein/KL for rewards and KL/L1-type sets for transitions)
one gets a $1/\sqrt{N}$-type decay of the relevant local radii, and the robust Bellman recursion propagates these radii
linearly along the horizon (this is exactly the mechanism analyzed in MDP-GapE; see \citep{jonsson2020mdpgape}).

To keep the statement consistent with \eqref{eq:ot-gape-eps-2}, we encapsulate the (routine but lengthy) propagation
constants as follows: there exists a quantity $\mathsf{C}(H,K,\gamma)$ depending only on the horizon/branching/discount
(and on the particular choice of transition confidence sets), such that on $\mathcal{E}$, for all $t\ge 1$ and all
root actions $a$,
\begin{equation}
\label{eq:root-width-generic}
w_t(a)
\ \le\
2\sqrt{\frac{\mathsf{C}(H,K,\gamma)\log\big(\frac{c_0 t}{\delta}\big)}{N^t(a)\vee 1}}
\qquad
\text{for some universal constant $c_0>0$.}
\end{equation}
(Equation \eqref{eq:root-width-generic} is the direct analogue of the usual bandit confidence width
$O(\sqrt{\log(t/\delta)/N})$, with $\mathsf{C}(H,K,\gamma)$ absorbing the horizon and transition-set effects.
All additional polylogarithmic factors---including those stemming from time-uniform bounds and union bounds over
$(h,s,a)$---are hidden by the $\tilde O(\cdot)$ notation in \eqref{eq:ot-gape-eps-2}.)

\paragraph{IV. A ``large width when sampled'' lemma (UGapE mechanism).}
Fix an episode index $t$ at which the algorithm has \emph{not} stopped, i.e.,
\[
U_{Q,1}^t(s_0,c_t)-L_{Q,1}^t(s_0,b_t)>\varepsilon.
\]
Let $A_1$ be the root action selected in Algorithm~\ref{alg:ot-gape} at episode $t$:
\[
A_1\in\arg\max_{a\in\{b_t,c_t\}} w_t(a).
\]
We claim that on $\mathcal{E}$,
\begin{equation}
\label{eq:width-lower-bound-when-sampled}
w_t(A_1)\ \ge\ \frac12\big(\Delta(A_1)\vee \varepsilon\big).
\end{equation}

\smallskip
\noindent\emph{Proof of \eqref{eq:width-lower-bound-when-sampled}.}
First note that $b_t$ maximizes the lower bounds, so $L_{Q,1}^t(s_0,c_t)\le L_{Q,1}^t(s_0,b_t)$.
Therefore
\[
w_t(c_t)=U_{Q,1}^t(s_0,c_t)-L_{Q,1}^t(s_0,c_t)
\ \ge\
U_{Q,1}^t(s_0,c_t)-L_{Q,1}^t(s_0,b_t)
\ >\ \varepsilon.
\]
Since $A_1$ is chosen to have width at least that of $c_t$, we obtain
\begin{equation}
\label{eq:epsilon-part}
w_t(A_1)\ \ge\ w_t(c_t)\ >\ \varepsilon\ \ge\ \varepsilon/2.
\end{equation}
It remains to show $w_t(A_1)\ge \Delta(A_1)/2$.

\smallskip
\noindent\textbf{Case 1: $A_1=c_t$.}
If $b_t\neq a^\star$, then $a^\star$ is among the maximization set defining $c_t$, hence on $\mathcal{E}$,
\[
U_{Q,1}^t(s_0,c_t)\ \ge\ U_{Q,1}^t(s_0,a^\star)\ \ge\ Q_1^\star(s_0,a^\star).
\]
Thus
\[
w_t(c_t)\ \ge\ U_{Q,1}^t(s_0,c_t)-Q_1^\star(s_0,c_t)
\ \ge\
Q_1^\star(s_0,a^\star)-Q_1^\star(s_0,c_t)
\ =\ \Delta(c_t),
\]
and hence $w_t(A_1)=w_t(c_t)\ge \Delta(c_t)\ge \Delta(c_t)/2$.

If instead $b_t=a^\star$, define the nonnegative one-sided errors
\[
e_U(a):=U_{Q,1}^t(s_0,a)-Q_1^\star(s_0,a),\qquad
e_L(a):=Q_1^\star(s_0,a)-L_{Q,1}^t(s_0,a).
\]
On $\mathcal{E}$ we have $e_U(a),e_L(a)\ge 0$ for all $a$.
Then
\begin{align*}
U_{Q,1}^t(s_0,c_t)-L_{Q,1}^t(s_0,b_t)
&=
\big(Q_1^\star(s_0,c_t)+e_U(c_t)\big)-\big(Q_1^\star(s_0,a^\star)-e_L(a^\star)\big)\\
&=
-\Delta(c_t)+e_U(c_t)+e_L(a^\star).
\end{align*}
Since the algorithm has not stopped, the left-hand side is $>\varepsilon$, so
\[
e_U(c_t)+e_L(a^\star)\ >\ \Delta(c_t)+\varepsilon.
\]
Using $w_t(c_t)\ge e_U(c_t)$ and $w_t(b_t)=w_t(a^\star)\ge e_L(a^\star)$, we get
\[
w_t(c_t)+w_t(a^\star)\ >\ \Delta(c_t)+\varepsilon.
\]
Because $A_1=c_t$ and $A_1$ maximizes the width in $\{b_t,c_t\}=\{a^\star,c_t\}$, we have $w_t(c_t)\ge w_t(a^\star)$,
and therefore
\[
2w_t(A_1)=2w_t(c_t)\ >\ \Delta(c_t)+\varepsilon
\quad\Longrightarrow\quad
w_t(A_1)\ >\ \frac{\Delta(c_t)+\varepsilon}{2}\ \ge\ \frac{\Delta(c_t)}{2}.
\]

\smallskip
\noindent\textbf{Case 2: $A_1=b_t$.}
If $b_t=a^\star$, then $\Delta(A_1)=0$ and \eqref{eq:epsilon-part} already implies \eqref{eq:width-lower-bound-when-sampled}.
Assume now that $b_t\neq a^\star$ (so $\Delta(b_t)>0$).
As above, because $a^\star$ is available as a challenger when $b_t\neq a^\star$, we have on $\mathcal{E}$
\[
U_{Q,1}^t(s_0,c_t)\ \ge\ Q_1^\star(s_0,a^\star).
\]
Also $L_{Q,1}^t(s_0,b_t)\le Q_1^\star(s_0,b_t)$. Therefore
\[
U_{Q,1}^t(s_0,c_t)-L_{Q,1}^t(s_0,b_t)
\ \ge\
Q_1^\star(s_0,a^\star)-Q_1^\star(s_0,b_t)
\ =\ \Delta(b_t).
\]
Since $L_{Q,1}^t(s_0,c_t)\le L_{Q,1}^t(s_0,b_t)$ (because $b_t$ maximizes lower bounds), we have
\[
w_t(c_t)=U_{Q,1}^t(s_0,c_t)-L_{Q,1}^t(s_0,c_t)
\ \ge\
U_{Q,1}^t(s_0,c_t)-L_{Q,1}^t(s_0,b_t)
\ \ge\ \Delta(b_t).
\]
Finally, because $A_1=b_t$ maximizes the width over $\{b_t,c_t\}$, we have $w_t(b_t)\ge w_t(c_t)\ge \Delta(b_t)$,
hence $w_t(A_1)\ge \Delta(A_1)\ge \Delta(A_1)/2$.

\smallskip
Combining the $\varepsilon/2$ bound \eqref{eq:epsilon-part} with the $\Delta/2$ bounds above yields
\eqref{eq:width-lower-bound-when-sampled}.
\hfill$\square$

\paragraph{V. Bounding the number of episodes and oracle calls.}
Fix any action $a\in\mathcal{A}$ and let $T_a$ be the (random) number of episodes in which the algorithm selects
root action $A_1=a$ before termination; thus $T_a=N^{\tau}(a)$ where $\tau$ is the stopping episode index, and
the total number of episodes equals $\sum_a T_a$.

On $\mathcal{E}$, whenever $A_1=a$ is chosen at some episode $t<\tau$, the width lower bound
\eqref{eq:width-lower-bound-when-sampled} gives
\[
w_t(a)\ \ge\ \frac12(\Delta(a)\vee \varepsilon).
\]
On the other hand, the generic width upper bound \eqref{eq:root-width-generic} gives
\[
w_t(a)\ \le\
2\sqrt{\frac{\mathsf{C}(H,K,\gamma)\log\big(\frac{c_0 t}{\delta}\big)}{N^t(a)\vee 1}}.
\]
Combining these inequalities yields, on $\mathcal{E}$, the necessary condition
\[
\frac12(\Delta(a)\vee \varepsilon)
\ \le\
2\sqrt{\frac{\mathsf{C}(H,K,\gamma)\log\big(\frac{c_0 t}{\delta}\big)}{N^t(a)\vee 1}},
\]
which after squaring and rearranging implies
\[
N^t(a)
\ \le\
\frac{16\mathsf{C}(H,K,\gamma)\log\big(\frac{c_0 t}{\delta}\big)}{(\Delta(a)\vee \varepsilon)^2}.
\]
Since $N^t(a)$ is nondecreasing in $t$ and $t\le \tau$, we get the same bound for the final count $T_a=N^\tau(a)$
up to replacing $\log(c_0 t/\delta)$ by $\log(c_0\tau/\delta)$, which is absorbed into the $\tilde O(\cdot)$ notation.
Therefore, on $\mathcal{E}$,
\[
T_a
=
\tilde O\left(\frac{\mathsf{C}(H,K,\gamma)}{(\Delta(a)\vee \varepsilon)^2}\right).
\]
Summing over $a\in\mathcal{A}$ gives an upper bound on the total number of episodes:
\[
\tau
=
\sum_{a\in\mathcal{A}} T_a
=
\tilde O\left(
\sum_{a\in\mathcal{A}}
\frac{\mathsf{C}(H,K,\gamma)}{(\Delta(a)\vee \varepsilon)^2}
\right).
\]
Finally, each episode triggers exactly $H$ oracle calls, so the total number of oracle calls satisfies
\[
n(\varepsilon,\delta)
=
H\tau
=
\tilde O\left(
H\sum_{a\in\mathcal{A}}
\frac{\mathsf{C}(H,K,\gamma)}{(\Delta(a)\vee \varepsilon)^2}
\right),
\]
which is precisely \eqref{eq:ot-gape-eps-2}. The exponent $2$ in $(\Delta\vee\varepsilon)^{-2}$ is the same as in
fixed-confidence best-arm identification for bandits, confirming the claimed bandit-optimal gap dependence.
\end{proof}

\subsection{Proof of Theorem~\ref{thm:gap}}
\begin{proof}
We present a careful root-level best-action identification (BAI) analysis for \textsc{OT-GapCruiser}.
The argument has two ingredients:
(i) a standard UGapE-style fixed-confidence analysis that controls \emph{how many root samples} each action receives
as a function of its gap, and
(ii) a conversion from \emph{root samples} to \emph{oracle calls} using the curvature-driven complexity of the
underlying SmoothCruiser-type estimator.

\paragraph{ Root gaps and the sampling model.}
Fix a root state $s_0$ and define the (true) root action values
\[
Q_0(a) := Q^\star(s_0,a),\qquad a\in\mathcal{A},
\]
where $Q^\star$ is the optimal (regularized) action-value function under the planning objective.
Let
\[
a^\star \in \arg\max_{a\in\mathcal{A}} Q_0(a),
\qquad
V^\star := Q_0(a^\star),
\qquad
\Delta(a) := V^\star - Q_0(a)\ge 0,
\]
so that $\Delta(a^\star)=0$.

At each round $t$, \textsc{OT-GapCruiser} chooses an action $A_t\in\mathcal{A}$ and obtains a \emph{single root sample}
$X_t$ such that, conditionally on $A_t=a$,
\begin{equation}
\label{eq:root-sample-model}
X_t = Q_0(a) + \xi_t,
\end{equation}
where $\xi_t$ is centered noise. The theorem assumes standard sub-Gaussian noise, i.e.\ there exists
$\sigma^2=O(1)$ (depending only on bounded rewards and $\gamma<1$) such that for all $\lambda\in\mathbb{R}$,
\[
\mathbb{E}\left[\exp(\lambda \xi_t)\mid A_t=a\right]
\le \exp\left(\frac{\sigma^2\lambda^2}{2}\right).
\]
This holds, for example, if the produced estimate is clipped to a bounded interval $[0,B]$ for a known $B$ (as in
\textsc{SampleV2}), since bounded random variables are sub-Gaussian with $\sigma^2\lesssim B^2$.

\medskip
\noindent\textbf{Cost model (curvature effect).}
A crucial difference from bandits is that producing \emph{one} root sample is not unit cost: it uses oracle calls
and recursive planning calls.
We encapsulate this through an exponent $\alpha\in(0,2)$ such that the number of oracle calls required to produce one
root sample at “scale” comparable to a target accuracy level $r$ is
\begin{equation}
\label{eq:one-sample-cost}
\mathrm{Cost}(r) = \tilde O(r^{-\alpha}).
\end{equation}
In SmoothCruiser-type planners satisfying \cref{ass:curvature} and using unbiased, bounded-variance estimators of the Taylor terms
(e.g.\ the cross-product debiasing in \cref{sec:variance}), the curvature--complexity tradeoff of \cref{thm:beta-tradeoff}
gives
\[
\alpha=\alpha_\beta=\frac{2}{\beta-1},
\]
so that for OT smoothing ($\beta=3$) we have $\alpha=1$, while for first-order SmoothCruiser ($\beta=2$) we have $\alpha=2$.
(Any additional variance reduction that improves the per-sample recursion cost can only \emph{decrease} $\alpha$, hence improve the bound.)

We will prove that the total number of oracle calls scales as
\[
n_{\mathrm{gap}}(\varepsilon,\delta)=\tilde O\left(\sum_{a\in\mathcal{A}}(\Delta(a)\vee \varepsilon)^{-(2+\alpha)}\right),
\]
which is the stated form with $p:=2+\alpha\in(2,4)$ (and OT giving $p=3$).

\paragraph{I. Confidence intervals that are valid uniformly over time.}
Let $N_t(a):=\sum_{u=1}^t \mathbf{1}\{A_u=a\}$ be the number of samples of action $a$ up to time $t$, and let
\[
\widehat Q_t(a)
:=
\frac{1}{N_t(a)}\sum_{u\le t: A_u=a} X_u
\quad\text{for }N_t(a)\ge 1.
\]
Fix a time-uniform confidence schedule (one convenient choice is based on a $\sum_{t\ge1}t^{-2}$ union bound):
for $t\ge 1$ and $N\ge 1$ define
\begin{equation}
\label{eq:rad-def}
\mathrm{rad}_t(N)
:=
\sqrt{\frac{2\sigma^2}{N}\log\Big(\frac{4K t^2}{\delta}\Big)}.
\end{equation}
Define the root confidence bounds
\[
L_t(a):=\widehat Q_t(a)-\mathrm{rad}_t(N_t(a)),
\qquad
U_t(a):=\widehat Q_t(a)+\mathrm{rad}_t(N_t(a)),
\qquad
w_t(a):=U_t(a)-L_t(a)=2\mathrm{rad}_t(N_t(a)).
\]

\medskip
\noindent\textbf{Good event.}
Let $\mathcal{E}$ be the event that all confidence intervals are simultaneously valid:
\begin{equation}
\label{eq:good-event-gapcruiser}
\mathcal{E}
:=
\Big\{
\forall t\ge 1,\ \forall a\in\mathcal{A}:\ 
|\widehat Q_t(a)-Q_0(a)\,|\le \mathrm{rad}_t(N_t(a))\Big\}.
\end{equation}
By standard sub-Gaussian concentration and the union bound over $a$ and $t$,
\begin{equation}
\label{eq:good-event-prob}
\Pr(\mathcal{E})\ge 1-\delta.
\end{equation}
On $\mathcal{E}$, for all $t,a$ we have $L_t(a)\le Q_0(a)\le U_t(a)$.

\paragraph{II. Correctness of the stopping rule.}
At each round $t$, define (UGapE-style)
\[
b_t\in\arg\max_{a\in\mathcal{A}} L_t(a),
\qquad
c_t\in\arg\max_{a\in\mathcal{A}\setminus\{b_t\}} U_t(a),
\]
and stop at the first time $\tau$ such that
\begin{equation}
\label{eq:stop-gapcruiser}
U_\tau(c_\tau)-L_\tau(b_\tau)\le \varepsilon,
\end{equation}
returning $\hat a:=b_\tau$.

Assume $\mathcal{E}$ holds. We show $\hat a$ is $\varepsilon$-optimal.
Because $c_\tau$ maximizes $U_\tau(\cdot)$ over actions different from $b_\tau$, we have
\[
U_\tau(c_\tau)\ \ge\
\max_{a\neq b_\tau} U_\tau(a)
\ \ge\ U_\tau(a^\star)\ \ge\ Q_0(a^\star)=V^\star,
\]
where the last inequality uses $\mathcal{E}$.
Also $L_\tau(b_\tau)\le Q_0(b_\tau)$ on $\mathcal{E}$.
Therefore,
\[
V^\star - Q_0(b_\tau)
\ \le\
U_\tau(c_\tau)-L_\tau(b_\tau)
\ \le\ \varepsilon,
\]
where the last inequality is the stopping condition \eqref{eq:stop-gapcruiser}.
Hence $Q_0(\hat a)=Q_0(b_\tau)\ge V^\star-\varepsilon$, i.e.\ the returned action is $\varepsilon$-optimal.
Together with \eqref{eq:good-event-prob}, this proves the success probability $\ge 1-\delta$.

\paragraph{III. A key width lower bound for sampled actions.}
Suppose the algorithm has \emph{not} stopped at round $t$, so
\begin{equation}
\label{eq:not-stopped}
U_t(c_t)-L_t(b_t)>\varepsilon.
\end{equation}
The UGapE sampling rule chooses
\[
A_t \in \arg\max_{a\in\{b_t,c_t\}} w_t(a).
\]
We claim that on $\mathcal{E}$,
\begin{equation}
\label{eq:width-lb-gapcruiser}
w_t(A_t)\ \ge\ \frac12\big(\Delta(A_t)\vee \varepsilon\big).
\end{equation}

\smallskip
\noindent\emph{Proof of \eqref{eq:width-lb-gapcruiser}.}
First note that by definition of $b_t$, $L_t(c_t)\le L_t(b_t)$. Hence
\[
w_t(c_t)
=
U_t(c_t)-L_t(c_t)
\ \ge\
U_t(c_t)-L_t(b_t)
\ >\ \varepsilon
\quad\text{by \eqref{eq:not-stopped}.}
\]
Since $A_t$ maximizes width over $\{b_t,c_t\}$, we have $w_t(A_t)\ge w_t(c_t)>\varepsilon\ge \varepsilon/2$.

It remains to show $w_t(A_t)\ge \Delta(A_t)/2$.

\smallskip
\noindent\textbf{Case 1: $A_t=c_t$.}
If $b_t\neq a^\star$, then $a^\star$ is feasible in the maximization defining $c_t$, so
$U_t(c_t)\ge U_t(a^\star)\ge Q_0(a^\star)=V^\star$ on $\mathcal{E}$.
Also $L_t(c_t)\le Q_0(c_t)$ on $\mathcal{E}$.
Thus
\[
w_t(c_t)
=
U_t(c_t)-L_t(c_t)
\ \ge\
V^\star - Q_0(c_t)
=\Delta(c_t),
\]
so $w_t(A_t)=w_t(c_t)\ge \Delta(c_t)\ge \Delta(c_t)/2$.

If instead $b_t=a^\star$, define one-sided errors (nonnegative on $\mathcal{E}$)
\[
e_U(a):=U_t(a)-Q_0(a),\qquad e_L(a):=Q_0(a)-L_t(a).
\]
Then \eqref{eq:not-stopped} rewrites as
\[
\big(Q_0(c_t)+e_U(c_t)\big)-\big(Q_0(a^\star)-e_L(a^\star)\big)>\varepsilon
\quad\Longrightarrow\quad
e_U(c_t)+e_L(a^\star)>\Delta(c_t)+\varepsilon.
\]
Since $w_t(c_t)\ge e_U(c_t)$ and $w_t(a^\star)\ge e_L(a^\star)$, we get
\[
w_t(c_t)+w_t(a^\star)>\Delta(c_t)+\varepsilon.
\]
Because $A_t=c_t$ maximizes width over $\{a^\star,c_t\}$, we have $w_t(c_t)\ge w_t(a^\star)$ and hence
\[
2w_t(A_t)=2w_t(c_t)>\Delta(c_t)+\varepsilon\ge \Delta(c_t),
\]
so $w_t(A_t)>\Delta(c_t)/2$.

\smallskip
\noindent\textbf{Case 2: $A_t=b_t$.}
If $b_t=a^\star$, then $\Delta(A_t)=0$ and we already have $w_t(A_t)\ge \varepsilon/2$.
If $b_t\neq a^\star$, then on $\mathcal{E}$ we have $U_t(c_t)\ge U_t(a^\star)\ge V^\star$ and
$L_t(b_t)\le Q_0(b_t)$, so
\[
U_t(c_t)-L_t(b_t)\ \ge\ V^\star - Q_0(b_t)=\Delta(b_t).
\]
Using again $L_t(c_t)\le L_t(b_t)$, we obtain
\[
w_t(c_t)=U_t(c_t)-L_t(c_t)\ \ge\ U_t(c_t)-L_t(b_t)\ \ge\ \Delta(b_t).
\]
Finally $A_t=b_t$ implies $w_t(b_t)\ge w_t(c_t)$, so
$w_t(A_t)=w_t(b_t)\ge \Delta(b_t)\ge \Delta(b_t)/2$.

\smallskip
Combining the $\varepsilon/2$ bound and the $\Delta/2$ bound proves \eqref{eq:width-lb-gapcruiser}.
\hfill$\square$

\paragraph{IV. Gap-dependent bound on the number of root samples.}
Fix an action $a$ and let $T_a:=N_\tau(a)$ be the total number of times $a$ is sampled before stopping.
On $\mathcal{E}$, whenever $a$ is sampled at some $t<\tau$, \eqref{eq:width-lb-gapcruiser} gives
\[
w_t(a)\ge \frac12(\Delta(a)\vee \varepsilon).
\]
But $w_t(a)=2\mathrm{rad}_t(N_t(a))$ and $N_t(a)\le T_a$.
Thus, using \eqref{eq:rad-def},
\[
\frac12(\Delta(a)\vee \varepsilon)
\ \le\
w_t(a)
=
2\mathrm{rad}_t(N_t(a))
\le
2\sqrt{\frac{2\sigma^2}{N_t(a)}\log\Big(\frac{4K t^2}{\delta}\Big)}
\le
2\sqrt{\frac{2\sigma^2}{N_t(a)}\log\Big(\frac{4K \tau^2}{\delta}\Big)}.
\]
Rearranging yields, for all such $t$,
\[
N_t(a)
\ \le\
\frac{32\sigma^2}{(\Delta(a)\vee \varepsilon)^2}\,
\log\Big(\frac{4K \tau^2}{\delta}\Big).
\]
Since $N_t(a)$ is nondecreasing in $t$ and reaches $T_a$ at $t=\tau$, the same bound holds for $T_a$:
\begin{equation}
\label{eq:Ta-bound}
T_a
=
\tilde O\left(\frac{1}{(\Delta(a)\vee \varepsilon)^2}\right),
\end{equation}
where we absorbed $\sigma^2$ and the $\log(4K\tau^2/\delta)$ term into $\tilde O(\cdot)$.

\paragraph{V. Convert root samples into oracle calls and identify the exponent $p$.}
We now upper bound the \emph{oracle} cost of sampling.
When action $a$ is sampled at some round $t<\tau$, the algorithm must produce a root sample with accuracy
commensurate with the current statistical uncertainty for $a$.
By \eqref{eq:width-lb-gapcruiser}, whenever $a$ is sampled we have
$w_t(a)\ge \tfrac12(\Delta(a)\vee\varepsilon)$, and the algorithm never needs to invoke the planning subroutine
at accuracy smaller than a constant fraction of $(\Delta(a)\vee\varepsilon)$ to make progress.
Therefore, using the per-sample cost model \eqref{eq:one-sample-cost}, each such sample costs at most
\[
\mathrm{Cost}\big(\Delta(a)\vee\varepsilon\big)
=
\tilde O\big((\Delta(a)\vee\varepsilon)^{-\alpha}\big)
\]
oracle calls.
Multiplying by the number of samples of action $a$ and using \eqref{eq:Ta-bound} gives that the total oracle calls
attributable to action $a$ are
\[
\tilde O\left(\frac{1}{(\Delta(a)\vee\varepsilon)^2}\right)\cdot
\tilde O\left(\frac{1}{(\Delta(a)\vee\varepsilon)^{\alpha}}\right)
=
\tilde O\left(\frac{1}{(\Delta(a)\vee\varepsilon)^{2+\alpha}}\right).
\]
Summing over $a\in\mathcal{A}$ yields
\[
n_{\mathrm{gap}}(\varepsilon,\delta)
=
\tilde O\left(
\sum_{a\in\mathcal{A}}
\frac{1}{(\Delta(a)\vee\varepsilon)^{p}}
\right),
\qquad
p:=2+\alpha.
\]
Finally, under the curvature-driven recursion of SmoothCruiser-type planners with bounded-variance unbiased Taylor estimators,
\cref{thm:beta-tradeoff} gives $\alpha=\alpha_\beta=2/(\beta-1)$, hence
\[
p = 2+\frac{2}{\beta-1}\in(2,4),
\]
and in particular for OT smoothing ($\beta=3$) we obtain $p=3$, while for first-order SmoothCruiser ($\beta=2$) we recover $p=4$.
This shows the instance-dependent exponent is strictly smaller than $4$ in the OT/second-order setting.
\end{proof}

\section{Bias control and worst-case complexity}
\label{app:bias-and-main}

\subsection{Proof of Lemma~\ref{lem:bias}}
\begin{proof}
Fix a state $s$ and let $F(\cdot):=F^{\mathrm{OT}}_s(\cdot)$.
Recall the (true) action-value vector $Q_s\in\R^K$ and the value $V(s)=F(Q_s)$.
Throughout the proof we assume we are in the \emph{fine regime} $0<\varepsilon<\kappa$, so
\textsc{SampleV2} uses the second-order Taylor correction (Algorithm~\ref{alg:samplev2}).

Let $\hat Q_s$ be the reference point constructed inside \textsc{SampleV2} and define the deviation
\[
\Delta := Q_s-\hat Q_s \in \R^K.
\]
Let
\[
\pi := \nabla F(\hat Q_s)\in\Delta(\mathcal{A}),
\qquad
H := \nabla^2 F(\hat Q_s)\in\R^{K\times K}.
\]
Define the second-order Taylor polynomial of $F$ at $\hat Q_s$ evaluated at $Q_s$:
\begin{equation}
T_2(Q_s;\hat Q_s)
:=
F(\hat Q_s)
+\langle \pi,\Delta\rangle
+\frac12 \Delta^\top H\Delta.
\label{eq:T2-def-in-proof}
\end{equation}

We will prove that, on the event $\|\Delta\|_2\le \rho(\varepsilon)$,
\begin{equation}
\label{eq:bias-goal}
\Big|\E\big[\textsc{SampleV2}(s,\varepsilon,\delta)\mid \hat Q_s\big]-V(s)\Big|
\le
\varepsilon + \text{(recursion bias)}.
\end{equation}

\paragraph{I. Taylor remainder is $\le \varepsilon$ on $\|\Delta\|_2\le \rho(\varepsilon)$.}
By \cref{lem:lipschitz-hess}, for all $Q,\hat Q\in\R^K$,
\[
|F(Q)-T_2(Q;\hat Q)| \le \frac{M}{6}\|Q-\hat Q\|_2^3.
\]
Apply this with $Q=Q_s$ and $\hat Q=\hat Q_s$:
\[
|V(s)-T_2(Q_s;\hat Q_s)|
=
|F(Q_s)-T_2(Q_s;\hat Q_s)|
\le
\frac{M}{6}\|\Delta\|_2^3.
\]
On the event $\|\Delta\|_2\le \rho(\varepsilon)$ and with the schedule
$\rho(\varepsilon)=(6\varepsilon/M)^{1/3}$ (as defined right before Algorithm~\ref{alg:top}),
we get
\begin{equation}
\label{eq:remainder-eps}
|V(s)-T_2(Q_s;\hat Q_s)| \le \frac{M}{6}\rho(\varepsilon)^3 = \varepsilon.
\end{equation}

\paragraph{II. The \emph{ideal} linear and quadratic Monte Carlo terms are unbiased for the Taylor terms.}
To isolate the only nontrivial source of bias (recursion and clipping), it is convenient to define an
\emph{idealized} version of the estimator in which every recursive call returns the \emph{true} next-state value
$V(\cdot)$ and we do \emph{not} clip at the end.
We denote idealized quantities with a superscript ``$\star$''.

\smallskip
\noindent\textbf{Idealized one-step samples.}
Given a root action $a$, let $(R,Z)\leftarrow\textsc{Oracle}(s,a)$ and define
\[
\tilde Q^\star(a) := R+\gamma V(Z).
\]
Then, by the definition of $Q_s(a)$,
\begin{equation}
\label{eq:oracle-unbiased}
\E\big[\tilde Q^\star(a)\mid a\big] = Q_s(a).
\end{equation}

\smallskip
\noindent\textbf{(I) Ideal linear term.}
In the linear box of Algorithm~\ref{alg:samplev2}, the estimator samples
$J_0\sim \mu_s$ and $A_0\sim p_{\cdot J_0}(\hat Q_s)$.
By \cref{prop:grad-hess}, the gradient satisfies
\[
\pi_i = \sum_{j=1}^K \mu_s(j)p_{ij}(\hat Q_s).
\]
Hence the marginal distribution of $A_0$ (conditional on $\hat Q_s$) is precisely $\pi$:
\begin{equation}
\label{eq:A0-marginal}
\Pr(A_0=i\mid \hat Q_s)
=
\sum_{j=1}^K \Pr(J_0=j)\Pr(A_0=i\mid J_0=j,\hat Q_s)
=
\sum_{j=1}^K \mu_s(j)p_{ij}(\hat Q_s)
=
\pi_i.
\end{equation}
In the idealized estimator, the linear box would use $\tilde Q_0^\star:=\tilde Q^\star(A_0)$ and return
\[
\widehat{\mathrm{Lin}}^\star := \tilde Q_0^\star - c_0,
\qquad c_0 := \langle \pi,\hat Q_s\rangle.
\]
Using \eqref{eq:oracle-unbiased} and \eqref{eq:A0-marginal}, and noting that $c_0$ is deterministic given $\hat Q_s$,
\begin{align}
\E\big[\widehat{\mathrm{Lin}}^\star \mid \hat Q_s\big]
&=
\E\big[\tilde Q_0^\star\mid \hat Q_s\big] - \langle \pi,\hat Q_s\rangle
=
\sum_{i=1}^K \Pr(A_0=i\mid \hat Q_s)\,Q_s(i) - \langle \pi,\hat Q_s\rangle \notag\\
&=
\sum_{i=1}^K \pi_i Q_s(i) - \sum_{i=1}^K \pi_i \hat Q_s(i)
=
\langle \pi, Q_s-\hat Q_s\rangle
=
\langle \pi,\Delta\rangle.
\label{eq:lin-unbiased}
\end{align}

\smallskip
\noindent\textbf{(II) Ideal quadratic term.}
In the quadratic box of Algorithm~\ref{alg:samplev2}, the estimator samples
$J\sim\mu_s$ and then $A,A'\stackrel{\mathrm{iid}}{\sim} p_{\cdot J}(\hat Q_s)$.
In the idealized estimator, for $m\in\{1,2\}$ we would draw independent oracle samples
$(R_m,Z_m)\leftarrow\textsc{Oracle}(s,A)$ and $(R'_m,Z'_m)\leftarrow\textsc{Oracle}(s,A')$ and define
\[
\tilde Q_m^\star := R_m+\gamma V(Z_m),
\qquad
\tilde Q_m^{\prime\star} := R'_m+\gamma V(Z'_m),
\qquad
\Delta_m^\star := (\tilde Q_m^\star-\tilde Q_m^{\prime\star})-(\hat Q_s(A)-\hat Q_s(A')).
\]
Conditioned on $(J,A,A',\hat Q_s)$, \eqref{eq:oracle-unbiased} implies
\begin{align}
\E\big[\Delta_m^\star \mid J,A,A',\hat Q_s\big]
&=
\big(Q_s(A)-Q_s(A')\big)-\big(\hat Q_s(A)-\hat Q_s(A')\big)
=
\Delta_A-\Delta_{A'}.
\label{eq:Delta-m-mean}
\end{align}
Moreover, $\Delta_1^\star$ and $\Delta_2^\star$ are conditionally independent given $(J,A,A',\hat Q_s)$
because they are constructed from independent oracle draws; hence,
\begin{equation}
\label{eq:cross-ideal}
\E\big[\Delta_1^\star\Delta_2^\star \mid J,A,A',\hat Q_s\big]
=
\E[\Delta_1^\star\mid J,A,A',\hat Q_s]\E[\Delta_2^\star\mid J,A,A',\hat Q_s]
=
(\Delta_A-\Delta_{A'})^2.
\end{equation}
The ideal quadratic estimator returned by the box is
\[
\widehat{\mathrm{Quad}}^\star := \frac{1}{4\tau\lambda}\Delta_1^\star\Delta_2^\star.
\]
Taking conditional expectation and using \eqref{eq:cross-ideal} gives
\begin{equation}
\label{eq:Quad-expectation-pair}
\E\big[\widehat{\mathrm{Quad}}^\star\mid \hat Q_s\big]
=
\frac{1}{4\tau\lambda}
\E_{J\sim\mu_s}
\E_{A,A'\stackrel{\mathrm{iid}}{\sim}p_{\cdot J}(\hat Q_s)}
\Big[(\Delta_A-\Delta_{A'})^2\Big].
\end{equation}

We now relate \eqref{eq:Quad-expectation-pair} to the Hessian quadratic form.
By \cref{prop:grad-hess},
\[
H
=
\nabla^2 F(\hat Q_s)
=
\frac{1}{\tau\lambda}
\Big(
\diag(\pi)-\sum_{j=1}^K \mu_s(j)\,p_{\cdot j}(\hat Q_s)p_{\cdot j}(\hat Q_s)^\top
\Big).
\]
Using $\pi=\sum_j \mu_s(j)p_{\cdot j}(\hat Q_s)$ (again from \cref{prop:grad-hess}),
a direct expansion yields
\begin{align}
\Delta^\top H\Delta
&=
\frac{1}{\tau\lambda}\left(
\sum_{i=1}^K \pi_i \Delta_i^2 - \sum_{j=1}^K \mu_s(j)\big(p_{\cdot j}(\hat Q_s)^\top \Delta\big)^2
\right)
\notag\\
&=
\frac{1}{\tau\lambda}
\sum_{j=1}^K \mu_s(j)\left(
\sum_{i=1}^K p_{ij}(\hat Q_s)\Delta_i^2 - \Big(\sum_{i=1}^K p_{ij}(\hat Q_s)\Delta_i\Big)^2
\right)
\notag\\
&=
\frac{1}{\tau\lambda}\sum_{j=1}^K \mu_s(j)
\Var_{A\sim p_{\cdot j}(\hat Q_s)}[\Delta_A].
\label{eq:DeltaHDelta-as-var}
\end{align}
Finally, for any distribution $p$ and any i.i.d.\ $A,A'\sim p$,
\[
\E[(\Delta_A-\Delta_{A'})^2]
=
2\Var(\Delta_A),
\]
so plugging this into \eqref{eq:Quad-expectation-pair} and comparing with \eqref{eq:DeltaHDelta-as-var} gives
\begin{equation}
\label{eq:quad-unbiased}
\E\big[\widehat{\mathrm{Quad}}^\star\mid \hat Q_s\big]
=
\frac12\Delta^\top H\Delta.
\end{equation}

\smallskip
\noindent\textbf{Conclusion of II (ideal estimator is unbiased for $T_2$).}
Define the ideal (unclipped) second-order estimator
\[
\widetilde V^\star
:=
F(\hat Q_s)+\widehat{\mathrm{Lin}}^\star+\widehat{\mathrm{Quad}}^\star.
\]
Then \eqref{eq:lin-unbiased} and \eqref{eq:quad-unbiased} imply
\begin{equation}
\label{eq:ideal-unbiased}
\E\big[\widetilde V^\star\mid \hat Q_s\big]
=
F(\hat Q_s)+\langle \pi,\Delta\rangle+\frac12\Delta^\top H\Delta
=
T_2(Q_s;\hat Q_s).
\end{equation}

\paragraph{III. Define and isolate the ``recursion bias'' term.}
The \emph{implemented} routine \textsc{SampleV2} differs from the ideal one in two ways:
\begin{itemize}[leftmargin=*]
\item it replaces the true next-state values $V(Z)$ by recursive calls
$\textsc{SampleV2}(Z,\varepsilon/\sqrt{\gamma},\cdot)$;
\item it applies clipping (both to intermediate values and to the final returned value).
\end{itemize}
Both effects can shift the conditional expectation away from the ideal target $T_2(Q_s;\hat Q_s)$.
We package all such shifts into the quantity
\begin{equation}
\label{eq:rec-bias-def}
\mathrm{Bias}_{\mathrm{rec}}(s,\varepsilon)
:=
\Big|\E\big[\textsc{SampleV2}(s,\varepsilon,\delta)\mid \hat Q_s\big]
-
T_2(Q_s;\hat Q_s)\Big|.
\end{equation}
(Concretely, $\mathrm{Bias}_{\mathrm{rec}}(s,\varepsilon)$ is exactly zero if all recursive calls return
the true values $V(\cdot)$ and no clipping ever activates, in which case the implemented estimator coincides with
$\widetilde V^\star$.)

\paragraph{IV. Combine Steps 1--3 to obtain the conditional bias bound.}
On the event $\|\Delta\|_2\le \rho(\varepsilon)$, we have from \eqref{eq:remainder-eps}
\[
|V(s)-T_2(Q_s;\hat Q_s)|\le \varepsilon.
\]
Therefore, by the triangle inequality and the definition \eqref{eq:rec-bias-def},
\begin{align*}
\Big|\E\big[\textsc{SampleV2}(s,\varepsilon,\delta)\mid \hat Q_s\big]-V(s)\Big|
&\le
\Big|\E\big[\textsc{SampleV2}(s,\varepsilon,\delta)\mid \hat Q_s\big]-T_2(Q_s;\hat Q_s)\Big|
+
|T_2(Q_s;\hat Q_s)-V(s)|\\
&\le
\mathrm{Bias}_{\mathrm{rec}}(s,\varepsilon) + \varepsilon.
\end{align*}
This is exactly \eqref{eq:bias-goal} with ``recursion bias'' identified as $\mathrm{Bias}_{\mathrm{rec}}(s,\varepsilon)$.

\paragraph{V (how to make the unconditional bias $\mathcal{O}(\varepsilon)$).}
Finally, we briefly justify the last sentence of the lemma.
Two standard ingredients are used (as in \citet{grill2019planning}):

\smallskip
\noindent\emph{(i) Controlling recursion bias by calling children at accuracy $\varepsilon/\sqrt{\gamma}$.}
Let $b(\eta)$ denote the worst-case absolute bias at tolerance $\eta$ (suppressing conditioning details):
\[
b(\eta):=\sup_{s}\Big|\E[\textsc{SampleV2}(s,\eta,\cdot)]-V(s)\Big|.
\]
Because every appearance of a next-state estimate is multiplied by $\gamma$ in the Bellman equation,
calling children with tolerance $\eta=\varepsilon/\sqrt{\gamma}$ yields a contraction at the level of bias:
heuristically (and in the same spirit as SmoothCruiser),
\[
\mathrm{Bias}_{\mathrm{rec}}(s,\varepsilon)\ \lesssim\ \gamma\,b(\varepsilon/\sqrt{\gamma}) + \text{(higher-order terms)}.
\]
Since $\gamma b(\varepsilon/\sqrt{\gamma}) = \sqrt{\gamma}\varepsilon$ when $b(\eta)=\Theta(\eta)$, repeated recursion
induces a geometric series in $\sqrt{\gamma}$, implying $b(\varepsilon)=\mathcal{O}(\varepsilon)$.

\smallskip
\noindent\emph{(ii) Failure events and clipping.}
Let $\mathcal{G}$ be the intersection of all ``good'' events across the recursion tree:
the event $\|Q_s-\hat Q_s\|_2\le\rho(\varepsilon)$ at every node, plus any internal high-probability events needed to
justify the construction of $\hat Q_s$ and any concentration bounds used elsewhere.
Because outputs are clipped into $[0,B]$, on $\mathcal{G}^c$ we always have
$|\textsc{SampleV2}(s,\varepsilon,\delta)-V(s)|\le B$.
Thus
\[
\Big|\E[\textsc{SampleV2}(s,\varepsilon,\delta)]-V(s)\Big|
\le
\E\big[|\textsc{SampleV2}(s,\varepsilon,\delta)-V(s)|\mathbf{1}_{\mathcal{G}}\big]
+
B\Pr(\mathcal{G}^c).
\]
Choosing the per-call failure probabilities (the $\delta/8,\delta/16,\dots$ splits in
Algorithm~\ref{alg:samplev2}) so that $\Pr(\mathcal{G}^c)$ is at most on the order of $\varepsilon/B$
(e.g.\ by a geometric allocation across recursion depth and a union bound),
makes the second term $\mathcal{O}(\varepsilon)$.
Together with the bias bound on $\mathcal{G}$ from Steps~1--4 and the recursive contraction in (i),
this yields an unconditional bias of order $\mathcal{O}(\varepsilon)$.
\end{proof}

\subsection{Proof of Theorem~\ref{thm:main}}
\label{proof_main_theorem}
\begin{proof}
Fix a state $s$ and accuracy/confidence parameters $\varepsilon,\delta\in(0,1)$.
Write $F(\cdot):=F^{\mathrm{OT}}_s(\cdot)$ and recall $V(s)=F(Q_s)$, where $Q_s\in\R^K$ is the (unknown) action-value vector at $s$.

We prove two claims:
\begin{enumerate}[leftmargin=*]
\item (\textbf{Correctness}) With probability at least $1-\delta$, the output of
$\textsc{SecondOrderSmoothCruiser}(s,\varepsilon,\delta)$ differs from $V(s)$ by at most~$\varepsilon$ (up to harmless constant-factor slack).
\item (\textbf{Complexity}) The total number of oracle calls is at most $\tilde O(\varepsilon^{-3})$.
\end{enumerate}

Throughout, we use the value bound $0\le V(\cdot)\le B$ from the paper and the fact that all returned values are clipped into $[0,B]$.
Thus any returned random variable is bounded by $B$ in absolute value, which will be used both for concentration and for controlling
the effect of failure events.

\paragraph{I. $F^{\mathrm{OT}}_s$ satisfies cubic Taylor remainder (i.e., $\beta=3$ curvature).}
By \cref{lem:lipschitz-hess}, there exists a constant
\[
M = \mathcal{O}\left(\frac{1}{\tau^2\lambda^2}\right)\cdot \mathrm{poly}(K,\mu_{\min}^{-1})
\]
such that for all $Q,\hat Q\in\R^K$,
\begin{equation}
\label{eq:cubic-remainder-again}
\Big|F(Q)-T_2(Q;\hat Q)\Big|
\le
\frac{M}{6}\|Q-\hat Q\|_2^3.
\end{equation}
Equivalently, $F$ satisfies \cref{ass:curvature} with $\beta=3$ and $c_3=M/6$.
Consequently the tolerance schedule
\begin{equation}
\label{eq:rho-def-again}
\rho(\varepsilon):=\Big(\frac{6\varepsilon}{M}\Big)^{1/3}
\end{equation}
guarantees that if $\|Q_s-\hat Q_s\|_2\le \rho(\varepsilon)$ then the second-order Taylor remainder is at most~$\varepsilon$:
\[
|F(Q_s)-T_2(Q_s;\hat Q_s)|\le \varepsilon.
\]

\paragraph{II. A clean ``interface'' for \textsc{SampleV2}.}
We use \cref{lem:bias} to summarize what a single call to $\textsc{SampleV2}$ provides.

Fix any state $x$ and any $0<\eta<\kappa$.
Let $\hat Q_x$ be the internal baseline produced by $\textsc{SampleV2}(x,\eta,\cdot)$, and define the event
\[
\mathcal{G}_{x,\eta}:=\{\|Q_x-\hat Q_x\|_2\le \rho(\eta)\}.
\]
Then \cref{lem:bias} states that on $\mathcal{G}_{x,\eta}$,
\begin{equation}
\label{eq:sampleV2-cond-bias}
\Big|\E[\textsc{SampleV2}(x,\eta,\delta)\mid \hat Q_x] - V(x)\Big|
\le
\eta + \text{(recursion bias)}.
\end{equation}
Moreover, as in \citet{grill2019planning} and as noted right after \cref{lem:bias}, calling children with accuracy
$\eta/\sqrt{\gamma}$ yields a contraction on the recursion bias, and with an appropriate allocation of failure
probabilities across recursive calls (and because of clipping) the \emph{unconditional} bias becomes $\mathcal{O}(\eta)$.
Concretely, there exists a constant $c_{\mathrm{bias}}=c_{\mathrm{bias}}(K,\gamma,\lambda,\tau,\mu_{\min},\|C\|_\infty)$ such that
\begin{equation}
\label{eq:sampleV2-uncond-bias}
\Big|\E[\textsc{SampleV2}(x,\eta,\delta)]-V(x)\Big|
\le
c_{\mathrm{bias}}\eta,
\end{equation}
and since the output is clipped to $[0,B]$ we also have the uniform boundedness
\begin{equation}
\label{eq:sampleV2-bounded}
0\le \textsc{SampleV2}(x,\eta,\delta)\le B \qquad\text{a.s.}
\end{equation}
(Any additive polylogarithmic dependence on $1/\delta$ induced by the failure-probability bookkeeping
is absorbed by the $\tilde O(\cdot)$ notation used in the theorem statement.)

\paragraph{III. Correctness of \textsc{estimateQ} and of the top-level output.}
Algorithm~\ref{alg:top} returns $F(\hat Q_s)$ where $\hat Q_s:=\textsc{estimateQ}(s,\varepsilon,\delta/2)$.

\smallskip
\noindent\textbf{IIIa: The sampling model inside \textsc{estimateQ}.}
Fix an action $a\in\mathcal{A}$.
Each inner-loop iteration in Algorithm~\ref{alg:estimateq} draws $(R_i,Z_i)\leftarrow \textsc{Oracle}(s,a)$ and then sets
\[
\hat V_i := \textsc{SampleV2}(Z_i,\varepsilon/\sqrt{\gamma},\delta'),
\qquad
q_i := R_i + \gamma \hat V_i.
\]
Define the ``ideal'' one-step return
\[
q_i^\star := R_i + \gamma V(Z_i).
\]
Then $\E[q_i^\star]=Q_s(a)$ by definition of $Q_s$.
Using \eqref{eq:sampleV2-uncond-bias} with $\eta=\varepsilon/\sqrt{\gamma}$ and then multiplying by $\gamma$ yields
\begin{align}
\big|\E[q_i]-Q_s(a)\big|
&=
\gamma\big|\E[\hat V_i]-\E[V(Z_i)]\big|
\le
\gamma\cdot c_{\mathrm{bias}}\cdot \frac{\varepsilon}{\sqrt{\gamma}}
=
c_{\mathrm{bias}}\sqrt{\gamma}\varepsilon
\le
c_{\mathrm{bias}}\varepsilon.
\label{eq:q_i-bias}
\end{align}
Moreover, since $R_i\in[0,1]$ and $\hat V_i\in[0,B]$ by \eqref{eq:sampleV2-bounded},
\begin{equation}
\label{eq:q_i-bounded}
0\le q_i \le 1+\gamma B \qquad\text{a.s.}
\end{equation}

\smallskip
\noindent\textbf{IIIb: concentration of $\hat Q_s(a)$ around its mean.}
Let $N$ be the sample size in Algorithm~\ref{alg:estimateq},
\[
N=\Theta\big(\varepsilon^{-2}\log(2K/\delta)\big).
\]
Conditioned on the past, the $q_i$'s used for a fixed $(s,a)$ are i.i.d.\ bounded random variables
(we can treat them as independent because the algorithm uses fresh oracle calls and independent recursion randomness each time).
By Hoeffding's inequality and \eqref{eq:q_i-bounded},
\begin{equation}
\label{eq:hoeffding-q}
\Pr\Bigg(\Big|\hat Q_s(a)-\E[q_i]\Big|\ge t\Bigg)
\le
2\exp\left(-\frac{2Nt^2}{(1+\gamma B)^2}\right),
\end{equation}
where $\hat Q_s(a)=\frac{1}{N}\sum_{i=1}^N q_i$.

Choose $t=\varepsilon$ and take a union bound over $a\in\mathcal{A}$.
With the stated choice of $N$ (absorbing constants into the $\Theta(\cdot)$), we obtain an event $\mathcal{E}_Q$
with probability at least $1-\delta/2$ on which simultaneously for all actions,
\begin{equation}
\label{eq:Qhat-close-to-mean}
\big|\hat Q_s(a)-\E[q_i]\big| \le \varepsilon.
\end{equation}
Combining \eqref{eq:Qhat-close-to-mean} with the bias bound \eqref{eq:q_i-bias} gives that on $\mathcal{E}_Q$,
\begin{equation}
\label{eq:Qhat-close-to-Q}
\big|\hat Q_s(a)-Q_s(a)\big|
\le
\varepsilon + c_{\mathrm{bias}}\varepsilon
\le
c_Q\varepsilon,
\qquad
\text{for all }a,
\end{equation}
for some constant $c_Q\ge 1$.

Thus, on $\mathcal{E}_Q$,
\begin{equation}
\label{eq:infty-bound}
\|\hat Q_s-Q_s\|_\infty \le c_Q\varepsilon.
\end{equation}

\smallskip
\noindent\textbf{IIIc: Lipschitzness of $F$ implies value accuracy.}
By \cref{prop:grad-hess}, $\nabla F(Q)\in\Delta(\mathcal{A})$ for all $Q$.
A standard convex-analysis consequence is that $F$ is $1$-Lipschitz with respect to $\|\cdot\|_\infty$:
for any $Q,\hat Q\in\R^K$,
\begin{equation}
\label{eq:F-Lipschitz-infty}
|F(Q)-F(\hat Q)| \le \|Q-\hat Q\|_\infty.
\end{equation}
(Indeed, $F(Q)-F(\hat Q)=\int_0^1 \langle \nabla F(\hat Q+t(Q-\hat Q)), Q-\hat Q\rangle dt$ and
$\|\nabla F(\cdot)\|_1=1$.)

Applying \eqref{eq:F-Lipschitz-infty} with $Q=Q_s$ and $\hat Q=\hat Q_s$ and using \eqref{eq:infty-bound} yields on $\mathcal{E}_Q$,
\[
\big|F(\hat Q_s)-F(Q_s)\big|
\le
\|\hat Q_s-Q_s\|_\infty
\le
c_Q\varepsilon.
\]
Thus, with probability at least $1-\delta/2$, the output of Algorithm~\ref{alg:top} is $c_Q\varepsilon$-accurate.
Replacing $\varepsilon$ by $\varepsilon/c_Q$ (a constant-factor rescaling) yields the stated $\varepsilon$-accuracy,
so we henceforth treat $c_Q$ as absorbed into constants.

Finally, we also need to account for the failure probability of all recursive calls used within \textsc{estimateQ}.
This is handled by setting $\delta'$ in Algorithm~\ref{alg:estimateq} so that a union bound over all $K N$ internal
calls gives total failure probability at most $\delta/2$ (e.g.\ take $\delta'=\delta/(4KN)$, and then use the internal
$\delta/8,\delta/16,\dots$ splits in Algorithm~\ref{alg:samplev2}).
Together with $\Pr(\mathcal{E}_Q)\ge 1-\delta/2$, a union bound yields overall success probability at least $1-\delta$.

This completes the correctness part.

\paragraph{IV. Oracle-call complexity of \textsc{SampleV2}: a cascade recurrence.}
We now bound the number of oracle calls.
Let $T(\eta)$ denote the \emph{worst-case} number of oracle calls made by a single call
$\textsc{SampleV2}(x,\eta,\cdot)$, maximized over the input state $x$ and over all internal randomness.
(We suppress $\delta$ in $T(\cdot)$ because $\tilde O(\cdot)$ hides all polylogarithmic factors coming from confidence splitting.)

\smallskip
\noindent\textbf{Base and coarse regimes.}
If $\eta\ge B$, \textsc{SampleV2} returns immediately, so $T(\eta)=0$.
If $\eta\ge\kappa$, it calls $\textsc{estimateQ}(x,\eta,\cdot)$ once and then returns $F(\hat Q_x)$.
Since $\kappa$ is a constant independent of $\eta$ (depends only on $M$), this regime has $T(\eta)=\tilde O(1)$
and will not affect the asymptotic exponent as $\eta\downarrow 0$.

\smallskip
\noindent\textbf{Fine regime: $\eta<\kappa$.}
Inspect Algorithm~\ref{alg:samplev2}. The dominant work is the call
$\hat Q_x\leftarrow \textsc{estimateQ}(x,\rho(\eta),\cdot)$, where $\rho(\eta)=(6\eta/M)^{1/3}$.
By Algorithm~\ref{alg:estimateq}, this call performs
\[
N(\rho(\eta))=\Theta\big(\rho(\eta)^{-2}\log(\cdot)\big)
\]
samples per action, i.e.\ $\Theta(K\rho(\eta)^{-2})$ oracle calls at the current state.
Each such oracle call also triggers \emph{one} recursive call to \textsc{SampleV2} on the next state with tolerance
$\rho(\eta)/\sqrt{\gamma}$.
Thus, ignoring constants and polylogs,
\begin{equation}
\label{eq:T-main-term}
\text{cost of }\textsc{estimateQ}(x,\rho(\eta),\cdot)
\le
c_1\,K\rho(\eta)^{-2}\Big(1+T(\rho(\eta)/\sqrt{\gamma})\Big),
\end{equation}
for some constant $c_1$.

In addition, the linear and quadratic correction blocks in Algorithm~\ref{alg:samplev2}
use only a \emph{constant} number of extra oracle calls and a constant number of extra recursive calls at tolerance
$\eta/\sqrt{\gamma}$ (one call for the linear term and a constant number for the quadratic term).
Therefore there exists a constant $c_2$ such that the fine-regime cost satisfies
\begin{equation}
\label{eq:T-recurrence-full}
T(\eta)
\le
c_1\,K\rho(\eta)^{-2}\Big(1+T(\rho(\eta)/\sqrt{\gamma})\Big)
+
c_2\Big(1+T(\eta/\sqrt{\gamma})\Big)
+
c_2.
\end{equation}
Now use $\rho(\eta)=\Theta(\eta^{1/3})$, so $\rho(\eta)^{-2}=\Theta(\eta^{-2/3})$.
Also define the constant
\[
c:=\frac{1}{\sqrt{\gamma}}\Big(\frac{6}{M}\Big)^{1/3},
\qquad\text{so that}\qquad
\frac{\rho(\eta)}{\sqrt{\gamma}} = c\eta^{1/3}.
\]
Absorbing constants and polylogarithms into $\tilde O(\cdot)$, \eqref{eq:T-recurrence-full} implies the simplified cascade recurrence
\begin{equation}
\label{eq:T-cascade}
T(\eta)
\le
\tilde O\big(\eta^{-2/3}\big)\cdot \Big(1+T(c\eta^{1/3})\Big)
+
\tilde O\big(1+T(\eta/\sqrt{\gamma})\big).
\end{equation}
The second term does not change the exponent because it appears with only constant multiplicity; it can be absorbed into constants
once we know $T(\eta)$ grows polynomially as $\eta\downarrow 0$.

\paragraph{V. Solve the cascade recurrence: $T(\eta)=\tilde O(\eta^{-1})$.}
We now show that \eqref{eq:T-cascade} yields
\begin{equation}
\label{eq:T-solution}
T(\eta)=\tilde O(\eta^{-1}).
\end{equation}

To see the exponent, first ignore the lower-order $\tilde O(1+T(\eta/\sqrt{\gamma}))$ term and focus on the dominant cascade:
\begin{equation}
\label{eq:T-cascade-dominant}
T(\eta)\ \lesssim\ A\eta^{-2/3}\,T(c\eta^{1/3}) + A\eta^{-2/3},
\end{equation}
for some constant $A=\tilde O(K)$.
Define the tolerance sequence $\eta_0:=\eta$ and $\eta_{t+1}:=c\eta_t^{1/3}$.
As in \citet{grill2019planning} and in the proof of \cref{thm:beta-tradeoff}, this sequence reaches a constant in
$O(\log\log(1/\eta))$ steps because $1/3<1$.
Let $L$ be the smallest index with $\eta_L\ge \eta_{\mathrm{base}}$ for a fixed constant $\eta_{\mathrm{base}}\in(0,\kappa]$.
Then $T(\eta_L)=\tilde O(1)$.

Unrolling \eqref{eq:T-cascade-dominant} along the sequence gives
\[
T(\eta_0)
\ \lesssim\
A^L\Big(\prod_{t=0}^{L-1}\eta_t^{-2/3}\Big)\,T(\eta_L)
+
\sum_{j=0}^{L-1} A^{j+1}\Big(\prod_{t=0}^{j}\eta_t^{-2/3}\Big).
\]
The sum is dominated by its final term up to a multiplicative polylog factor (because the products grow rapidly as $\eta_t$ decreases backward),
so it suffices to understand the product $\prod_{t=0}^{L-1}\eta_t^{-2/3}$.

Using the explicit form
\[
\eta_t
=
c^{\,1+1/3+\cdots+1/3^{t-1}}\ \eta^{1/3^{t}}
=
c^{\frac{1-(1/3)^t}{1-1/3}}\ \eta^{1/3^{t}},
\]
we have
\[
\eta_t^{-2/3}
=
c^{-\Theta(1)}\ \eta^{-2/3^{t+1}}.
\]
Therefore
\[
\prod_{t=0}^{L-1}\eta_t^{-2/3}
=
c^{-\Theta(L)}\ \eta^{-\sum_{t=0}^{L-1}2/3^{t+1}}
=
c^{-\Theta(L)}\ \eta^{-(1-(1/3)^L)}.
\]
Since $(1/3)^L$ is negligible and $L=O(\log\log(1/\eta))$, the factor $c^{\Theta(L)}A^L$ is polylogarithmic in $1/\eta$,
and the exponent of $\eta$ is $1$ up to a vanishing correction.
This yields \eqref{eq:T-solution}, i.e.\ $T(\eta)=\tilde O(\eta^{-1})$.

(If desired, one can incorporate the suppressed $\tilde O(1+T(\eta/\sqrt{\gamma}))$ term by induction, noting that
$\eta/\sqrt{\gamma}$ differs from $\eta$ only by a constant factor and thus preserves the same $\eta^{-1}$ scaling.)

\paragraph{VI. Total oracle calls of \textsc{SecondOrderSmoothCruiser}.}
Algorithm~\ref{alg:top} performs a single call to $\textsc{estimateQ}(s,\varepsilon,\delta/2)$ and then applies $F$.
The application of $F$ is computational (no oracle calls), so the oracle complexity is exactly that of \textsc{estimateQ}.

By Algorithm~\ref{alg:estimateq}, \textsc{estimateQ} performs $K\,N(\varepsilon,\delta/2)$ oracle calls at $(s,a)$,
where $N(\varepsilon,\delta/2)=\Theta(\varepsilon^{-2}\log(2K/\delta))$.
Each oracle call also triggers one call to \textsc{SampleV2} at tolerance $\varepsilon/\sqrt{\gamma}$.
Hence the total oracle calls satisfy
\begin{align*}
n(\varepsilon,\delta)
&\le
K\,N(\varepsilon,\delta/2)\cdot \Big(1+T(\varepsilon/\sqrt{\gamma})\Big)\\
&=
\tilde O\big(\varepsilon^{-2}\big)\cdot \Big(1+\tilde O\big((\varepsilon/\sqrt{\gamma})^{-1}\big)\Big)
\qquad\text{(by \eqref{eq:T-solution})}\\
&=
\tilde O\big(\varepsilon^{-2}\big)\cdot \tilde O\big(\varepsilon^{-1}\big)
=
\tilde O(\varepsilon^{-3}).
\end{align*}
All constants depend only on $(K,\gamma,\lambda,\tau,\mu_{\min},\|C\|_\infty)$ through $B$, $M$ and the fixed multiplicative
costs in the algorithms, completing the proof.
\end{proof}

\end{document}